\documentclass[10pt]{article}

\usepackage[papersize={8.5in,11in},top=0.5in,left=1in,right=1in,bottom=1in]{geometry}
\usepackage{palatino}
\usepackage{microtype}
\usepackage{graphicx}
\usepackage{subfigure}
\usepackage{booktabs} 
\usepackage{hyperref}
\usepackage{authblk}

\usepackage{amsmath}
\usepackage{amssymb}
\usepackage{algorithmic}
\usepackage{mathtools}
\usepackage{amsthm,enumitem}
\usepackage{mysymbol}
\usepackage{bm}
\usepackage[round]{natbib}
\usepackage{algorithm}
\usepackage{xcolor}

\usepackage{amsmath}
\usepackage{amssymb}
\usepackage{mathtools}
\usepackage{amsthm,enumitem}

\usepackage[capitalize,noabbrev]{cleveref}

\usepackage{amsthm}
\newtheorem{remark}{Remark}
\newtheorem{theorem}{Theorem}
\newtheorem{definition}{Definition}
\newtheorem{proposition}{Proposition}
\newtheorem{lemma}{Lemma}

\newtheorem{corollary}{Corollary}

\usepackage{algorithm}
\usepackage{algorithmic}

\newcommand{\E}{\mathbb{E}}  

\newcommand{\R}{\mathbb{R}}

\newcommand{\cI}{\mathcal{I}} 
\newcommand{\Tset}{\mathsf{T}} 
\newcommand{\Fset}{\mathsf{F}} 
\newcommand{\phase}{\mathsf{phase}}
\newcommand{\class}{\mathsf{cls}}
\newcommand{\tauF}{\tau_{\Fset}}
\newcommand{\epsb}{\varepsilon_{\mathrm{b}}}
\newcommand{\Pihat}{\widehat{\Pi}}
\newcommand{\bhat}{\widehat{b}}

\title{Persistent-Transient Policy Evaluation for Markov Chains via Minimal Peripheral Quotients}
\author[]{Yang Xu}
\author[]{Vaneet Aggarwal}

\affil{Purdue University, USA 47907}

\begin{document}

\date{}
\maketitle 

\begin{abstract}
We study fixed-policy evaluation for finite Markov chains that may be reducible and periodic. Classical evaluation methods with gain and bias decomposition are not always diagnostic: the gain records only invariant Ces\`aro averages, while persistent phase-dependent behavior is absorbed into the bias together with genuinely transient effects. We identify the real peripheral invariant subspace $\mathcal{K}(P)$ of the transition matrix $P$ as the source of this ambiguity. Quotienting by $\mathcal{K}(P)$ is the minimal exact quotient that removes all non-decaying modes and makes the remaining dynamics strictly stable. After choosing a gauge projection $\Pi$ with kernel $\mathcal{K}(P)$, the reward admits a unique decomposition $r = g_\Pi^\star + (I-P)v_\Pi^\star$, where $g_\Pi^\star$ is a persistent regime profile and $v_\Pi^\star$ is a gauge-fixed transient component. An exact comparison with classical normalized gain and bias shows that the new pair reallocates the same information so that all persistent modes are represented in $g_\Pi^\star$ and $v_\Pi^\star$ is transient. This decomposition reconstructs finite-horizon returns, recovers statewise average reward, admits a transient-cost interpretation, and yields a stable estimator under a generative model.
\end{abstract}

\section{Introduction}

Average-reward dynamic programming and reinforcement learning evaluate sequential decisions without discounting, and classical theory uses Poisson equations, gain vectors, and bias functions to describe long-run performance \cite{puterman2014markov,bertsekas2012dynamic,meyer1975role,schweitzer1978foolproof}. Recent work has extended planning and learning guarantees beyond the unichain setting and into weakly communicating or multichain regimes \cite{zurek2024span,lee2025optimal,zurekchen2025faster}. We focus on the evaluation problem after a stationary policy has already been fixed. The resulting object is a finite-state Markov reward process with transition matrix $P$ and reward vector $r$. The central question is: what should summarize the performance of this policy from each initial state when the induced chain may be reducible and periodic?

A useful answer should separate three roles. It should recover the statewise long-term average reward, since different initial states may reach different recurrent classes. It should then describe the persistent regime that the chain eventually exhibits, including cyclic phase structure that repeats forever. It should also describe the transient cost paid before that persistent regime is reached. These roles are distinct in simple examples: a periodic chain can repeat a nonconstant reward pattern forever, a multichain model requires statewise evaluation because recurrent outcomes depend on the start state, and a gridworld random walk can make two policies with the same eventual average behave very differently over finite horizons. Policy evaluation beyond ergodicity should therefore distinguish what persists forever from what is paid only on the way there.

The classical language for this problem for the Markov chain $P$ with reward function $r$ is the gain-bias Poisson equation in the form of $r = \rho + (I-P)h$, where $\rho$ is the gain vector and $h$ is the bias function \cite{puterman2014markov,bertsekas2012dynamic,meyer1975role}. This formalism is already valid for fixed-policy evaluation in general finite chains, and recent multichain fixed-point methods also build on gain-bias structure \cite{zurekchen2025faster}. However, the gain vector stores only invariant long-run information. The classical gain is the Ces\`aro average $\rho=P^\infty r$, and therefore satisfies $P\rho=\rho$. On each closed irreducible recurrent class, every such invariant vector is constant, so the classical gain collapses cyclic phases to their time average. It can record the average of a periodic reward pattern, but not the phase-resolved pattern itself. Persistent behavior that changes from step to step, such as an alternating reward on a cycle, lasts forever but is not invariant. The only remaining place for this behavior in the classical pair is the bias $h$.

This creates the entanglement issue. In an ideal diagnostic decomposition, the long-run component would contain all behavior that persists, while the transient component would contain only effects that fade away. Classical gain and bias do not satisfy this principle once periodic persistent structure appears: the invariant gain keeps only the Ces\`aro part, and the non-invariant persistent remainder is absorbed into the bias. The resulting issue is not that returns cannot be evaluated, but that the classical decomposition does not tell us cleanly what caused them. The practical issue is therefore not loss of evaluability but loss of diagnosability. A practitioner needs to know whether poor finite-horizon behavior is caused by the eventual regime or by the path to that regime, because these imply different interventions. A purely periodic chain makes the point sharp. In such a chain, the process starts inside its recurrent regime and cycles forever. Nevertheless, the invariant gain collapses the phase pattern into an average, so the remaining oscillation must be represented in the bias. Thus a large classical bias can arise even when there is no genuine transient burden to diagnose. 

Our answer is to organize evaluation by dynamical persistence. Let $\mathcal{K}(P)$ be the real peripheral invariant subspace of $P$, containing exactly the non-decaying modes of the chain, which includes the invariant recurrent-class directions as well as periodic phase directions. The quotient by $\mathcal{K}(P)$ is not a design choice. Dynamically, it is the smallest invariant subspace that must be removed before strict contraction is possible. Semantically, it is exactly the collection of modes whose effects remain visible at arbitrarily long horizons and therefore should not be called transient. We prove this minimality statement and then use the quotient to define a unique decomposition $r = g_\Pi^\star + (I-P)v_\Pi^\star$ after a gauge projection $\Pi$ with $\ker(\Pi)=\mathcal{K}(P)$ is fixed. The vector $g_\Pi^\star$ is the persistent regime profile: its invariant part recovers the classical statewise average reward, while its orbit $\{P^t g_\Pi^\star\}_{t\ge 0}$ records recurrent-class and phase behavior that the invariant gain discards. The vector $v_\Pi^\star$ is the gauge-fixed transient component after all non-decaying modes have been removed. Under the anchor gauge used in this paper, it has an explicit transient-to-regime cost interpretation. A key feature of this viewpoint is that it reorganizes the same Poisson information as classical approaches into two parts with different meanings. We prove that the invariant part of $g_\Pi^\star$ is exactly the classical gain, and that the classical normalized bias equals the new transient component plus a unique non-invariant peripheral correction. Thus the decomposition removes the entanglement by assigning all persistent behavior to $g_\Pi^\star$ and leaving $v_\Pi^\star$ as a genuinely transient object. In aperiodic multichain models, where no non-invariant peripheral modes exist, the distinction disappears and the framework reduces to the classical picture up to gauge. The structural viewpoint also has an algorithmic payoff. Because the peripheral quotient restores contraction, it gives a stable target for sample-based estimation under a generative model. Our estimator learns recurrent classes, cyclic phases, anchors, and phase-offset absorption weights; constructs an anchor gauge; solves the projected quotient fixed-point equation for $v_\Pi^\star$; and reconstructs $g_\Pi^\star$ from anchor residuals. The output recovers average reward, exposes persistent regime behavior, and quantifies transient-to-regime cost. In this sense, the contribution is not a new way to compute returns. It is a new way to attribute them. Notably, a detailed comparison of our work with existing literature is given in Appendix~\ref{app:related_work}.

\paragraph{Contributions.}
We introduce a policy-evaluation framework for general finite Markov chains that removes the classical entanglement between persistent non-invariant behavior and transient effects. 
\begin{itemize}[leftmargin=*,itemsep=1pt,topsep=2pt,parsep=0pt]
    \item We identify $\mathcal{K}(P)$ as the minimal real $P$-invariant subspace whose quotient dynamics are strictly contractive.
    \item We define the gauge-fixed decomposition $r=g_\Pi^\star+(I-P)v_\Pi^\star$ into a persistent regime profile and a transient component.
    \item We prove an exact comparison with classical normalized gain and bias, showing that the pair reorganizes rather than augments Poisson information.
    \item We show that the decomposition reconstructs finite-horizon returns, recovers long-run average reward, yields an anchor-gauge transient-cost interpretation, and can be estimated from generative-model samples.
\end{itemize}

\section{Formulation}\label{sec:formulation}

We fix a stationary policy in a finite Markov decision process and evaluate the induced Markov reward process. The state space is $\mathcal S=\{1,2,\dots,n\}$, the transition matrix is $P(s'| s)=\sum_{a\in\mathcal A}\pi(a| s)\mathcal P(s'| s,a)$, and the one step reward is $r(s)=\sum_{a\in\mathcal A}\pi(a| s)r_{\mathrm{mdp}}(s,a)$. Thus every object below depends only on the pair $(P,r)$. For an initial state $s$ and a horizon $T\ge 1$, let $J_T(s)=\mathbb E_s[\sum_{t=0}^{T-1}r(X_t)]$, or in vector form $J_T=\sum_{t=0}^{T-1}P^t r$. The statewise long run average reward profile is
\begin{equation}\label{eq:gdef}
g(s) \coloneqq \lim_{T\to\infty}\frac{1}{T}J_T(s),
\end{equation}
which exists for every finite Markov chain \cite{puterman2014markov,bertsekas2012dynamic}. The question is how to represent the same fixed policy in a way that distinguishes persistent regime behavior from transient cost. This distinction is about representation rather than about existence of returns. The sequence $J_T$ is always defined, and the average reward profile $g$ is always defined in the finite state setting. What is not automatic is a decomposition whose two pieces answer two different diagnostic questions: what regime behavior remains visible at long horizons, and what finite accumulated effect is caused by the path before that regime is reached.

\subsection{Classical normalized gain and bias}

The classical representation uses the Ces\`aro limiting projection $P^\infty=\lim_{T\to\infty}T^{-1}\sum_{t=0}^{T-1}P^t$, whose existence is standard for finite Markov chains \cite{meyer1975role,puterman2014markov}, and the gain $\rho=P^\infty r$. A normalized bias $h$ is any solution of
\begin{equation}\label{eq:poisson}
r=\rho+(I-P)h,\qquad P^\infty h=0.
\end{equation}
This pair is well defined in finite state models and is the standard language of average reward evaluation. Its limitation is semantic rather than algebraic. The gain $\rho$ is invariant, so it can only record quantities that remain unchanged after one transition. On a periodic recurrent class, however, the policy may exhibit a reward pattern that persists forever while changing with phase. Such behavior is not transient, but it is also not invariant. The next standard lemma explains why the invariant gain cannot store it \cite{meyer1975role,puterman2014markov}.

\begin{lemma}\label{lem:invariant_constant_on_class}
Let $F$ be a finite closed irreducible class of $P$. If $x$ satisfies $Px=x$ on $F$, then $x$ is constant on $F$.
\end{lemma}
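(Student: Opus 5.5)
The plan is the standard maximum-principle argument. First I will make precise what ``$Px=x$ on $F$'' means. Since $F$ is closed, $P(s'\mid s)=0$ for every $s\in F$ and $s'\notin F$. Hence for $s\in F$ we have $(Px)(s)=\sum_{s'\in F}P(s'\mid s)x(s')$, and the hypothesis involves only the restriction $x|_F$ and the stochastic submatrix $P_F=(P(s'\mid s))_{s,s'\in F}$. Its rows sum to one by closedness. So it suffices to show that a harmonic vector of the irreducible stochastic matrix $P_F$ is constant. The vector $x$ may be taken real; if complex vectors are allowed, apply the argument to the real and imaginary parts separately, since $P$ is real.

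Next I will apply the maximum principle. Let $M=\max_{s\in F}x(s)$, which exists because $F$ is finite, and let $A=\{s\in F: x(s)=M\}\neq\emptyset$. For $s\in A$, the identity $M=x(s)=\sum_{s'\in F}P(s'\mid s)x(s')$ expresses $M$ as a convex combination of values that are each at most $M$. Therefore every $s'$ with $P(s'\mid s)>0$ satisfies $x(s')=M$, so $A$ is closed under one-step transitions within $F$.

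The conclusion comes from irreducibility. For any $u\in F$, pick $s\in A$. Irreducibility of $F$ gives a path $s=s_0,s_1,\dots,s_k=u$ with $P(s_{i+1}\mid s_i)>0$. Applying the previous step along the path by induction gives $u\in A$. Hence $A=F$ and $x\equiv M$ on $F$.

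The only step needing care is the convex-combination step. It needs rows of $P_F$ to sum exactly to one, which is precisely where closedness of $F$ enters; without closedness, mass leaking out of $F$ would break both the reduction to $P_F$ and the equality case. Periodicity plays no role, because the argument uses only reachability. An alternative route uses the Perron--Frobenius fact that eigenvalue $1$ of an irreducible stochastic matrix is simple with eigenvector $\mathbf 1$, but the direct maximum-principle proof is self-contained and is the one I would write.
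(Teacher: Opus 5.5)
Your proof is correct. Closedness reduces the hypothesis to harmonicity of $x|_F$ for the stochastic submatrix $P_F$. The maximum principle then shows that the argmax set is closed under one-step transitions, and irreducibility carries it to all of $F$. Your remark on complex $x$ covers that case as well.

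The paper gives no proof of this lemma. It invokes the result as standard, citing Meyer and Puterman. The closest argument in the paper is the Perron--Frobenius fact used in the proof of Lemma~\ref{lem:kernel_equals_peripheral}: each irreducible block $P_i$ has eigenvalue $1$ with geometric multiplicity one. Since $P_F\mathbf 1=\mathbf 1$, that fact gives the lemma at once.

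Your maximum-principle route is self-contained and uses only reachability. This makes it transparent why periodicity plays no role and exactly where closedness is needed. The Perron--Frobenius route is a one-liner once that theorem is granted, but it imports spectral machinery. For stochastic matrices, the geometric simplicity of the eigenvalue $1$ is usually established by essentially your argument anyway.
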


Lemma~\ref{lem:invariant_constant_on_class} explains why the classical gain collapses every periodic recurrent class to a single number. A nonconstant reward pattern on a deterministic cycle is not transient, but it cannot appear in $\rho$. It must therefore be stored in $h$ together with the cost of reaching the recurrent regime. This is the entanglement that the rest of the paper removes. In a unichain aperiodic model this issue is hidden, because the only persistent directions are invariant. In a periodic or multichain model, however, the set of persistent directions is larger than the invariant subspace. The goal is therefore to keep the classical average reward as one output, but to refine it with the additional persistent coordinates that are visible along the actual trajectory.

\section{Minimal peripheral quotient and gauge-fixed decomposition}\label{Quotient}

This section identifies the part of the chain that should be removed before a transient fixed point is solved. The issue is not only how to stabilize an iteration. A direction that does not decay under repeated application of $P$ should not be interpreted as a finite transient correction, because its effect remains visible at arbitrarily long horizons. The persistent subspace below collects exactly these directions. Once it is quotiented out, the remaining dynamics are strictly contractive and can support a unique transient representative.

\subsection{The peripheral invariant subspace}

Let $P$ act on $\mathbb C^n$ and let $\mathcal K_{\mathbb C}(P)$ be the generalized invariant subspace spanned by eigenvalues with modulus one. The real peripheral invariant subspace is $\mathcal K(P)\coloneqq\mathcal K_{\mathbb C}(P)\cap\mathbb R^n$. For a finite Markov chain, this space consists exactly of the modes that do not decay under repeated application of $P$. It includes the invariant recurrent class directions and, when a class is periodic, the phase directions that oscillate forever. This is the subspace that should be treated as persistent.

\subsection{The minimal contractive quotient}

For any real $P$ invariant subspace $K$, write $\bar P_K$ for the induced operator on $\mathbb R^n/K$. We call $K$ an exact stabilizing quotient if $\rho(\bar P_K)<1$, where $\rho(\cdot)$ denotes spectral radius.

\begin{theorem}[Minimal peripheral quotient]\label{thm:minimal_quotient}
For a real $P$ invariant subspace $K$, the following are equivalent: $\rho(\bar P_K)<1$; $\mathcal K(P)\subseteq K$; and every non decaying mode of $P$ is zero in $\mathbb R^n/K$. Hence $\mathcal K(P)$ is the unique minimal real invariant subspace whose quotient dynamics are strictly stable.
\end{theorem}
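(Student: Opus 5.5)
The argument is linear algebra on the spectral decomposition of $P$. Write $\mathbb C^n=\mathcal K_{\mathbb C}(P)\oplus\mathcal D_{\mathbb C}(P)$, where $\mathcal D_{\mathbb C}(P)$ is the sum of generalized eigenspaces for eigenvalues of modulus $<1$. Stochasticity gives $\rho(P)=1$, so every eigenvalue is either peripheral or strictly inside the unit disk. Both summands are invariant under the real matrix $P$ and closed under complex conjugation, because the spectrum is conjugation-symmetric. Hence they are complexifications of real subspaces: $\mathcal K(P)$ and $\mathcal D(P)=\mathcal D_{\mathbb C}(P)\cap\mathbb R^n$, with $\mathbb R^n=\mathcal K(P)\oplus\mathcal D(P)$.

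I would read ``non-decaying mode'' as a vector of $\mathcal K(P)$. The justification is that $P^t x\to 0$ iff $x\in\mathcal D(P)$. Indeed, $P$ restricted to $\mathcal K(P)$ is invertible with all eigenvalues of modulus one, so it has no decaying nonzero orbits. (For a stochastic matrix the peripheral eigenvalues are semisimple since $\|P^t\|_\infty=1$, though only invertibility is needed here.) Under this reading the equivalence of the second and third conditions is a tautology: $\mathcal K(P)\subseteq K$ iff every $x\in\mathcal K(P)$ maps to $0$ in $\mathbb R^n/K$. The substance is therefore (i)$\Leftrightarrow$(ii).

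For (ii)$\Rightarrow$(i), take a complex eigenpair $(\lambda,\bar z)$ of $\bar P_K$ acting on the complexified quotient $\mathbb C^n/K_{\mathbb C}$. I will show $|\lambda|<1$. The spectrum of the quotient is contained in $\sigma(P)$, since the characteristic polynomial of $P$ factors as $\chi_{P|_K}\chi_{\bar P_K}$. So $\sigma(\bar P_K)$ is the multiset difference $\sigma(P)\setminus\sigma(P|_K)$, counted with algebraic multiplicity. If $\mathcal K(P)\subseteq K$, then $P|_K$ already carries every peripheral eigenvalue with its full algebraic multiplicity, because $\mathcal K_{\mathbb C}(P)\subseteq K_{\mathbb C}$ and the generalized eigenspaces of $P|_K$ equal those of $P$ intersected with $K_{\mathbb C}$. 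Consequently no peripheral eigenvalue remains in $\chi_{\bar P_K}$, and $\rho(\bar P_K)<1$.

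For (i)$\Rightarrow$(ii), suppose $\rho(\bar P_K)<1$ and let $x\in\mathcal K(P)$. Since $\mathcal K(P)$ is invariant, its image $\bar x$ lies in the image of $\mathcal K(P)$ in the quotient, which is $\bar P_K$-invariant. The restriction of $\bar P_K$ to that image is a quotient of $P|_{\mathcal K(P)}$, so all its eigenvalues have modulus one. They also have modulus $<1$ by (i), so the image subspace is $\{0\}$, i.e. $x\in K$. This step is the one that requires care. To make it watertight I would phrase it with characteristic polynomials again: the image is isomorphic to $\mathcal K(P)/(\mathcal K(P)\cap K)$, whose characteristic polynomial divides $\chi_{P|_{\mathcal K(P)}}$ and hence has only unimodular roots; it also divides $\chi_{\bar P_K}$, which has none. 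So its degree is zero.

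Minimality and uniqueness then follow immediately. $\mathcal K(P)$ is itself a real invariant subspace satisfying (ii), hence (i). By (i)$\Rightarrow$(ii), any stabilizing $K$ contains it, so $\mathcal K(P)$ is the least element among stabilizing subspaces and is therefore unique. The main obstacle is the bookkeeping: ensuring the real/complex passage is legitimate (conjugation-closure) and that spectra of restrictions and quotients behave multiplicatively. The factorization of the characteristic polynomial in a basis adapted to $K$ handles both.
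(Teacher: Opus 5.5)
Your proof is correct and follows essentially the same route as the paper: (ii)$\Leftrightarrow$(iii) holds by the definition of $\mathcal K(P)$, and (ii)$\Rightarrow$(i) holds because the quotient only sees the strictly stable part of the spectrum. The paper phrases this via a complement $L$ with $\mathbb R^n=\mathcal K(P)\oplus L$ and $\mathbb R^n/K$ a quotient of $L$, where you use the factorization $\chi_P=\chi_{P|_K}\chi_{\bar P_K}$; for (i)$\Rightarrow$(ii), your argument through the invariant image $\mathcal K(P)/(\mathcal K(P)\cap K)$ makes rigorous the paper's brief claim that a surviving peripheral mode forces a unimodular eigenvalue in the quotient.
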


The theorem has two consequences. First, it rules out smaller quotients. If a phase direction or recurrent class direction survives in the quotient, then the quotient still contains a mode that does not decay, so no strict contraction can hold. Second, it rules out unnecessary larger quotients. Removing more than $\mathcal K(P)$ would also remove directions that are transient in the dynamical sense. Thus the same subspace is forced by both sides of the paper: it is the smallest object needed for stable fixed point analysis, and it is exactly the object that should be treated as persistent in evaluation.

\begin{theorem}[Quotient contraction]\label{thm:quotient_contraction}
On $\mathbb R^n/\mathcal K(P)$ there exists a norm $\|\cdot\|_{\mathrm q}$ and a constant $\gamma<1$ such that $\|\bar P x\|_{\mathrm q}\le \gamma\|x\|_{\mathrm q}$ for every quotient class $x$. Equivalently, the pullback semi-norm on $\mathbb R^n$ satisfies $\|Pv\|_{\mathrm q}\le \gamma\|v\|_{\mathrm q}$ for every $v\in\mathbb R^n$.
\end{theorem}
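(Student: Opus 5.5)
The plan is to combine Theorem~\ref{thm:minimal_quotient} with the standard fact that an operator of spectral radius below one admits an adapted norm in which it is a strict contraction.

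\textbf{Step 1: the quotient map is well defined.} The space $\mathcal K(P)$ is $P$-invariant. Indeed, $\mathcal K_{\mathbb C}(P)$ is a sum of generalized eigenspaces, hence $P$-invariant, and $P$ is real, so $P(\mathcal K_{\mathbb C}(P)\cap\mathbb R^n)\subseteq \mathcal K_{\mathbb C}(P)\cap\mathbb R^n$. Therefore $\bar P\coloneqq\bar P_{\mathcal K(P)}$, given by $\bar P[v]=[Pv]$, is a well-defined linear map on the finite-dimensional space $V\coloneqq\mathbb R^n/\mathcal K(P)$. Applying Theorem~\ref{thm:minimal_quotient} with $K=\mathcal K(P)$, the inclusion $\mathcal K(P)\subseteq K$ holds trivially, so $\rho(\bar P)<1$.

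\textbf{Step 2: construct the adapted norm.} Fix $\gamma$ with $\rho(\bar P)<\gamma<1$ and any norm $\|\cdot\|_0$ on $V$. Gelfand's formula $\|\bar P^t\|_0^{1/t}\to\rho(\bar P)$ gives a constant $C$ with $\|\bar P^t\|_0\le C\gamma_1^t$ for all $t$, where $\rho(\bar P)<\gamma_1<\gamma$. This makes the series
\[
\|x\|_{\mathrm q}\coloneqq \sum_{t\ge 0}\gamma^{-t}\|\bar P^t x\|_0
\]
converge. It is a norm: it is a convergent sum of seminorms, and it dominates $\|x\|_0$ through the $t=0$ term, so it is positive definite. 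Reindexing the sum gives
\[
\|\bar P x\|_{\mathrm q}=\sum_{t\ge0}\gamma^{-t}\|\bar P^{t+1}x\|_0=\gamma\sum_{s\ge1}\gamma^{-s}\|\bar P^{s}x\|_0\le\gamma\|x\|_{\mathrm q}.
\]
A finite truncation of the sum also works. Alternatively, one could take a real Jordan form of $\bar P$ with off-diagonal entries scaled by a small $\varepsilon$, but the series construction avoids complexification.

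\textbf{Step 3: pull back to $\mathbb R^n$.} Define $\|v\|_{\mathrm q}\coloneqq\|[v]\|_{\mathrm q}$. This is a seminorm whose kernel is exactly $\mathcal K(P)$. Since $[Pv]=\bar P[v]$, we get $\|Pv\|_{\mathrm q}\le\gamma\|v\|_{\mathrm q}$. Conversely, the quotient inequality follows from the pullback one because every class has a representative. This establishes the stated equivalence.

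\textbf{Main obstacle.} The only substantive input is $\rho(\bar P)<1$, which Theorem~\ref{thm:minimal_quotient} supplies. If I instead had to verify it directly, I would show that the spectrum of $\bar P$, complexified, equals the spectrum of $P$ with the peripheral generalized eigenspaces removed. To do this, I would decompose $\mathbb C^n=\mathcal K_{\mathbb C}(P)\oplus\mathcal R$, where $\mathcal R$ is the sum of the generalized eigenspaces for eigenvalues of modulus less than one. I would then note that both summands are closed under complex conjugation, so $\mathcal K(P)$ complexifies to $\mathcal K_{\mathbb C}(P)$, and that the complexified quotient is isomorphic to $\mathcal R$ with $P|_{\mathcal R}$ acting on it. The remaining point is the Perron–Frobenius bound $\rho(P)\le1$ for a stochastic $P$, which guarantees that no eigenvalue of modulus greater than one survives in $\mathcal R$.
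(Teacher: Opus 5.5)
Your proof is correct. It shares the paper's skeleton: well-definedness of $\bar P$, then $\rho(\bar P)<1$, then an adapted norm, then the pullback. It differs in two places.

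First, you obtain $\rho(\bar P)<1$ by invoking Theorem~\ref{thm:minimal_quotient} with $K=\mathcal K(P)$. This is legitimate, because that theorem's proof does not use the present one. The paper instead re-derives the bound inside this proof from a complex Jordan form of $P$, discarding the unit-modulus blocks. Your fallback sketch via $\mathbb C^n=\mathcal K_{\mathbb C}(P)\oplus\mathcal R$ is the same argument. Your observation that $\mathcal K_{\mathbb C}(P)$ is closed under conjugation, and hence is the complexification of $\mathcal K(P)$, is exactly what justifies the paper's somewhat loose ``realification'' step.

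Second, and more substantively, you build the adapted norm differently. You use the weighted orbit sum $\sum_{t\ge0}\gamma^{-t}\|\bar P^t x\|_0$ over an arbitrary base norm, controlled by Gelfand's formula. The paper fixes coordinates, sets $B=\gamma^{-1}\bar P$, and takes the quadratic form of the Lyapunov solution $H=\sum_t (B^\top)^t B^t$. In coordinates this is $\|x\|_{\mathrm q}^2=\sum_{t\ge0}\gamma^{-2t}\|\bar P^t x\|_2^2$. The two constructions are thus the $\ell^1$ and $\ell^2$ versions of the same idea, and the telescoping in your Step~2 mirrors the identity $H-B^\top HB=I$.

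Your version is coordinate-free and lets you choose the base norm, for instance the quotient sup-norm, to control norm-equivalence constants. The paper's version yields a norm induced by an inner product, and this matters downstream. The martingale cross-term cancellations in Lemmas~\ref{lem:optionB_stability} and~\ref{lem:projSA_stability_template} explicitly use that $\|\cdot\|_{\mathrm q}$ is Hilbertian. Your $\ell^1$-type sum generally is not Hilbertian.

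For the theorem as stated, only a norm is required, so your proof is complete. If you want it to feed the stochastic-approximation analysis, square the summands and use a Euclidean base norm.
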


This is the computational counterpart of Theorem~\ref{thm:minimal_quotient}. After quotienting by $\mathcal K(P)$, the Bellman type map no longer has unit modulus directions left to obstruct contraction. The norm in Theorem~\ref{thm:quotient_contraction} is not meant to be a new evaluation metric for users. It is a proof device showing that the transient part is a stable fixed point problem once all persistent directions have been removed. The proofs of Theorems~\ref{thm:minimal_quotient} and~\ref{thm:quotient_contraction} are given in Appendices~\ref{proof_minimal_quotient} and~\ref{proof_quotient_contraction}.

\subsection{Gauge fixed decomposition}

The quotient fixed point identifies only an equivalence class modulo $\mathcal K(P)$. This is the analogue of fixing an additive constant in classical average reward theory, except that the nullspace is now the whole persistent subspace rather than only constants. To obtain a concrete vector with evaluation meaning, choose a projection $\Pi$ with $\ker(\Pi)=\mathcal K(P)$ and let $\operatorname{range}(\Pi)$ be the gauge slice. Define the projected Bellman map
\begin{equation}\label{eq:projected_operator}
\mathcal T_\Pi(v)\coloneqq \Pi(r+Pv).
\end{equation}

\begin{theorem}[Gauge fixed persistent transient decomposition]\label{thm:wellposed}
For every reward $r$ and every projection $\Pi$ with kernel $\mathcal K(P)$, there is a unique $v_\Pi^\star\in\operatorname{range}(\Pi)$ satisfying $v_\Pi^\star=\mathcal T_\Pi(v_\Pi^\star)$. With $g_\Pi^\star\coloneqq r-(I-P)v_\Pi^\star$, we have
\begin{align}
r&=g_\Pi^\star+(I-P)v_\Pi^\star,\quad g_\Pi^\star\in\mathcal K(P),\quad v_\Pi^\star\in\operatorname{range}(\Pi),\label{eq:decomposition}\\
v_\Pi^\star&=\Pi(r+Pv_\Pi^\star),\label{eq:proj_fixed_point}\\
g_\Pi^\star&=r-(I-P)v_\Pi^\star\in\mathcal K(P),\label{eq:gstar}\\
[v_\Pi^\star]&=[r]+\bar P[v_\Pi^\star].\label{eq:quotient_fixed_point}
\end{align}
\end{theorem}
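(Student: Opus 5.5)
The plan is to reduce the projected fixed-point equation to the quotient fixed-point equation, solve that one, and then lift back using the gauge slice. The fact that drives everything is that $\Pi$ is a projection with $\ker(\Pi)=\mathcal K(P)$. Hence $\mathbb R^n=\operatorname{range}(\Pi)\oplus\mathcal K(P)$, and the restriction of the quotient map $q:\mathbb R^n\to\mathbb R^n/\mathcal K(P)$ to $\operatorname{range}(\Pi)$ is a linear isomorphism. Moreover $\Pi x$ depends only on $[x]$ and satisfies $[\Pi x]=[x]$, since $x-\Pi x\in\ker\Pi$.

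\textbf{Existence and uniqueness.} By Theorem~\ref{thm:quotient_contraction}, $\bar P$ has spectral radius below one on the quotient. Therefore $I-\bar P$ is invertible there, and the affine map $y\mapsto [r]+\bar P y$ is a $\gamma$-contraction in $\|\cdot\|_{\mathrm q}$. By Banach's theorem it has a unique fixed point $y^\star=(I-\bar P)^{-1}[r]$. Define $v_\Pi^\star$ as the unique element of $\operatorname{range}(\Pi)$ with $[v_\Pi^\star]=y^\star$; this is (\ref{eq:quotient_fixed_point}).

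To check (\ref{eq:proj_fixed_point}), note that $\mathcal K(P)$ is $P$-invariant, so $[r+Pv]=[r]+\bar P[v]$. Hence $[\Pi(r+Pv_\Pi^\star)]=[r]+\bar P y^\star=y^\star$. Both $\Pi(r+Pv_\Pi^\star)$ and $v_\Pi^\star$ lie in $\operatorname{range}(\Pi)$ and have the same class, so by injectivity of $q$ on the slice they coincide.

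For uniqueness, take any $v\in\operatorname{range}(\Pi)$ with $v=\Pi(r+Pv)$. Applying $q$ gives $[v]=[r]+\bar P[v]$, so $[v]=y^\star$ by uniqueness of the quotient fixed point. Injectivity on the slice then gives $v=v_\Pi^\star$. Equivalently, one can say that $\mathcal T_\Pi$ restricted to the slice is conjugate via $q$ to the quotient contraction. This also gives convergence of iterates, although that is not needed here.

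\textbf{The decomposition.} Set $g_\Pi^\star=r-(I-P)v_\Pi^\star$. Then $r=g_\Pi^\star+(I-P)v_\Pi^\star$ holds by definition, and $[g_\Pi^\star]=[r]-[v_\Pi^\star]+\bar P[v_\Pi^\star]=0$ by (\ref{eq:quotient_fixed_point}). Thus $g_\Pi^\star\in\mathcal K(P)$, which gives (\ref{eq:gstar}) and completes (\ref{eq:decomposition}).

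\textbf{Main obstacle.} The only delicate point is the well-definedness of the quotient objects, namely that $\bar P$ and $\|\cdot\|_{\mathrm q}$ make sense on $\mathbb R^n/\mathcal K(P)$. This requires $\mathcal K(P)$ to be a real $P$-invariant subspace. That holds because $\mathcal K_{\mathbb C}(P)$ is a sum of generalized eigenspaces, which are $P$-invariant, and $P$ is real, so the intersection with $\mathbb R^n$ is also $P$-invariant. I will state this explicitly and otherwise rely on Theorem~\ref{thm:quotient_contraction}. If one prefers to avoid the contraction norm, invertibility of $I-\bar P$ follows directly from Theorem~\ref{thm:minimal_quotient}: the condition $\rho(\bar P)<1$ excludes the eigenvalue $1$.
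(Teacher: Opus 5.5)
Your proposal is correct and follows essentially the same route as the paper: identify $\operatorname{range}(\Pi)$ with $\mathbb R^n/\mathcal K(P)$ via the quotient map, solve the quotient affine equation using invertibility of $I-\bar P$, and lift back using $[\Pi x]=[x]$ and injectivity on the slice, for both existence and uniqueness. The only cosmetic difference is that you obtain $g_\Pi^\star\in\mathcal K(P)$ from $[g_\Pi^\star]=0$, whereas the paper uses $\Pi g_\Pi^\star=0$; the two are equivalent.
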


The theorem turns the quotient contraction into an evaluation decomposition. The vector $g_\Pi^\star$ is the part of the reward that remains in the persistent subspace, and $v_\Pi^\star$ is the transient representative selected by the gauge. The gauge is a normalization, not an additional modeling assumption. Different projections with kernel $\mathcal K(P)$ choose different representatives of the same quotient class, just as different normalizations choose different classical bias functions. The anchor gauge below is used because it gives the representative a direct hitting time interpretation.

\begin{corollary}[Recovery of average reward]\label{cor:recover_invariant_gain}
Let $g$ be the statewise average reward profile from \eqref{eq:gdef}. Then $g=P^\infty g_\Pi^\star=P^\infty r$.
\end{corollary}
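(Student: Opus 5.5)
The plan has two parts. First, identify the statewise average $g$ from \eqref{eq:gdef} with $P^\infty r$. Then show that applying $P^\infty$ to the decomposition \eqref{eq:decomposition} annihilates the transient term.

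For the first identity, write $J_T=\sum_{t=0}^{T-1}P^t r$. Then
\[
\frac{1}{T}J_T=\Bigl(\frac{1}{T}\sum_{t=0}^{T-1}P^t\Bigr)r .
\]
The Ces\`aro averages converge to $P^\infty$, which exists for finite chains as recalled in the formulation section. Passing to the limit entrywise gives $g=P^\infty r$.

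For the second identity, we use two standard properties of the Ces\`aro projection, $P^\infty P=PP^\infty=P^\infty$ and $(P^\infty)^2=P^\infty$. Both follow by multiplying the averages by $P$ and noting the telescoping error $\frac{1}{T}(P^T-I)\to 0$, which holds because the $P^T$ are stochastic and hence bounded. Consequently $P^\infty(I-P)=0$. Applying $P^\infty$ to $r=g_\Pi^\star+(I-P)v_\Pi^\star$ from Theorem~\ref{thm:wellposed} then gives
\[
P^\infty r=P^\infty g_\Pi^\star+P^\infty(I-P)v_\Pi^\star=P^\infty g_\Pi^\star .
\]
Combining this with the first step yields $g=P^\infty g_\Pi^\star=P^\infty r$.

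Notably, the membership $g_\Pi^\star\in\mathcal K(P)$ is not needed for this corollary. Only the form of the decomposition and the annihilation $P^\infty(I-P)=0$ are used. Membership would matter only if we wanted to interpret $P^\infty g_\Pi^\star$ as the invariant component of $g_\Pi^\star$, that is, its projection onto the eigenvalue-one eigenspace inside $\mathcal K(P)$. I would add that interpretation as a remark rather than use it in the proof.

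There is no serious obstacle here. The only point requiring care is justifying $P^\infty(I-P)=0$, which comes down to the bound $\|P^T\|_\infty=1$ making the telescoping term vanish.
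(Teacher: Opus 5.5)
Your proof is correct and is essentially the paper's argument. The paper applies the return identity $J_T=\sum_{t=0}^{T-1}P^t g_\Pi^\star+v_\Pi^\star-P^T v_\Pi^\star$, divides by $T$, and lets the bounded boundary term vanish. That is your identity $\frac{1}{T}\sum_{t=0}^{T-1}P^t(I-P)=\frac{1}{T}(I-P^T)\to 0$, used at finite $T$ rather than after passing to $P^\infty$. Your version also states $g=P^\infty r$ explicitly, which the paper leaves implicit, and it matches the operator-level step the paper uses in the proof of Theorem~\ref{thm:comparison_gain_bias}. You are right that $g_\Pi^\star\in\mathcal K(P)$ plays no role here.
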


Thus $g_\Pi^\star$ refines the classical average reward. Its invariant part is the usual gain, while its non invariant peripheral part records phase resolved persistent behavior.

\begin{proposition}[Reduction to the aperiodic case]\label{prop:aperiodic_reduction}
If every closed irreducible class of $P$ is aperiodic, then $\mathcal K(P)=\operatorname{range}(P^\infty)$. In this case the persistent profile carries no non invariant phase information, and the decomposition reduces to the classical multichain gain bias picture up to the chosen normalization.
\end{proposition}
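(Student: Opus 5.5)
The plan is to prove the spectral identity $\mathcal K(P)=\operatorname{range}(P^\infty)$ first and then read off the consequences for the decomposition. The argument rests on the standard spectral facts for stochastic matrices. Every eigenvalue of modulus one is semisimple, because the powers $P^t$ are uniformly bounded (row sums equal one), so no Jordan block of size larger than one can occur at $|\lambda|=1$. Moreover, the peripheral eigenvalues other than $1$ come only from periodic closed irreducible classes. By Perron--Frobenius applied to each closed class $F$, the submatrix $P_F$ has as its peripheral eigenvalues exactly the $d_F$-th roots of unity, where $d_F$ is the period of $F$.

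The first step shows that, under aperiodicity, $1$ is the only eigenvalue of $P$ on the unit circle. I will order the states as transient states $T$ followed by the closed classes. This puts $P$ in block upper-triangular form with diagonal blocks $P_{F_1},\dots,P_{F_m}$ and $P_{TT}$, so the spectrum of $P$ is the union of the spectra of these blocks. The block $P_{TT}$ is substochastic with $P_{TT}^t\to 0$, hence $\rho(P_{TT})<1$. Each $P_{F_i}$ is irreducible and aperiodic, hence primitive, so its only eigenvalue of modulus one is the simple eigenvalue $1$. Therefore $\mathcal K_{\mathbb C}(P)$ is the generalized eigenspace of the eigenvalue $1$. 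By semisimplicity this generalized eigenspace equals $\ker(I-P)$.

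The second step identifies $\ker(I-P)$ with $\operatorname{range}(P^\infty)$. If $Px=x$, then every Cesàro average of the powers fixes $x$, so $P^\infty x=x$ and $x\in\operatorname{range}(P^\infty)$. Conversely, the identity $PP^\infty=P^\infty$ holds because $P\cdot\frac1T\sum_{t<T}P^t-\frac1T\sum_{t<T}P^t=\frac1T(P^T-I)\to0$. Hence every vector in the range of $P^\infty$ is fixed by $P$. Since $P$ is real, $\ker(I-P)$ has a real basis, and intersecting with $\mathbb R^n$ gives $\mathcal K(P)=\ker(I-P)\cap\mathbb R^n=\operatorname{range}(P^\infty)$.

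For the interpretive part, Theorem~\ref{thm:wellposed} gives $g_\Pi^\star\in\mathcal K(P)=\operatorname{range}(P^\infty)$, so $Pg_\Pi^\star=g_\Pi^\star$ and $g_\Pi^\star=P^\infty g_\Pi^\star$. Corollary~\ref{cor:recover_invariant_gain} then gives $g_\Pi^\star=P^\infty r=\rho$, and the orbit $\{P^tg_\Pi^\star\}$ is constant, so no phase information remains. It follows that $(I-P)v_\Pi^\star=r-\rho=(I-P)h$, which means $v_\Pi^\star-h\in\ker(I-P)=\mathcal K(P)$. Thus $v_\Pi^\star$ and the normalized bias $h$ differ by an element of $\ker(\Pi)$. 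Concretely, $v_\Pi^\star=\Pi h$, since $\Pi v_\Pi^\star=v_\Pi^\star$ and $\Pi$ annihilates $\mathcal K(P)$. The two vectors are therefore the same quotient class under different normalizations.

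I expect the main obstacle to be the first step. It requires quoting the Perron--Frobenius peripheral-spectrum fact correctly and checking that the transient block contributes nothing on the unit circle.
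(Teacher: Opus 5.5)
Your proof is correct and follows the paper's route: put $P$ in transient/recurrent block-triangular form, apply Perron--Frobenius to each closed class to see that $1$ is the only peripheral eigenvalue, and conclude $\mathcal K(P)=\ker(I-P)$. You also make explicit steps the paper's proof leaves implicit: semisimplicity of the eigenvalue $1$, so that the generalized eigenspace equals $\ker(I-P)$; the identification $\ker(I-P)=\operatorname{range}(P^\infty)$, which the proposition actually states; and the concrete reduction $g_\Pi^\star=\rho$, $v_\Pi^\star=\Pi h$.
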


\subsection{Comparison with classical gain and bias}

A natural concern is that the new pair may add information beyond the classical gain and bias. The following theorem shows that it does not. The decomposition contains the same Poisson information, but assigns the non-invariant persistent part to $g_\Pi^\star$ rather than leaving it inside the bias.

\begin{theorem}[Comparison with classical gain and bias]\label{thm:comparison_gain_bias}
Let $(g_\Pi^\star,v_\Pi^\star)$ be the decomposition from Theorem~\ref{thm:wellposed}, and let $(\rho,h)$ be the normalized classical pair from \eqref{eq:poisson}. Then $P^\infty g_\Pi^\star=\rho$. Moreover, there is a unique $\psi_\Pi\in\mathcal K(P)\cap\ker(P^\infty)$ such that $g_\Pi^\star=\rho+(I-P)\psi_\Pi$, and the classical normalized bias satisfies $h=v_\Pi^\star+\psi_\Pi-P^\infty v_\Pi^\star$.
\end{theorem}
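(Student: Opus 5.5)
We will prove Theorem~\ref{thm:comparison_gain_bias} by working with the standard facts about the Ces\`aro projection: $P^\infty P=PP^\infty=P^\infty$, $(P^\infty)^2=P^\infty$, and $P^\infty(I-P)=0$. We will also use that $\operatorname{range}(P^\infty)=\ker(I-P)\subseteq\mathcal K(P)$, and that $\mathcal K(P)$ is $P$-invariant and $P^\infty$-invariant.

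The identity $P^\infty g_\Pi^\star=\rho$ then follows immediately. Apply $P^\infty$ to the decomposition \eqref{eq:decomposition}: the term $P^\infty(I-P)v_\Pi^\star$ vanishes, so $P^\infty g_\Pi^\star=P^\infty r=\rho$. This is consistent with Corollary~\ref{cor:recover_invariant_gain}.

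Next we construct $\psi_\Pi$. Set $w\coloneqq g_\Pi^\star-\rho$. Since $g_\Pi^\star\in\mathcal K(P)$ and $\rho\in\operatorname{range}(P^\infty)\subseteq\mathcal K(P)$, we have $w\in\mathcal K(P)$, and $P^\infty w=\rho-\rho=0$. So $w\in W\coloneqq\mathcal K(P)\cap\ker(P^\infty)$.

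The key step, and the main obstacle, is to show that $I-P$ restricted to $W$ is a bijection of $W$ onto itself. Invariance is easy: both $\mathcal K(P)$ and $\ker(P^\infty)$ are $P$-invariant, because $P^\infty P=P^\infty$. For injectivity, take $x\in W$ with $(I-P)x=0$. Then $x\in\operatorname{range}(P^\infty)$, so $x=P^\infty x=0$. Since $W$ is finite-dimensional, injectivity gives bijectivity.

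Injectivity uses a spectral fact. On $\mathcal K_{\mathbb C}(P)$, the eigenvalue $1$ is semisimple because $P$ is stochastic and power-bounded. Peripheral eigenvalues of a power-bounded matrix have no nontrivial Jordan blocks, so the only generalized eigenvectors for eigenvalue $1$ are genuine fixed points. Equivalently, $\mathbb R^n=\ker(I-P)\oplus\operatorname{range}(I-P)$ with $P^\infty$ the corresponding projection. We will cite this from \cite{meyer1975role} rather than re-derive it.

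Bijectivity yields a unique $\psi_\Pi\in W$ with $(I-P)\psi_\Pi=w$, that is, $g_\Pi^\star=\rho+(I-P)\psi_\Pi$. Uniqueness within $W$ is exactly the injectivity just shown.

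Finally we verify the bias formula. Define $\tilde h\coloneqq v_\Pi^\star+\psi_\Pi-P^\infty v_\Pi^\star$. Then
\[
\rho+(I-P)\tilde h=\rho+(I-P)\psi_\Pi+(I-P)v_\Pi^\star-(I-P)P^\infty v_\Pi^\star=g_\Pi^\star+(I-P)v_\Pi^\star=r,
\]
where the term $(I-P)P^\infty v_\Pi^\star$ vanishes because $PP^\infty=P^\infty$. Moreover, $P^\infty\tilde h=P^\infty v_\Pi^\star+0-P^\infty v_\Pi^\star=0$, using $P^\infty\psi_\Pi=0$.

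It remains to show the normalized bias is unique, so that $\tilde h=h$. If $h_1$ and $h_2$ both solve \eqref{eq:poisson}, their difference $d$ satisfies $(I-P)d=0$ and $P^\infty d=0$. Hence $d=P^\infty d=0$, which gives $h=\tilde h$ and completes the argument.
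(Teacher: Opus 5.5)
Your proof is correct and follows essentially the same route as the paper: you apply $P^\infty$ to the decomposition, place $g_\Pi^\star-\rho$ in $\mathcal K(P)\cap\ker(P^\infty)$, and invert $I-P$ there by injectivity plus finite dimensionality. For the bias formula you verify that $\tilde h$ solves the normalized Poisson equation and then invoke uniqueness, whereas the paper shows $h-v_\Pi^\star-\psi_\Pi\in\ker(I-P)$ and applies $P^\infty$; both rest on the same fact $\ker(I-P)\cap\ker(P^\infty)=\{0\}$. One minor remark: the semisimplicity citation is not needed for injectivity, since $Px=x$ already gives $P^\infty x=x$ directly from the Ces\`aro definition.
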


\begin{corollary}[Projected classical bias]\label{cor:projected_bias}
The gauge projection recovers the transient component, $\Pi h=v_\Pi^\star$. Equivalently, $g_\Pi^\star=\rho+(I-P)(I-\Pi)h$.
\end{corollary}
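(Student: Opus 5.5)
The plan is to apply $\Pi$ to the bias identity of Theorem~\ref{thm:comparison_gain_bias} and then substitute back into the decomposition of Theorem~\ref{thm:wellposed}. The identity reads
\[
h=v_\Pi^\star+\psi_\Pi-P^\infty v_\Pi^\star .
\]
Since $\Pi$ is linear, applying it gives
\[
\Pi h=\Pi v_\Pi^\star+\Pi\psi_\Pi-\Pi P^\infty v_\Pi^\star ,
\]
and each term can be handled separately:
\begin{itemize}
\item Because $v_\Pi^\star\in\operatorname{range}(\Pi)$ and $\Pi$ is idempotent, $\Pi v_\Pi^\star=v_\Pi^\star$.
\item Because $\psi_\Pi\in\mathcal K(P)=\ker(\Pi)$, $\Pi\psi_\Pi=0$.
\item For the last term I need $\operatorname{range}(P^\infty)\subseteq\mathcal K(P)$. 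Any $x$ in this range satisfies $Px=x$, since $PP^\infty=P^\infty$. So $x$ lies in the eigenspace for eigenvalue $1$, which sits inside $\mathcal K_{\mathbb C}(P)\cap\mathbb R^n=\mathcal K(P)$. Hence $P^\infty v_\Pi^\star\in\ker\Pi$ and $\Pi P^\infty v_\Pi^\star=0$.
\end{itemize}
Together these give $\Pi h=v_\Pi^\star$.

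For the equivalent form, I would first show that $(I-\Pi)h=\psi_\Pi-P^\infty v_\Pi^\star$, which follows directly from $h-\Pi h=h-v_\Pi^\star$. Then
\[
(I-P)(I-\Pi)h=(I-P)\psi_\Pi-(I-P)P^\infty v_\Pi^\star .
\]
The second term vanishes because $PP^\infty=P^\infty$. By Theorem~\ref{thm:comparison_gain_bias}, $\rho+(I-P)\psi_\Pi=g_\Pi^\star$, so $g_\Pi^\star=\rho+(I-P)(I-\Pi)h$.

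As a cross-check, the same result follows without the bias identity. Subtracting the two Poisson-type decompositions gives $g_\Pi^\star=\rho+(I-P)(h-v_\Pi^\star)$, and substituting $v_\Pi^\star=\Pi h$ recovers the formula.

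The only step needing any care is the inclusion $\operatorname{range}(P^\infty)\subseteq\mathcal K(P)$, i.e.\ that invariant vectors are peripheral. This is immediate from the eigenvalue-one argument above. Everything else is linear algebra on the identities already proved.
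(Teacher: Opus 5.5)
Your proof is correct and matches the paper's: the paper also applies $\Pi$ to $h=v_\Pi^\star+\psi_\Pi-P^\infty v_\Pi^\star$ and removes the last two terms using $\ker(\Pi)=\mathcal K(P)\supseteq\ker(I-P)$. For the second identity, the paper takes exactly your ``cross-check'' route, substituting $v_\Pi^\star=\Pi h$ into $g_\Pi^\star=r+Pv_\Pi^\star-v_\Pi^\star$ with $r=\rho+(I-P)h$; your primary route through $(I-P)\psi_\Pi=g_\Pi^\star-\rho$ is an equally valid variant.
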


Theorem~\ref{thm:comparison_gain_bias} is the formal link to the classical picture. The invariant part of $g_\Pi^\star$ is exactly the classical gain. The correction $\psi_\Pi$ is the persistent phase component that the classical gain cannot store because it is restricted to invariant vectors. The classical bias therefore contains this correction together with the transient component. The term $P^\infty v_\Pi^\star$ is only a normalization effect and disappears after applying the gauge projection. Thus a large classical bias caused by periodic phase structure is not read as transient cost in the new decomposition.

\subsection{Anchor gauge basis}\label{subsec:anchor_gauge_basis}

The paper uses an anchor gauge because it turns the abstract quotient into concrete coordinates. Decompose the chain into transient states $\mathsf T$ and closed irreducible classes $F_i$, and let $\mathsf F=\cup_iF_i$ be the recurrent set. If $F_i$ has period $d_i$, write its cyclic classes as $C_{i,0},\dots,C_{i,d_i-1}$, indexed so that one step from $C_{i,\ell}$ enters $C_{i,\ell+1 \bmod d_i}$. For a recurrent state $y\in F_i$, define $\class(y)=i$ and $\phase(y)=\ell$ when $y\in C_{i,\ell}$. Choose one anchor $a_{i,k}\in C_{i,k}$. Let $\mathcal I=\{(i,k):1\le i\le m,0\le k<d_i\}$ and $N=|\mathcal I|=\sum_i d_i\le |\mathsf F|\le n$. The basis below is a phase corrected absorption basis. Starting from a transient state, it records which closed class is eventually reached and which cyclic phase is reached after subtracting the elapsed time. This correction makes the span of the basis invariant under one step, with $P b_{i,k}=b_{i,k-1 \bmod d_i}$ on class $i$.

\begin{definition}[Phase offset absorption basis]\label{def:main_b}
For $(i,k)\in\mathcal I$, define $b_{i,k}$ by $b_{i,k}(s)=\mathbf 1\{s\in C_{i,k}\}$ on recurrent states. For $s\in\mathsf T$, let $\tau_{\mathsf F}$ be the first hitting time of the recurrent set and set
\begin{equation}\label{eq:b_def}
b_{i,k}(s)=\mathbb P_s\{\class(X_{\tau_{\mathsf F}})=i,\ \phase(X_{\tau_{\mathsf F}})-\tau_{\mathsf F}\equiv k\pmod {d_i}\}.
\end{equation}
\end{definition}

\begin{lemma}[Anchor gauge identifies the peripheral subspace]\label{lem:main_b_identify}
The functions $\{b_{i,k}\}_{(i,k)\in\mathcal I}$ form a basis of $\mathcal K(P)$. The projection
\begin{equation}\label{eq:anchor_projection}
(\Pi v)(s)=v(s)-\sum_{(i,k)\in\mathcal I}v(a_{i,k})b_{i,k}(s)
\end{equation}
has kernel $\mathcal K(P)$ and satisfies
\begin{equation}\label{eq:b_anchor_delta}
b_{i,k}(a_{j,\ell})=\mathbf 1\{(i,k)=(j,\ell)\}.
\end{equation}
\end{lemma}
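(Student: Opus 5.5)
The plan is to prove four facts in order: (i) each $b_{i,k}$ lies in $\mathcal K(P)$; (ii) the anchor identity \eqref{eq:b_anchor_delta}, which gives linear independence; (iii) every $x\in\mathcal K(P)$ is determined by its anchor values, which gives spanning; (iv) $\Pi$ is a projection with kernel $\mathcal K(P)$.

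For (i), I would first establish the shift relation $Pb_{i,k}=b_{i,k-1\bmod d_i}$ by a first-step analysis.
\begin{itemize}[leftmargin=*,itemsep=1pt,topsep=2pt,parsep=0pt]
\item On a recurrent state $s$, one step from $C_{i,\ell}$ lands in $C_{i,\ell+1}$. Hence $(Pb_{i,k})(s)=\mathbf 1\{s\in C_{i,k-1}\}$.
\item On a transient state $s$, write $(Pb_{i,k})(s)=\mathbb E_s[b_{i,k}(X_1)]$ and split on $X_1$.
\item If $X_1\in C_{i,k}$ is recurrent, then $\tau_{\mathsf F}=1$ and $\phase(X_{\tau_{\mathsf F}})-\tau_{\mathsf F}\equiv k-1$.
\item If $X_1$ is transient, then by the Markov property $\tau_{\mathsf F}=1+\tau'_{\mathsf F}$, so the offset recorded from $X_1$ drops by one.
\item In both cases the event is exactly the one defining $b_{i,k-1}(s)$.
\end{itemize}
Let $L=\operatorname{lcm}_i d_i$. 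The shift relation gives $P^Lb_{i,k}=b_{i,k}$, so $W=\operatorname{span}\{b_{i,k}\}$ is $P$-invariant and annihilated by $P^L-I$. The minimal polynomial of $P|_W$ therefore divides $x^L-1$. So $W$ is spanned by (complex) eigenvectors with eigenvalues that are roots of unity, and $W\subseteq\mathcal K_{\mathbb C}(P)\cap\mathbb R^n=\mathcal K(P)$.

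For (ii), anchors are recurrent, so $b_{i,k}(a_{j,\ell})=\mathbf 1\{a_{j,\ell}\in C_{i,k}\}=\mathbf 1\{(i,k)=(j,\ell)\}$. Evaluating a vanishing linear combination of the $b_{i,k}$ at the anchors then shows all coefficients are zero.

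For (iii), take $x\in\mathcal K(P)$ and set $y=x-\sum_{(i,k)\in\mathcal I}x(a_{i,k})b_{i,k}$. Then $y\in\mathcal K(P)$ and $y$ vanishes at every anchor; I must show $y=0$. I expect this step to be the main obstacle, and I would avoid multiplicity counting as follows.
\begin{itemize}[leftmargin=*,itemsep=1pt,topsep=2pt,parsep=0pt]
\item Since $P$ is stochastic, it is power bounded. Hence its peripheral eigenvalues are semisimple, and $P|_{\mathcal K(P)}$ is diagonalizable with unimodular eigenvalues.
\item So there is a sequence $t_j\to\infty$ with $P^{t_j}y\to y$ (simultaneous Diophantine approximation of the eigen-angles, or simply $t_j=jM$ if all eigenvalues are roots of unity).
\item Restricted to a closed class $F_i$, $y|_{F_i}$ lies in the peripheral subspace of $P|_{F_i}$. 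Hence it lies in the peripheral subspace of $P^{d_i}$ restricted to each cyclic class $C_{i,k}$.
\item That restriction is irreducible and aperiodic, so its only peripheral eigenvalue is $1$ and its peripheral vectors are constants.
\item So $y$ is constant on each $C_{i,k}$, and since it vanishes at the anchor $a_{i,k}$, $y=0$ on $\mathsf F$.
\item For transient $s$, $(P^{t_j}y)(s)=\mathbb E_s[y(X_{t_j})]$. Because $y=0$ on $\mathsf F$, this is bounded by $\|y\|_\infty\,\mathbb P_s(X_{t_j}\in\mathsf T)\to0$, so $y(s)=\lim_j(P^{t_j}y)(s)=0$.
\end{itemize}
Thus $y=0$, so $\{b_{i,k}\}$ spans $\mathcal K(P)$ and, with (ii), forms a basis.

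For (iv), by \eqref{eq:b_anchor_delta} the vector $\Pi v$ vanishes at all anchors, and $\Pi$ fixes any vector vanishing at all anchors, so $\Pi^2=\Pi$. Also $\Pi v=0$ iff $v=\sum_{(i,k)\in\mathcal I} v(a_{i,k})b_{i,k}\in W$. Conversely, every $w\in W$ satisfies $\Pi w=0$ by \eqref{eq:b_anchor_delta}. Hence $\ker\Pi=W=\mathcal K(P)$.
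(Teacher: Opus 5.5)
Your proof is correct. Steps (i), (ii) and (iv) follow the paper's argument in Lemmas~\ref{lem:shift_b}, \ref{lem:Pi_kernel} and \ref{lem:kernel_equals_peripheral}. For the inclusion $\operatorname{span}\{b_{i,k}\}\subseteq\mathcal K(P)$, the paper uses explicit Fourier modes $\psi_{i,\ell}=\sum_k\omega_i^{\ell k}b_{i,k}$; this carries the same content as your observation that the minimal polynomial of $P$ on the span divides $x^L-1$.

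The genuine difference is step (iii). The paper never shows directly that a peripheral vector vanishing at the anchors is zero. It counts dimensions instead:
\begin{itemize}
\item it writes $P$ in block upper-triangular form and uses $\rho(Q)<1$;
\item it invokes the periodic Perron--Frobenius theorem: the unimodular eigenvalues of an irreducible block of period $d_i$ are exactly the $d_i$-th roots of unity, each algebraically simple;
\item it passes to the real dimension using conjugate-pair closure and semisimplicity, obtaining $\dim\mathcal K(P)=\sum_i d_i$.
\end{itemize}
Inclusion plus equal dimension then forces equality.

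Your uniqueness argument replaces this multiplicity bookkeeping with dynamics. It uses only the primitive case of Perron--Frobenius (for $P^{d_i}$ on each cyclic class), power-boundedness, and absorption into $\mathsf F$. It never needs to know which unimodular eigenvalues occur or with what multiplicity, and your Diophantine fallback for choosing $t_j$ is what keeps it independent of the periodic theorem. It also proves directly that every $x\in\mathcal K(P)$ equals $\sum_{(i,k)}x(a_{i,k})b_{i,k}$, which the paper establishes separately as Lemma~\ref{lem:anchor_interpolation}. The paper's route is shorter once the cited spectral theorem is granted.

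When you write it up, make two points explicit. First, $y|_{F_i}$ is peripheral for the restriction $P_{F_i}$ because $F_i$ is closed, so $((P-\lambda I)^m z)|_{F_i}=(P_{F_i}-\lambda I)^m(z|_{F_i})$. Second, the diagonalizability of $P$ on $\mathcal K(P)$ holds over $\mathbb C$.
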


Lemma~\ref{lem:main_b_identify} is where the abstract peripheral subspace becomes an implementable gauge. The projection $\Pi v$ subtracts the unique persistent profile that agrees with $v$ on all anchors, so the remaining representative vanishes at those anchors. This is the normalization used in the transient cost interpretation below and the quantity learned by the sample based estimator in Section~\ref{algorithm}.

\section{Evaluation consequences of the decomposition}\label{justification}

The previous section constructs the decomposition. We now show what it says about returns. Finite-horizon return can be written as a persistent orbit plus a boundary correction, and this is stronger than an asymptotic statement since at every horizon, it separates the contribution of behavior that continues from the contribution of the transient representative. For any potential $v$, define $g_v\coloneqq r-(I-P)v$.

\begin{lemma}[Return identity]\label{lem:app_telescoping}
For every $v$ and every $T\ge1$,
\begin{equation}\label{eq:app_telescoping}
J_T=\sum_{t=0}^{T-1}P^t g_v+v-P^T v .
\end{equation}
\end{lemma}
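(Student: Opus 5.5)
The identity is purely algebraic, and the plan is to prove it by a telescoping sum. It uses none of the structure from Section~\ref{Quotient}: not the peripheral subspace, not the gauge, not the quotient contraction. Fix an arbitrary $v\in\mathbb R^n$ and $T\ge1$. Rearrange the definition $g_v=r-(I-P)v$ as $r=g_v+v-Pv$. Substitute this into $J_T=\sum_{t=0}^{T-1}P^t r$ and use linearity of each $P^t$ to get
\begin{equation*}
J_T=\sum_{t=0}^{T-1}P^t g_v+\sum_{t=0}^{T-1}\bigl(P^t v-P^{t+1}v\bigr).
\end{equation*}

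The second sum telescopes. Consecutive terms cancel, leaving only the first term $P^0 v=v$ and the last term $-P^T v$. This gives exactly \eqref{eq:app_telescoping}. As an alternative check, one can argue by induction on $T$. For $T=1$ the claim reads $J_1=r=g_v+v-Pv$, which is the definition of $g_v$. For the inductive step, use $J_{T+1}=J_T+P^T r$ and expand $P^T r=P^T g_v+P^T v-P^{T+1}v$.

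There is no genuine obstacle here. The only point to stress in the write-up is that the identity holds for \emph{every} $v$, with no gauge or peripheral condition required. Its evaluation content comes later, when one specializes to $v=v_\Pi^\star$ from Theorem~\ref{thm:wellposed}. Then $g_v=g_\Pi^\star\in\mathcal K(P)$, so the first sum is a persistent orbit. The boundary term $v-P^Tv$ stays bounded because $P$ is stochastic, and $\|P^Tv\|_\infty\le\|v\|_\infty$.
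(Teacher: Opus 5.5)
Your proposal is correct and matches the paper's proof: rewrite the definition as $r=g_v+v-Pv$, apply $P^t$, and sum over $t=0,\dots,T-1$, so that the terms $P^tv-P^{t+1}v$ telescope to $v-P^Tv$. The induction check and your remark that the identity needs no gauge or peripheral condition are accurate but not needed for the argument.
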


Applying Lemma~\ref{lem:app_telescoping} to $v^\star=v_\Pi^\star$ and $g^\star=g_\Pi^\star$ gives
\begin{equation}\label{eq:app_return_decomp_star}
J_T=\sum_{t=0}^{T-1}P^t g^\star+v^\star-P^T v^\star .
\end{equation}
This identity is the main reason for separating $g^\star$ and $v^\star$. The sum $\sum_{t<T}P^t g^\star$ keeps the recurrent class and phase resolved persistent profile visible at each time. The term $v^\star-P^T v^\star$ is a boundary term. It can affect finite horizon returns, but it does not create a new long run rate. Thus the decomposition matches the diagnostic distinction in the introduction: persistent behavior contributes throughout the horizon, while the transient component enters through the path to the selected representative.

Under the anchor gauge, Theorem~\ref{thm:wellposed} and Lemma~\ref{lem:main_b_identify} give $v^\star(a_{i,k})=0$ for all $(i,k)\in\mathcal I$. Let $\tau_{\mathcal A}$ be the first hitting time of the anchor set.

\begin{proposition}[Transient cost under the anchor gauge]\label{prop:app_vstar_transient_cost}
For every state $s$,
\begin{equation}\label{eq:app_vstar_transient_cost}
v^\star(s)=\mathbb E_s\!\left[\sum_{t=0}^{\tau_{\mathcal A}-1}\bigl(r(X_t)-g^\star(X_t)\bigr)\right].
\end{equation}
\end{proposition}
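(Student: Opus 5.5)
The identity is a stopped version of the Poisson equation $v^\star=(r-g^\star)+Pv^\star$, which is just \eqref{eq:decomposition} rearranged. The plan is a Dynkin/optional-stopping argument.

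Set $f\coloneqq r-g^\star$, so that $v^\star=f+Pv^\star$ holds pointwise at every state. Then the process
\[
M_t\coloneqq v^\star(X_t)+\sum_{u=0}^{t-1}f(X_u)
\]
is a martingale under $\mathbb P_s$. Indeed, $\mathbb E[v^\star(X_{t+1})\mid X_t]=(Pv^\star)(X_t)=v^\star(X_t)-f(X_t)$. Since the state space is finite, $v^\star$ and $f$ are bounded.

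Next I establish integrability of $\tau_{\mathcal A}$. From any state, the chain reaches $\mathsf F$ in finite time with geometric tails. This holds because $\mathsf T$ is transient in a finite chain, so $\mathbb P_s\{\tau_{\mathsf F}>n\}\le c<1$ uniformly in $s$. Inside a closed irreducible class $F_i$, every state communicates with each anchor $a_{i,k}\in F_i$. So from any recurrent state, $\mathcal A$ is hit within $|F_i|$ steps with probability bounded below. Combining the two gives $\sup_s\mathbb P_s\{\tau_{\mathcal A}>L\}\le c'<1$ for some $L$. The Markov property then yields geometric tails, and in particular $\mathbb E_s\tau_{\mathcal A}<\infty$.

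I then apply optional stopping at $\tau_{\mathcal A}\wedge t$ and let $t\to\infty$. The bound $|M_{t\wedge\tau_{\mathcal A}}|\le\|v^\star\|_\infty+\|f\|_\infty\tau_{\mathcal A}$ gives an integrable dominating variable, so dominated convergence applies. The result is
\[
v^\star(s)=\mathbb E_s\bigl[v^\star(X_{\tau_{\mathcal A}})\bigr]+\mathbb E_s\Bigl[\sum_{t=0}^{\tau_{\mathcal A}-1}f(X_t)\Bigr].
\]
Since $X_{\tau_{\mathcal A}}$ is an anchor and $v^\star$ vanishes at every anchor under the anchor gauge, the first term is zero, and \eqref{eq:app_vstar_transient_cost} follows.

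The vanishing at anchors follows from $v^\star\in\operatorname{range}(\Pi)$ together with \eqref{eq:anchor_projection} and \eqref{eq:b_anchor_delta}. For $v=\Pi w$, we get $v(a_{j,\ell})=w(a_{j,\ell})-\sum_{(i,k)\in\mathcal I}w(a_{i,k})\mathbf 1\{(i,k)=(j,\ell)\}=0$.

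If $s$ is itself an anchor, then $\tau_{\mathcal A}=0$ with the convention that hitting time includes time $0$. Both sides of \eqref{eq:app_vstar_transient_cost} are then zero, which is consistent.

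The only real obstacle is justifying the limit, that is, the uniform geometric bound on $\tau_{\mathcal A}$. The key point is that every closed class contains at least one anchor, so the chain can never avoid $\mathcal A$ forever. An alternative, purely linear-algebraic route avoids martingales. Restrict $v^\star=f+Pv^\star$ to the non-anchor states $\mathcal A^c$. Using $v^\star|_{\mathcal A}=0$, this gives $v^\star|_{\mathcal A^c}=(I-Q)^{-1}f|_{\mathcal A^c}$, where $Q$ is $P$ restricted to $\mathcal A^c\times\mathcal A^c$. The matrix $I-Q$ is invertible with $\rho(Q)<1$ by the same reachability argument, and $(I-Q)^{-1}f=\sum_{t\ge0}Q^tf$ is exactly the expected sum of $f$ up to $\tau_{\mathcal A}$.
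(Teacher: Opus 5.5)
Your proposal is correct and follows essentially the same route as the paper: the martingale $M_t=v^\star(X_t)+\sum_{u<t}(r-g^\star)(X_u)$, optional stopping at $\tau_{\mathcal A}$, and the anchor normalization $v^\star(a_{i,k})=0$. You also supply two justifications the paper only asserts: that $\mathbb E_s[\tau_{\mathcal A}]<\infty$ because every closed class contains an anchor, and the dominated-convergence passage to the limit.
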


Thus $v^\star(s)$ is the cumulative excess reward, relative to the persistent profile, paid before the chain reaches the anchor of the eventual class phase. The anchors are only a normalization device, but they make the sign and magnitude of $v^\star$ interpretable. If the path to the eventual regime is costly relative to the persistent profile, the transient component records that cost. If the process already lies in its persistent regime, there is no such cost to record. The next proposition is a sanity check for the interpretation. A deterministic cycle has no transient path once the initial state is fixed inside the cycle. Any nonzero classical bias in this example must therefore come from persistent phase structure rather than from transient cost.

\begin{proposition}[Deterministic cycles have no transient burden]\label{prop:deterministic_cycle}
Let $P$ be the deterministic cycle on $d$ states, $P(k,k+1\,\mathrm{mod}\,d)=1$. Then $\mathcal K(P)=\mathbb R^d$, so the anchor gauge has $\Pi=0$, $v_\Pi^\star=0$, and $g_\Pi^\star=r$. In contrast, the classical normalized pair is $\rho=\bar r\mathbf 1$, with $\bar r=d^{-1}\sum_k r(k)$, and $h$ solves $h(k)-h(k+1\,\mathrm{mod}\,d)=r(k)-\bar r$ with $\sum_k h(k)=0$. Thus $h$ is generally nonzero even though there is no transient path to diagnose. If $d$ is even and $r(k)=1$ for $k<d/2$ and $r(k)=0$ otherwise, then $\|h\|_\infty=d/8$ while $v_\Pi^\star=0$.
\end{proposition}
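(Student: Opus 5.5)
The plan is to treat the two halves separately: the peripheral-quotient side follows from the spectrum of a permutation matrix, and the classical side is an explicit solve of the Poisson equation \eqref{eq:poisson} on a cycle.

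\textbf{Peripheral-quotient side.}
\begin{enumerate}
\item The cycle matrix $P$ is a permutation matrix with $P^d=I$. Its eigenvalues are the $d$-th roots of unity, each simple, so all have modulus one and $P$ is diagonalizable over $\mathbb C$.
\item Hence $\mathcal K_{\mathbb C}(P)=\mathbb C^d$, and $\mathcal K(P)=\mathbb C^d\cap\mathbb R^d=\mathbb R^d$.
\item Any projection with kernel $\mathbb R^d$ is the zero map. Concretely, in the anchor gauge there is one closed class ($m=1$) of period $d_1=d$ with singleton cyclic classes $C_{1,k}=\{k\}$. So $a_{1,k}=k$ and $b_{1,k}=e_k$, and the formula \eqref{eq:anchor_projection} gives $\Pi v=v-\sum_k v(k)e_k=0$.
\item Theorem~\ref{thm:wellposed} says $v_\Pi^\star\in\operatorname{range}(\Pi)=\{0\}$, so $v_\Pi^\star=0$. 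Then \eqref{eq:gstar} gives $g_\Pi^\star=r-(I-P)\cdot 0=r$.
\end{enumerate}

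\textbf{Classical side.}
\begin{enumerate}
\item The chain is irreducible with uniform stationary distribution, so $P^\infty=d^{-1}\mathbf 1\mathbf 1^\top$. This gives $\rho=P^\infty r=\bar r\mathbf 1$.
\item Since $(Ph)(k)=h(k+1 \bmod d)$, the Poisson equation reads $h(k)-h(k+1 \bmod d)=r(k)-\bar r$.
\item The normalization $P^\infty h=0$ is exactly $\sum_k h(k)=0$.
\item Existence follows because the right-hand side sums to zero over a full period, so the recursion closes consistently around the cycle.
\item Uniqueness follows because $\ker(I-P)$ consists of the constants, and the zero-sum condition removes that freedom.
\end{enumerate}

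\textbf{The explicit example.} Let $d$ be even and set $m=d/2$. Then $r(k)-\bar r=\tfrac12$ for $k<m$ and $-\tfrac12$ for $k\ge m$.
\begin{enumerate}
\item Unrolling the recursion: $h$ decreases by $\tfrac12$ per step from $k=0$ to $k=m$, then increases by $\tfrac12$ per step back to $k=d$. Hence
\[
h(k)=h(0)-\tfrac12\min(k,d-k),\qquad 0\le k\le d-1 .
\]
\item Compute the sum
\[
\sum_{k=0}^{d-1}\min(k,d-k)=2\sum_{k=0}^{m-1}k+m=m^2 .
\]
\item The condition $\sum_k h(k)=0$ becomes $d\,h(0)=m^2/2$, so $h(0)=m/4=d/8$.
\item The minimum is $h(m)=d/8-m/2=-d/8$, and $h$ is a tent between these values. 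Therefore $\|h\|_\infty=d/8$, while $v_\Pi^\star=0$ from the first half.
\end{enumerate}

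\textbf{Main obstacle.} No step is deep. The place needing the most care is the identification of the anchor gauge on the cycle: one must check that each state forms its own cyclic class, so that the basis of Definition~\ref{def:main_b} reduces to the standard basis. Two small bookkeeping points also need care: the direction of the recursion, since $P$ shifts forward, and the exact tent-sum computation giving $d/8$.
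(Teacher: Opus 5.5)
Your proposal is correct and follows essentially the same route as the paper: $P^d=I$ forces every eigenvalue onto the unit circle, so $\mathcal K(P)=\mathbb R^d$, $\Pi=0$, $v_\Pi^\star=0$ and $g_\Pi^\star=r$; the classical side is then an explicit zero-mean tent solution of $h-Ph=r-\bar r\mathbf 1$. Your version adds detail the paper leaves implicit, namely checking that the anchor basis reduces to the standard basis $e_k$ and computing $\sum_k\min(k,d-k)=m^2$ to get $h(0)=d/8$, but the argument is the same.
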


Proposition~\ref{prop:deterministic_cycle} gives the sharpest separation from the classical bias: a large bias can be caused entirely by persistent phase structure.

\begin{corollary}[Return error from decomposition error]\label{cor:return_error}
Let $(g^\star,v^\star)$ be the exact decomposition and define $\widehat J_T=\sum_{t=0}^{T-1}P^t\widehat g+\widehat v-P^T\widehat v$ for any candidate pair $(\widehat g,\widehat v)$. Then, for every $T\ge 1$,
\begin{equation}\label{eq:return_error_main}
\|\widehat J_T-J_T\|_\infty\le T\|\widehat g-g^\star\|_\infty+2\|\widehat v-v^\star\|_\infty.
\end{equation}
\end{corollary}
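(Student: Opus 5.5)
The argument subtracts the identity \eqref{eq:app_return_decomp_star} for the exact pair from the definition of $\widehat J_T$, then bounds each piece with the triangle inequality and the fact that $P$ is stochastic. By Lemma~\ref{lem:app_telescoping} applied to $v^\star$, we have $J_T=\sum_{t=0}^{T-1}P^t g^\star+v^\star-P^Tv^\star$. Here $g^\star=g_{v^\star}$ by definition of $g_\Pi^\star$ in Theorem~\ref{thm:wellposed}. Subtracting this from the definition of $\widehat J_T$ gives
\[
\widehat J_T-J_T=\sum_{t=0}^{T-1}P^t(\widehat g-g^\star)+(\widehat v-v^\star)-P^T(\widehat v-v^\star).
\]
This is purely linear, so no compatibility between $\widehat g$ and $\widehat v$ is needed. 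In particular, we do not need $\widehat g=r-(I-P)\widehat v$.

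Next, I use that $P$ is row-stochastic with nonnegative entries and row sums one. Hence every power $P^t$ is also stochastic, so $\|P^t x\|_\infty\le\|x\|_\infty$ for all $x\in\mathbb R^n$ and $t\ge0$. Applying the triangle inequality to the displayed identity, the sum of $T$ terms contributes at most $T\|\widehat g-g^\star\|_\infty$. The two boundary terms contribute at most $\|\widehat v-v^\star\|_\infty+\|P^T(\widehat v-v^\star)\|_\infty\le 2\|\widehat v-v^\star\|_\infty$. Adding these gives \eqref{eq:return_error_main}.

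There is no real obstacle here. The only point to state explicitly is that the $\ell_\infty$ operator norm of a stochastic matrix equals one, which follows from the max-row-sum formula. I also note that the bound holds for an arbitrary candidate pair, not just for ones produced by the estimator, because the exact identity is used only on the $(g^\star,v^\star)$ side.
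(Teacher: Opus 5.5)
Your proof is correct and follows the paper's own argument: you subtract the exact return identity \eqref{eq:app_return_decomp_star} for $(g^\star,v^\star)$ from the definition of $\widehat J_T$, then bound the result using $\|P^t x\|_\infty\le\|x\|_\infty$ for row-stochastic $P$ together with the triangle inequality. Your remark that the bound needs no compatibility between $\widehat g$ and $\widehat v$ is also correct.
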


The corollary is the bridge to sample-based evaluation: estimating the persistent profile and the transient component gives a direct finite horizon return guarantee. The factor $T$ in front of the persistent profile error is unavoidable because the persistent profile contributes at every time step, while the transient component appears only through two boundary terms. This matches the intended meaning of the two quantities. Proofs of the results in this section are given in Appendix~\ref{app:evaluation_extra_proofs}.

\section{Sample-based estimation of the decomposition}\label{algorithm}

We now define an evaluation report with two quantities, $g_\Pi^\star$ and $v_\Pi^\star$, and estimate them under a tabular generative access model. Since our decomposition depends on closed classes, periods, cyclic phases, and absorption probabilities from transient states. From a single trajectory, or from logged data that does not cover some branch, these objects are not identifiable for general chains. A generative model is one clean way to impose the needed coverage and other data models with comparable coverage could be used. The estimator then follows the same order as the decomposition: learn the persistent coordinates, learn the anchor gauge, solve the stable quotient equation, and reconstruct the persistent profile. The detailed algorithms and proofs formally stated in Appendices~\ref{app:sample_theory} and~\ref{app:algorithms} while this section provides the statistical statement without technical constants. Numerical diagnostics and sample-based experiments are also reported in Appendix~\ref{sec:experiments}.

\subsection{Learning the persistent coordinates}

Assume access to a simulator that returns an independent sample from $P(\cdot| s)$ for any queried state $s$. The first task is structural. The estimator samples every state, forms the empirical support graph, computes closed strongly connected components, and then computes the period and cyclic partition of each closed class. This gives estimates of $F_i$, $d_i$, $C_{i,k}$, and the anchors $a_{i,k}$. If $p_{\min}$ is the smallest positive transition probability, then $K\ge p_{\min}^{-1}\log(n^2/\delta)$ samples per state recover the support graph with probability at least $1-\delta$. On this event, the recurrent classes, periods, cyclic partitions, and anchors are correct up to cyclic relabeling. This recovery step is important because the persistent coordinates are not numerical values that can be chosen after the fact. They are determined by the communication structure and periods of the chain. If a closed class or a cyclic phase is missed, the estimator would project out the wrong subspace and the later contraction statement would apply to the wrong quotient.

The second task is to estimate the basis $b_{i,k}$ from Definition~\ref{def:main_b}. For each transient state, the simulator runs independent episodes until the recurrent set is reached and records the terminal class together with the phase offset. The empirical frequencies define $\widehat b_{i,k}(s)$. With $N=|\mathcal I|$ and $M$ episodes per transient state, the concentration bound in Appendix~\ref{Learning the gauge map} gives $\max_{s\in\mathsf T}\|\widehat b(s)-b(s)\|_1\le \varepsilon_b$ with high probability once $M$ is of order $\varepsilon_b^{-2}(N+\log(|\mathsf T|/\delta))$. The learned projection is $\widehat\Pi v=v-\sum_{(i,k)\in\mathcal I}v(a_{i,k})\widehat b_{i,k}$. On the same event, $\|(\widehat\Pi-\Pi)v\|_\infty\le\varepsilon_b\|v\|_\infty$. This bound is the reason the estimator separates structural learning from numerical fixed point iteration. Once the projection is accurate, every subsequent error term can be measured as a perturbation of the stable quotient equation from Theorem~\ref{thm:quotient_contraction}.

The probabilistic meaning of $b_{i,k}$ is important for interpretation. Starting from a transient state, $b_{i,k}(s)$ is the probability that the chain eventually enters class $F_i$ with the phase offset corresponding to anchor $a_{i,k}$. Thus the learned basis does not approximate an arbitrary spectral vector. It estimates the weights with which the initial state is assigned to persistent regimes. This is the point at which the abstract peripheral subspace becomes a concrete tabular object. The projection $\widehat\Pi$ removes the estimated persistent coordinates by subtracting the anchor values extended through these absorption probabilities.

\begin{algorithm}[t]
\caption{Main estimator for persistent transient policy evaluation}
\label{alg:main_estimator_summary}
\begin{algorithmic}[1]
\REQUIRE Generative model for $P$, reward vector $r$, budgets $K,M,T,J$
\STATE Learn the support graph, closed classes, periods, cyclic phases, and anchors.
\STATE Estimate the phase offset absorption basis $\widehat b_{i,k}$ by absorption episodes.
\STATE Form the learned anchor projection $\widehat\Pi v=v-\sum_{(i,k)}v(a_{i,k})\widehat b_{i,k}$.
\STATE Run projected stochastic approximation on the anchor gauge for $T$ iterations.
\STATE Estimate anchor residuals with $J$ samples per anchor and reconstruct $\widehat g$.
\STATE \textbf{return} Persistent profile $\widehat g$ and transient component $\widehat v$.
\end{algorithmic}
\end{algorithm}

Algorithm~\ref{alg:main_estimator_summary} is deliberately aligned with the decomposition. The first two steps estimate the gauge and therefore determine which coordinates are persistent. The fourth step estimates only the quotient fixed point, where contraction is available. The last step reconstructs $g_\Pi^\star$ from the residual $r+(P-I)v_\Pi^\star$ on anchors and then extends those coordinates through the estimated basis.

This order is useful because the three tasks have different statistical roles. Support recovery is a discrete structural problem. Basis estimation is a probability estimation problem over absorption events. Stochastic approximation is a numerical fixed point problem on a stable quotient. If these tasks are mixed together, the meaning of the resulting value vector becomes unclear. The estimator keeps them separate, so each error term can be traced back to one part of the evaluation report. This is also why the algorithm is suited to tabular policy evaluation. Once a stationary policy is fixed, the induced Markov reward process is the object being evaluated, and the recurrent classes, periods, and phases are properties of this policy induced process.

\subsection{Projected stochastic approximation on the quotient}

Let $W=\{v\in\mathbb R^n:v(a_{i,k})=0\text{ for all }(i,k)\in\mathcal I\}$ be the anchored subspace. By Lemma~\ref{lem:main_b_identify}, this subspace intersects $\mathcal K(P)$ only at zero, so the quotient semi-norm becomes a genuine norm on $W$. The exact projected map $v\mapsto\Pi(r+Pv)$ is contractive on this space. The learned map uses $\widehat\Pi$ and one sample of the next state for each coordinate. If $\widetilde s\sim P(\cdot| s)$, one stochastic update has the form
\begin{equation}\label{eq:main_sa_update_summary}
v_{t+1}=\widehat\Pi\bigl((1-\alpha_t)v_t+\alpha_t\widehat T(v_t)\bigr),
\qquad \widehat T(v_t)(s)=r(s)+v_t(\widetilde s).
\end{equation}
This update shows where the quotient matters. Without removing $\mathcal K(P)$, the fixed point map may have eigenvalues on the unit circle. After projection, the deterministic part is a contraction. With a small basis error, the learned projected operator has contraction factor $\gamma+O(\varepsilon_b)$ and remains stable whenever $\varepsilon_b$ is small enough. This stability statement is the algorithmic counterpart of the minimality theorem. The estimator does not attempt to average away periodicity. It removes the full peripheral subspace, estimates the transient representative in the anchored complement, and later puts the persistent coordinates back into $\widehat g$.

The resulting transient component guarantee has the usual stochastic approximation form, plus the price of learning the gauge. Under the structural recovery event, the basis concentration event, a bounded second moment condition for the oracle noise, and a standard stepsize choice, Appendix~\ref{projSA} proves
\begin{equation}\label{eq:v_rate_summary}
\mathbb E\|[v_T]-[v_\Pi^\star]\|_{\mathrm q}\le O(T^{-1/2})+O(\varepsilon_b).
\end{equation}
Equivalently, setting $T$ of order $\varepsilon^{-2}$ and choosing the basis accuracy at the same scale gives quotient error of order $\varepsilon$. The constants depend on the quotient contraction, the norm equivalence on $W$, and the noise moment bound. These dependencies are unavoidable because the algorithm estimates a fixed point in a norm induced by the quotient. They are also useful diagnostically. A small quotient error means that the estimated transient component is accurate after ignoring persistent coordinates, exactly as the theory requires. It does not require the stochastic approximation iterates to learn phase oscillations as part of the transient value.

This differs from directly applying an average reward temporal difference method to the original chain. In a periodic class, the original Poisson operator has non decaying phase directions, and stochastic approximation must either leave these directions unresolved or implicitly fold them into the bias. The projected update removes these directions before the fixed point iteration is run. Therefore the iteration target is the anchor gauge representative $v_\Pi^\star$, not a bias vector that still contains persistent oscillation. This is the algorithmic expression of the main distinction made in the introduction.

\subsection{Reconstructing the persistent profile and returns}

Once $v_T$ has been estimated, the persistent profile is recovered from anchor residuals. For each anchor, the estimator samples $Y\sim P(a_{i,k},\cdot)$ and estimates $Pv_T(a_{i,k})$ by an empirical average. The anchor coordinate is then $\widehat\theta_{i,k}=r(a_{i,k})+\widehat{Pv_T}(a_{i,k})-v_T(a_{i,k})$, and the full profile is reconstructed as $\widehat g(s)=\sum_{(i,k)\in\mathcal I}\widehat\theta_{i,k}\widehat b_{i,k}(s)$. This formula mirrors Theorem~\ref{thm:wellposed}: the residual lies in the peripheral subspace, and the anchor basis extends its coordinates from anchors to all states.

The persistent profile bound in Appendix~\ref{projSA} states that, for $J$ next state samples per anchor,
\begin{equation}\label{eq:g_rate_summary}
\mathbb E\|\widehat g-g_\Pi^\star\|_\infty
\le C_1\mathbb E\|v_T-v_\Pi^\star\|_{\mathrm q}+C_2\sqrt{\frac{\log N}{J}}+C_3\varepsilon_b.
\end{equation}
Combining this with \eqref{eq:v_rate_summary} yields the end to end form $\mathbb E\|\widehat g-g_\Pi^\star\|_\infty\le O(T^{-1/2})+O(\sqrt{\log N/J})+O(\varepsilon_b)$. Together with Corollary~\ref{cor:return_error}, the estimator therefore controls finite horizon prediction by estimating the two components that have evaluation meaning. The query complexity has three parts: $nK$ samples for the support graph, expected $|\mathsf T|MH_{\rm abs}$ samples for absorption episodes, where $H_{\rm abs}=\max_{s\in\mathsf T}\mathbb E_s[\tau_{\mathsf F}]$, and $nT+NJ$ samples for stochastic approximation and anchor residuals. Thus the theory is operational in tabular MDP evaluation: it estimates the average reward, the persistent phase resolved regime, and the transient to regime cost from the same generative access used by standard model based methods. The output can be used in two ways. For asymptotic evaluation, $P^\infty\widehat g$ estimates the statewise average reward. For diagnostic evaluation, the orbit of $\widehat g$ describes the persistent behavior that the policy will continue to exhibit, and $\widehat v$ reports the transient cost of reaching the anchored regime. This is the sense in which the estimator implements the persistent transient decomposition rather than only solving an average reward equation.

This structural recovery step is part of the statistical target instead of merely a preprocessing convenience. The quotient and the anchor gauge need to be determined by the support graph through the closed classes, their periods, cyclic partitions, and anchors. If this map is incorrect, the learned projection may have a different kernel and the algorithm may solve a different quotient problem. Thus a finite-sample guarantee for the stated decomposition must work on a high-probability event where this structure is recovered correctly. The rare-edge parameter $p_{\min}$ is used only to give a sufficient sample size for this event. It is not an input to the estimator. Its appearance reflects an identifiability limit for exact support recovery. Specifically, if a rare transition is never sampled, then a chain with that transition and a chain without it are statistically indistinguishable, although they may induce different recurrent structure and hence a different decomposition. Appendix~\ref{Learning the gauge map} gives the formal support-recovery argument.

\begin{theorem}[sample-based decomposition guarantee]\label{thm:sample_summary_main}
Let $\mathcal E_{\rm str}$ be the event that the support graph, recurrent classes, periods, cyclic partitions, and anchors are recovered correctly. On $\mathcal E_{\rm str}$, suppose the basis estimation error is at most $\varepsilon_b$, the oracle noise has a bounded conditional second moment, and the learned quotient contraction factor is smaller than one. Then the estimator in Algorithm~\ref{alg:main_estimator_summary} satisfies
\begin{equation}\label{eq:sample_summary_v}
\mathbb E\|[\widehat v]-[v_\Pi^\star]\|_{\mathrm q}\le O(T^{-1/2})+O(\varepsilon_b)
\end{equation}
and, with $J$ samples per anchor for residual estimation,
\begin{equation}\label{eq:sample_summary_g}
\mathbb E\|\widehat g-g_\Pi^\star\|_\infty\le O(T^{-1/2})+O\!\left(\sqrt{\frac{\log N}{J}}\right)+O(\varepsilon_b).
\end{equation}
Consequently, the finite horizon prediction error is controlled by Corollary~\ref{cor:return_error}.
\end{theorem}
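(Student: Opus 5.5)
We work on the event $\mathcal E_{\rm str}$ throughout. There the recurrent classes, periods, cyclic partitions and anchors are exact, so the index set $\mathcal I$ and the anchored subspace $W$ coincide with the true ones. The only structural error left is then in the basis, $\widehat b$ versus $b$.

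\textbf{Step 1: bound the perturbation of the projected map.} By Lemma~\ref{lem:main_b_identify} and the stated concentration event, $\|(\widehat\Pi-\Pi)v\|_\infty\le\varepsilon_b\|v\|_\infty$. Both $\Pi$ and $\widehat\Pi$ map into $W$, since $\widehat b_{i,k}(a_{j,\ell})=\mathbf 1\{(i,k)=(j,\ell)\}$ holds exactly once the anchors are correct. Write $\widehat{\mathcal T}(v)=\widehat\Pi(r+Pv)$. Theorem~\ref{thm:quotient_contraction} gives that $\|\cdot\|_{\mathrm q}$ is a genuine norm on $W$, equivalent to $\|\cdot\|_\infty$ with constants $c_1\|v\|_\infty\le\|v\|_{\mathrm q}\le c_2\|v\|_\infty$. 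It follows that $\widehat{\mathcal T}$ is a $\widehat\gamma$-contraction on $(W,\|\cdot\|_{\mathrm q})$ with $\widehat\gamma\le\gamma+c_2c_1^{-1}\varepsilon_b\|P\|_\infty$. The theorem assumes $\widehat\gamma<1$, so $\widehat{\mathcal T}$ has a unique fixed point $\widehat v^\star\in W$. Comparing the two fixed-point equations gives
\begin{equation*}
\|\widehat v^\star-v_\Pi^\star\|_{\mathrm q}\le\frac{\|(\widehat\Pi-\Pi)(r+Pv_\Pi^\star)\|_{\mathrm q}}{1-\widehat\gamma}=O(\varepsilon_b).
\end{equation*}

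\textbf{Step 2: stochastic approximation toward $\widehat v^\star$ (the main obstacle).} The update \eqref{eq:main_sa_update_summary} can be written as
\begin{equation*}
v_{t+1}=v_t+\alpha_t\bigl(\widehat{\mathcal T}(v_t)-v_t+\widehat\Pi w_t\bigr),
\end{equation*}
where $w_t(s)=v_t(\widetilde s)-(Pv_t)(s)$ is martingale-difference noise. This uses $v_t\in W$ and $\widehat\Pi v_t=v_t$.

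The difficulty is that $\|\cdot\|_{\mathrm q}$ is not a Euclidean norm, so the one-step drift argument does not go through directly. I would follow the Lyapunov approach of Chen et al.\ for contractive SA in arbitrary norms. Take the generalized Moreau envelope $M(x)=\min_u\{\tfrac12\|u\|_{\mathrm q}^2+\tfrac1{2\mu}\|x-u\|_2^2\}$. It is smooth with respect to $\|\cdot\|_2$ and equivalent to $\tfrac12\|\cdot\|_{\mathrm q}^2$ up to constants. Expanding $M(v_{t+1}-\widehat v^\star)$ uses the contraction of $\widehat{\mathcal T}$ for the negative drift, the zero conditional mean of $w_t$, and the bounded second moment assumption in the form $\mathbb E\|w_t\|^2\le A(1+\|v_t\|^2)$. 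This produces a recursion
\begin{equation*}
\mathbb E M_{t+1}\le(1-c\alpha_t)\mathbb E M_t+C\alpha_t^2 .
\end{equation*}
With stepsize $\alpha_t=\alpha/(t+t_0)$ and $\alpha c>1$, this gives $\mathbb E\|v_T-\widehat v^\star\|_{\mathrm q}^2=O(1/T)$. Jensen's inequality and Step 1 then yield \eqref{eq:sample_summary_v}, with $\widehat v=v_T$.

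\textbf{Step 3: the persistent profile.} Write $\theta_{i,k}=g_\Pi^\star(a_{i,k})$. By \eqref{eq:gstar} and Lemma~\ref{lem:main_b_identify}, $g_\Pi^\star=\sum_{(i,k)}\theta_{i,k}b_{i,k}$ and $\theta_{i,k}=r(a_{i,k})+(Pv_\Pi^\star)(a_{i,k})-v_\Pi^\star(a_{i,k})$. The error then splits as
\begin{equation*}
\widehat g-g^\star_\Pi=\sum(\widehat\theta-\theta)\widehat b+\sum\theta(\widehat b-b).
\end{equation*}
Since $\widehat b(s)$ is a probability vector, $\|\sum(\widehat\theta-\theta)\widehat b\|_\infty\le\max|\widehat\theta-\theta|$. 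The second term is at most $\|\theta\|_\infty\varepsilon_b$, and $\|\theta\|_\infty$ is bounded by $\|r\|_\infty+2\|v_\Pi^\star\|_\infty$.

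For $\widehat\theta-\theta$ there are two contributions. The first is $|(P-I)(v_T-v_\Pi^\star)(a)|\le2c_1^{-1}\|v_T-v_\Pi^\star\|_{\mathrm q}$. The second is the empirical-mean error of $\widehat{Pv_T}$. Conditionally on $v_T$ this is an average of $J$ bounded variables, so Hoeffding's inequality plus a union bound over the $N$ anchors gives $O(\|v_T\|_\infty\sqrt{\log N/J})$ in expectation. Here $\mathbb E\|v_T\|_\infty$ is bounded through Step 2.

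Combining these with Step 1 gives \eqref{eq:g_rate_summary}, and substituting Step 2 gives \eqref{eq:sample_summary_g}. The statement about finite-horizon prediction error follows directly from Corollary~\ref{cor:return_error}.
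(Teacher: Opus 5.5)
Your proof is correct. Steps 1 and 3 essentially match the paper: the perturbed contraction $\widehat\gamma=\gamma+C_{{\mathrm q},\infty}C_{\infty,{\mathrm q}}\varepsilon_b$ on $W$, the split $\widehat g-g_\Pi^\star=\sum(\widehat\theta-\theta)\widehat b+\sum\theta(\widehat b-b)$ using the simplex property of $\widehat b$, and a maximal inequality over the $N$ anchors. The genuine difference is Step 2.

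The obstacle you anticipate does not arise in the paper. The norm built in the proof of Theorem~\ref{thm:quotient_contraction} is a Lyapunov norm, $\|z\|_{\mathrm q}=((\Phi z)^\top H\,\Phi z)^{1/2}$ with $H-B^\top HB=I$, so it is induced by an inner product. The paper therefore expands $\|e_{t+1}\|_{\mathrm q}^2$ directly for $e_t=v_t-\widehat v^\star$ and uses $\mathbb E[\xi_{t+1}\mid\mathcal F_t]=0$ to kill the cross term. This gives $\mathbb E\|e_{t+1}\|_{\mathrm q}^2\le(1-\alpha_t(1-\widehat\gamma))^2\mathbb E\|e_t\|_{\mathrm q}^2+4\alpha_t^2C_{{\mathrm q},\infty}^2\sigma^2$. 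With $\alpha_t=\alpha/(t+t_0)$ and $\alpha(1-\widehat\gamma)>1/2$, this yields an explicit $O(T^{-1/2})$ bound.

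Your generalized-Moreau-envelope argument is also valid, since it works in any norm in which $\widehat{\mathcal T}$ contracts on $W$. It has a side benefit: it handles affine-growth noise $\mathbb E\|w_t\|^2\le A(1+\|v_t\|^2)$, which the actual oracle satisfies automatically because $|w_t(s)|\le 2\|v_t\|_\infty$. The cost is a smoothing parameter and extra norm-equivalence constants, so here it is heavier than necessary.

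A smaller difference is in the fixed-point gap. You use the $\widehat\gamma$-contraction with the mismatch evaluated at $v_\Pi^\star$, so your constant depends only on $\|r\|_\infty$ and $\|v_\Pi^\star\|_\infty$. The paper instead uses the exact $\gamma$-contraction of $\Pi P$ with the mismatch at $\widehat v^\star$, and therefore needs a separate a priori bound on $\|\widehat v^\star\|_\infty$. Both routes give $O(\varepsilon_b)$.
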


Together with the support recovery guarantee in Appendix~\ref{Learning the gauge map}, Theorem~\ref{thm:sample_summary_main} gives a high probability structural guarantee followed by estimation bounds on the recovered quotient. The term $T^{-1/2}$ is the stochastic approximation error, $\varepsilon_b$ is the price of learning the anchor projection, and $\sqrt{\log N/J}$ appears only when reconstructing the persistent profile from anchor residuals. These errors mirror the decomposition itself: errors in $\widehat v$ affect boundary terms, while errors in $\widehat g$ accumulate over the horizon.

The parameters in the guarantee have structural meanings. The number of persistent coordinates is $N=|\mathcal I|=\sum_i d_i\le |\mathsf F|\le n$, and $H_{\rm abs}$ only converts absorption episodes into expected simulator queries. A dense plug in estimator could estimate $P$ and compute the same decomposition from the learned model, so our claim is not uniform dominance over plug in methods. The point is that the estimator follows the persistent transient structure directly. For tabular MDP evaluation, the procedure is applied after fixing a stationary policy, so the learned recurrent classes and phases are properties of the policy induced chain. The output fits the usual evaluation tasks: $P^\infty\widehat g$ estimates the statewise average reward, the orbit of $\widehat g$ gives the phase resolved persistent behavior, and $\widehat v$ gives the anchor gauge transient diagnostic.

\section{Conclusion}\label{sec:conclusion}

This paper studies fixed policy evaluation in finite-state Markov chains beyond the aperiodic setting. Classical normalized gain and bias remain valid, but the bias can mix persistent phase behavior with transient cost. The real peripheral invariant subspace resolves this ambiguity. Quotienting by it is minimal for strict contraction, and an anchor gauge gives the decomposition $r=g_\Pi^\star+(I-P)v_\Pi^\star$. The pair contains the same Poisson information as classical gain and bias, but attributes it differently: $g_\Pi^\star$ records the persistent regime, while $v_\Pi^\star$ records the cost of reaching the anchored representative of that regime. The contribution is therefore not a new way to compute returns, but a new way to attribute them. The sample-based estimator follows the same structure by learning the recurrent phases, estimating the anchor projection, solving the stable quotient problem, and reconstructing the persistent profile. Its guarantees translate directly into finite horizon return error. The result is a tabular policy evaluation object that supports average reward evaluation, phase resolved diagnostics, and transient cost attribution from the same generative access.

\clearpage
\bibliographystyle{plainnat}
\bibliography{main}
\newpage
\appendix
\onecolumn
\renewcommand{\theHfigure}{appendix.figure.\arabic{figure}}
\renewcommand{\theHtable}{appendix.table.\arabic{table}}
\newcommand{\cF}{\mathcal{F}}

\section{Related work} \label{app:related_work}
We review prior work through the lens of the evaluation question posed in the Introduction: what object is used to summarize a fixed policy-induced Markov reward process, and which part of the analysis concerns control, prediction, fixed-point stabilization, or Markov-chain structure?  This distinction is important because much of the average-reward literature gives valid gain-bias formalisms or efficient control algorithms, while our paper studies a different object: a phase-aware persistent profile together with an anchor-gauge transient component for an arbitrary finite reducible or periodic policy-induced chain.

\subsection{Average-reward control and planning beyond ergodicity.}
Classical dynamic programming treats average-reward MDPs through gain and bias equations, including multichain policy and value iteration methods \cite{puterman2014markov,bertsekas2012dynamic,schweitzer1977asymptotic,schweitzer1979geometric,schweitzer1978foolproof}.  Modern learning work has refined this picture for communicating, weakly communicating, robust, and general average-reward MDPs.  Representative examples include regret and sample-complexity guarantees for communicating or weakly communicating models \cite{auer2008near,bartlett2009regal,zurek2024span,lee2025anchoring}, robust average-reward control via Halpern-type iteration \cite{roch2025robust}, and multichain fixed-point and relative-value methods \cite{gupta2015empirical,lee2025optimal,zurekchen2025faster}.  These works primarily address how to learn, plan, or optimize a policy, usually through an invariant gain and a bias or relative value function.  Our setting is fixed-policy evaluation.  We do not claim that the classical gain-bias object is invalid; rather, we show that in periodic or reducible chains it can be diagnostically entangled because non-invariant persistent modes are stored in the bias.  The peripheral quotient is introduced to separate that persistent information from transient-to-regime cost.

\subsection{Average-reward policy evaluation and temporal-difference methods.}
Average-reward prediction is also well studied in ergodic or unichain settings.  Classical and modern analyses estimate a scalar reward rate and a differential value function, possibly with linear function approximation or off-policy sampling \cite{tsitsiklis1999average,yu2009convergence,abounadi2001learning,wan2021learning,zhang2021finite,zhang2021average,haque2025unbounded}.  Robust average-reward policy evaluation has recently been analyzed under contraction in an appropriate semi-norm \cite{xu2025finite}.  These methods are complementary to ours.  They address stochastic approximation, off-policy learning, function approximation, robustness, or unbounded Markovian noise, whereas our main difficulty is structural: for an arbitrary finite policy-induced chain, the non-identifiable directions are not merely constants or invariant recurrent-class shifts.  Periodic classes introduce additional unit-modulus phase directions, and the evaluation object must decide whether those directions are persistent behavior or transient cost.  Our decomposition answers this question before the stochastic approximation step is run.

\subsection{Nonexpansive and semi-norm-contractive fixed-point methods.}
Several recent average-reward algorithms use anchoring or Halpern iteration to stabilize nonexpansive Bellman operators \cite{halpern1967fixed,lee2025anchoring,roch2025robust}.  More generally, \cite{chen2025seminorm} gives a non-asymptotic theory of semi-norm Lyapunov stability and characterizes when a linear iteration contracts modulo a subspace containing all modes with modulus at least one.  This perspective is closely related to the contraction part of our analysis.  The difference is what is being constructed.  Those results give general fixed-point or control tools once the relevant semi-norm or quotient kernel is specified.  We identify the Markov-chain-specific kernel for fixed-policy evaluation, prove that it is exactly the real peripheral invariant subspace, construct a learnable anchor gauge from recurrent classes, periods, cyclic phases, and absorption weights, and interpret the resulting coordinates as persistent regime profile and transient-to-regime cost.

\subsection{Poisson equations and generalized inverses for Markov chains.}
Poisson equations, fundamental matrices, and generalized inverses are standard tools in finite Markov-chain theory \cite{schweitzer1968perturbation,meyer1975role,puterman2014markov,bertsekas2012dynamic}.  They characterize solutions of $r=\rho+(I-P)h$, sensitivity, and transient corrections once a normalization is chosen.  Recent work also uses Poisson equations as an analytic device for other statistical targets, such as Markov-chain asymptotic variance estimation \cite{agrawal2024variance}.  Our contribution is not the existence of a Poisson equation or a generalized inverse.  It is the decomposition of the same Poisson information along the dynamical boundary between persistence and transience.  The exact comparison theorem shows that the classical normalized gain and bias can be recovered from our pair, and conversely that the classical bias consists of our transient component plus a non-invariant peripheral correction.  This is why the construction is a refinement of classical Markov-chain Poisson theory rather than a replacement for it.

\section{Detailed sample-based estimation theory}\label{app:sample_theory}

This appendix gives the detailed guarantees behind the sample-based estimator summarized in Section~\ref{algorithm}. The estimator first learns the support graph, recurrent classes, cyclic phases, and anchors; then estimates the phase-offset basis from Definition~\ref{def:main_b}; finally it runs projected stochastic approximation for $v_\Pi^\star$ and reconstructs $g_\Pi^\star$ from anchor residuals. Algorithms are listed in Appendix~\ref{app:algorithms}, and proofs are in Appendices~\ref{proof_learning_gauge}--\ref{proof_peripheral}.

\subsection{Learning the anchor gauge}\label{Learning the gauge map}

Assume access to a generative model that returns an independent sample from $P(\cdot| s)$ for any queried state $s$.  Let
\[
p_{\min}:=\min\{P(s'| s):P(s'| s)>0\}.
\]
The first stage samples each state $K$ times, forms the empirical support graph, computes closed strongly connected components, their periods, cyclic partitions, and one anchor per phase.

\begin{theorem}[Exact structural recovery]\label{thm:main_support_recovery}
If $p_{\min}>0$ and $K\ge p_{\min}^{-1}\log(n^2/\delta)$, then with probability at least $1-\delta$ the learned support graph equals the true support graph. Consequently, the recurrent classes, their periods, and their cyclic partitions are correctly recovered up to cyclic relabeling.
\end{theorem}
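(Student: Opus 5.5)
The plan is a union bound over edges. It rests on the observation that the empirical support graph can never contain a spurious edge. It can only miss true edges.

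First I fix what the learner builds. For each state $s$ it draws $K$ independent samples from $P(\cdot\mid s)$. It then places an edge $s\to s'$ exactly when $s'$ appears among those samples. Any sampled $s'$ has $P(s'\mid s)>0$, so the empirical support graph $\widehat G$ is always a subgraph of the true support graph $G$. The only way recovery can fail is that some true edge $(s,s')$ with $P(s'\mid s)>0$ is never sampled.

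For a fixed true edge, the samples are independent, so the probability it is never sampled is
\begin{equation*}
\bigl(1-P(s'\mid s)\bigr)^K\le (1-p_{\min})^K\le e^{-Kp_{\min}}.
\end{equation*}
There are at most $n^2$ true edges. A union bound then gives
\begin{equation*}
\Pr\{\widehat G\neq G\}\le n^2e^{-Kp_{\min}}\le \delta
\end{equation*}
whenever $K\ge p_{\min}^{-1}\log(n^2/\delta)$. This proves the first claim. (A slightly sharper count uses at most $n/p_{\min}$ true edges, but the stated $n^2$ bound suffices.)

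For the consequence, I would argue that every downstream object is a deterministic function of the support graph alone. The functions are:
\begin{itemize}[leftmargin=*,itemsep=1pt,topsep=2pt,parsep=0pt]
\item the strongly connected components;
\item which components are closed, i.e.\ have no outgoing edges in the condensation;
\item the transient set $\mathsf T$, which is the complement of the closed components;
\item the period of each closed class, given by $d_i=\gcd$ of cycle lengths within $F_i$, or equivalently $\gcd$ of $\mathrm{dist}(u)+1-\mathrm{dist}(w)$ over edges $u\to w$ of a BFS tree from a root;
\item the cyclic classes, given by $C_{i,\ell}=\{y:\mathrm{dist}(y)\equiv \ell \pmod{d_i}\}$.
\end{itemize}
All of these are determined by the edge set, because irreducibility, closedness and periodicity of a finite chain depend only on which transition probabilities are positive. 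On the event $\widehat G=G$, each computed object therefore coincides with the true one.

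The only freedom is the choice of root, i.e.\ which cyclic class is labeled phase $0$. Changing the root shifts all indices by a constant modulo $d_i$, which is exactly the "up to cyclic relabeling" caveat. The anchors are then chosen one per recovered cyclic class, so they are also correct up to this relabeling.

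The step I expect to need the most care is justifying that the period and cyclic partition computed from the graph match the Markov-chain definitions. I would invoke the standard fact that in an irreducible finite chain, distance labels modulo $d_i$ give the cyclic decomposition and that one step moves $C_{i,\ell}$ to $C_{i,\ell+1}$. The probabilistic part is routine.
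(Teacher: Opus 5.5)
Your proposal is correct and follows the paper's proof essentially verbatim: no false positives, a per-edge miss probability of at most $e^{-Kp_{\min}}$, a union bound over at most $n^2$ pairs, and then the observation that the closed SCCs (the recurrent classes), the periods (gcd of cycle lengths in $G|_{F_i}$), and the cyclic partitions are determined by the support graph alone. One small wording fix: in your BFS characterization of the period, the gcd of $\mathrm{dist}(u)+1-\mathrm{dist}(w)$ must range over all edges of the induced subgraph on $F_i$, not only the BFS tree edges, on which this quantity is always zero.
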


\begin{remark}[Rare edges and support recovery]
The parameter $p_{\min}$ is not required by the algorithm. It appears only in the sufficient sample size for exact support recovery. This dependence is unavoidable in the worst case. Consider two chains that differ only at one state $s$. In the first chain, the edge $s\to v$ is absent. In the second chain, the edge $s\to v$ has probability $p_{\min}$ and the remaining mass is assigned to an otherwise identical transition. If $K$ samples are drawn from $s$, then under the second chain the edge is unseen with probability $(1-p_{\min})^K$. On this event, the sample transcript is compatible with the first chain. Thus any procedure that distinguishes the two supports with error probability at most $\delta$ must have $(1-p_{\min})^K\le \delta$, which gives $K=\Omega(p_{\min}^{-1}\log(1/\delta))$.
\end{remark}

\begin{remark}[Good event interpretation]
The end to end guarantee is stated on the event that the support graph and the induced recurrent and cyclic structure are recovered correctly. If the support graph is misspecified, the learned projection may have a different kernel and may target a different quotient problem. The theorem therefore gives a high probability exact structure guarantee followed by a conditional estimation guarantee on that event. An unconditional bound can be obtained by adding the failure probability times a uniform bad event bound whenever such a bound is imposed.
\end{remark}

The second stage estimates the basis $b_{i,k}$ by independent absorption episodes. For each transient state $s$, let $\widehat b(s)=(\widehat b_{i,k}(s))_{(i,k)\in\mathcal I}$ be the empirical distribution of the terminal class-phase offset in Definition~\ref{def:main_b}, and let $N:=|\mathcal I|$.

\begin{lemma}[Basis estimation]\label{lem:main_weight_conc}
If
\[
M\ge \frac{8}{\varepsilon_b^2}\left(N+\log \frac{|\Tset|}{\delta}\right),
\]
then with probability at least $1-\delta$,
\[
\max_{s\in\Tset}\|\widehat b(s)-b(s)\|_1\le \varepsilon_b .
\]
\end{lemma}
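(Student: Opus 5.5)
The plan is a standard multinomial concentration argument in $\ell_1$, followed by a union bound over transient states. I work on the structural recovery event of Theorem~\ref{thm:main_support_recovery}, so the class-phase labels $(i,k)$ are well defined; if needed I condition on that event, and the episodes are generated independently of it.

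Fix a transient state $s\in\Tset$. Each absorption episode started at $s$ reaches $\Fset$ almost surely in finite time, because $s$ is transient in a finite chain. Each episode therefore produces one categorical outcome $Z\in\mathcal I$, namely the pair $(\class(X_{\tauF}),\ (\phase(X_{\tauF})-\tauF)\bmod d_i)$. By Definition~\ref{def:main_b}, $\mathbb P(Z=(i,k))=b_{i,k}(s)$, so $b(s)$ is a probability vector on $N$ atoms. The $M$ episodes are i.i.d., and $\widehat b(s)$ is their empirical distribution.

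The core step is the $\ell_1$ deviation bound for an empirical distribution. I use the variational form
\[
\|\widehat b(s)-b(s)\|_1=\max_{A\subseteq\mathcal I}2\bigl(\widehat b(s)(A)-b(s)(A)\bigr).
\]
For each fixed subset $A$, the quantity $\widehat b(s)(A)$ is an average of $M$ i.i.d.\ Bernoulli variables with mean $b(s)(A)$. Hoeffding's inequality gives
\[
\mathbb P\bigl(\widehat b(s)(A)-b(s)(A)\ge \varepsilon_b/2\bigr)\le \exp(-M\varepsilon_b^2/2).
\]
A union bound over the $2^N$ subsets then yields
\[
\mathbb P\bigl(\|\widehat b(s)-b(s)\|_1>\varepsilon_b\bigr)\le 2^N e^{-M\varepsilon_b^2/2}.
\]
An alternative route is McDiarmid's inequality applied to $\|\widehat b-b\|_1$ (bounded differences $2/M$) combined with $\mathbb E\|\widehat b-b\|_1\le\sqrt{N/M}$. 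Either route suffices; the subset argument is cleaner.

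Finally, I take a union bound over $s\in\Tset$, which gives total failure probability at most $|\Tset|\,2^N e^{-M\varepsilon_b^2/2}$. It remains to check that the stated $M$ makes this at most $\delta$. This requires
\[
M\varepsilon_b^2/2\ge N\log 2+\log(|\Tset|/\delta).
\]
The hypothesis gives $M\varepsilon_b^2/2\ge 4\bigl(N+\log(|\Tset|/\delta)\bigr)$, which exceeds the requirement since $\log 2<4$. The constant $8$ is therefore comfortably sufficient.

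The only delicate point is making sure that each episode yields a single well-defined label with exactly the law $b(s)$. This needs $\tauF<\infty$ almost surely, and it needs the correct structure, so that the phase offset modulo $d_i$ is computed with the true cyclic labeling. Cyclic relabeling does not matter, because it permutes the $k$ index consistently in both $\widehat b$ and $b$. The rest of the argument is routine.
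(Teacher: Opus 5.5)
Your proof is correct, but your main route differs from the paper's. The paper uses exactly the alternative you mention in passing. It applies McDiarmid's inequality to $f_s=\|\widehat b(s)-b(s)\|_1$, with bounded differences $2/M$, giving $\Pr(f_s-\E f_s\ge t)\le e^{-Mt^2/2}$. It bounds the mean through the multinomial variance, $\E f_s\le\sqrt N\,(\E\|\widehat b(s)-b(s)\|_2^2)^{1/2}=\sqrt{N(1-\|b(s)\|_2^2)/M}\le\sqrt{N/M}$. A union bound over $\Tset$ then gives $\max_s f_s\le\sqrt{N/M}+\sqrt{2\log(|\Tset|/\delta)/M}$, which the hypothesis on $M$ makes at most $\varepsilon_b$.

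You instead use the variational identity $\|p-q\|_1=2\max_A(p(A)-q(A))$, Hoeffding for each fixed subset, and a union bound over the $2^N$ subsets. This is more elementary: it needs no McDiarmid and no variance computation, and it gives the exponential tail $|\Tset|\,2^N e^{-M\varepsilon_b^2/2}$ directly. The paper's route yields an explicit error bound valid for every $M$, which separates the dimension cost $\sqrt{N/M}$ from the confidence cost. It also avoids the exponentially large union bound, although that bound is harmless here because $\log 2^N$ is linear in $N$. Both routes give $M=O\bigl((N+\log(|\Tset|/\delta))/\varepsilon_b^2\bigr)$ with constants comfortably inside the stated $8$.

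One minor detail: your final comparison $4(N+L)\ge N\log 2+L$, with $L=\log(|\Tset|/\delta)$, uses $L\ge 0$. This holds in the only nontrivial case $\delta<1$, and the paper's choice of $t$ needs the same.
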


Define the learned anchor projection by
\begin{equation}\label{eq:pihat_def_compact}
(\widehat\Pi v)(s)=v(s)-\sum_{(i,k)\in\mathcal I}v(a_{i,k})\widehat b_{i,k}(s).
\end{equation}

\begin{lemma}[Learned projection deviation]\label{lem:main_Pihat_dev}
On the event of Lemma~\ref{lem:main_weight_conc}, assuming the recurrent-state basis values are exact,
\[
\| (\widehat\Pi-\Pi)v\|_\infty\le \varepsilon_b\|v\|_\infty,
\qquad \forall v\in\mathbb R^n.
\]
\end{lemma}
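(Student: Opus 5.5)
The plan is to compute $(\widehat\Pi-\Pi)v$ directly from \eqref{eq:pihat_def_compact} and \eqref{eq:anchor_projection}. Subtracting the two definitions gives
\[
\bigl((\widehat\Pi-\Pi)v\bigr)(s)=-\sum_{(i,k)\in\mathcal I}v(a_{i,k})\bigl(\widehat b_{i,k}(s)-b_{i,k}(s)\bigr),
\]
so the deviation at each state is a linear functional of the anchor values of $v$. I then split into two cases according to whether $s$ is recurrent or transient.

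For $s\in\Fset$, the hypothesis that the recurrent-state basis values are exact gives $\widehat b_{i,k}(s)=b_{i,k}(s)=\mathbf 1\{s\in C_{i,k}\}$. These values are determined by the recovered cyclic partition, which is exact on the structural event of Theorem~\ref{thm:main_support_recovery}. Hence the deviation vanishes on $\Fset$.

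For $s\in\Tset$, I apply H\"older's inequality in the $\ell_1$--$\ell_\infty$ pairing:
\[
\bigl|((\widehat\Pi-\Pi)v)(s)\bigr|\le \max_{(i,k)\in\mathcal I}|v(a_{i,k})|\;\|\widehat b(s)-b(s)\|_1\le \|v\|_\infty\,\varepsilon_b .
\]
The first factor is bounded by $\|v\|_\infty$ because the anchors are states, and the second factor is at most $\varepsilon_b$ on the event of Lemma~\ref{lem:main_weight_conc}. Taking the maximum over all $s$ combines the two cases and gives the claim.

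The only delicate point is bookkeeping rather than analysis. The index set $\mathcal I$ and the anchors used in $\widehat\Pi$ must coincide with those in $\Pi$, which holds on the structural recovery event up to cyclic relabeling. Relabeling permutes the terms of the sum together with the corresponding coordinates of $b$, so the sum and the $\ell_1$ norm are unchanged. No other step is nontrivial.
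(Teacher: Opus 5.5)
Your proposal is correct and follows the paper's proof essentially line for line. Both subtract the two definitions, use exactness of the recurrent-state basis values to get zero on $\Fset$, apply the $\ell_1$--$\ell_\infty$ bound on $\Tset$ with $|v(a_{i,k})|\le\|v\|_\infty$, and then take the maximum over states. Your remark on relabeling is harmless extra bookkeeping.
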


\subsection{Projected stochastic approximation and reconstruction}\label{projSA}
\label{peripheral}

Let
\[
W:=\{v\in\mathbb R^n:v(a_{i,k})=0\text{ for all }(i,k)\in\mathcal I\}
\]
be the anchored subspace.  The quotient semi-norm becomes a genuine norm on $W$.

\begin{lemma}[Norm equivalence on the anchored subspace]\label{lem:W_norm_equiv}
We have $W\cap\mathcal K(P)=\{0\}$. Hence $\|\cdot\|_{\mathrm q}$ is a norm on $W$, and there exists $C_W<\infty$ such that
\[
\|w\|_\infty\le C_W\|w\|_{\mathrm q},\qquad \forall w\in W.
\]
\end{lemma}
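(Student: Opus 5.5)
The plan is to use the anchor-gauge structure from Lemma~\ref{lem:main_b_identify}. First, $W\cap\mathcal K(P)=\{0\}$. Take $w\in W\cap\mathcal K(P)$. By Lemma~\ref{lem:main_b_identify}, the functions $\{b_{i,k}\}$ form a basis of $\mathcal K(P)$, so we can write $w=\sum_{(i,k)\in\mathcal I}c_{i,k}b_{i,k}$. Evaluating at an anchor $a_{j,\ell}$ and using the delta property \eqref{eq:b_anchor_delta} gives $w(a_{j,\ell})=c_{j,\ell}$. Since $w\in W$, each of these values is zero, so every coefficient vanishes and $w=0$.

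A second way to see the same fact uses the projection \eqref{eq:anchor_projection}. For $w\in W$ the anchor values are zero, so $\Pi w=w$. If also $w\in\mathcal K(P)=\ker\Pi$, then $w=\Pi w=0$. I would record this route as well, because it shows $W\subseteq\operatorname{range}(\Pi)$ and hence $W=\operatorname{range}(\Pi)$. To get the reverse inclusion, evaluate \eqref{eq:anchor_projection} at an anchor: by \eqref{eq:b_anchor_delta}, $(\Pi v)(a_{j,\ell})=v(a_{j,\ell})-v(a_{j,\ell})=0$. This identification will be handy later in the sample-based analysis.

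Next, $\|\cdot\|_{\mathrm q}$ is a norm on $W$. By Theorem~\ref{thm:quotient_contraction}, $\|v\|_{\mathrm q}$ is the pullback of a genuine norm on $\mathbb R^n/\mathcal K(P)$ under the quotient map. It is therefore a seminorm on $\mathbb R^n$ whose null set is exactly $\mathcal K(P)$. Restricted to $W$, homogeneity and the triangle inequality carry over directly. Definiteness follows from the first step: if $\|w\|_{\mathrm q}=0$ with $w\in W$, then $w\in\mathcal K(P)\cap W=\{0\}$. Equivalently, the quotient map restricted to $W$ is an injective linear map, and $\|\cdot\|_{\mathrm q}$ on $W$ is the pullback of a norm through that injection.

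Finally, $W$ is a finite-dimensional real vector space, so all norms on it are equivalent. In particular there is a constant $C_W<\infty$ with $\|w\|_\infty\le C_W\|w\|_{\mathrm q}$ for all $w\in W$. Concretely, one can take $C_W=\max\{\|w\|_\infty: w\in W,\ \|w\|_{\mathrm q}=1\}$, which is finite because the unit sphere of $\|\cdot\|_{\mathrm q}$ in $W$ is compact.

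None of these steps is hard. The only point that needs care is confirming that the null set of the pullback seminorm is exactly $\mathcal K(P)$, rather than some other subspace. This follows from Theorem~\ref{thm:quotient_contraction} as stated: $\|\cdot\|_{\mathrm q}$ is a norm on the quotient classes, so $\|v\|_{\mathrm q}=0$ holds exactly when $[v]=0$, that is, when $v\in\mathcal K(P)$. A quantitative value of $C_W$ would depend on the particular construction of $\|\cdot\|_{\mathrm q}$, but the lemma only claims finiteness, so I will not pursue an explicit constant.
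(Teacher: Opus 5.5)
Your proof is correct and follows essentially the paper's route. The paper gets $W\cap\mathcal K(P)=\{0\}$ from $W=\operatorname{range}(\Pi)$ and $\ker(\Pi)=\mathcal K(P)$ (your second argument, as in the proofs of Theorem~\ref{thm:wellposed} and Lemma~\ref{lem:vhat_star_unique}), uses that the pullback seminorm vanishes exactly on $\mathcal K(P)$, and appeals to equivalence of norms on the finite-dimensional space $W$; your first argument via the anchor delta property is the same coefficient-evaluation step the paper uses for linear independence in Lemma~\ref{lem:Pi_kernel}.
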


Given $v_t\in W$, the SA update draws one next-state sample $\widetilde s\sim P(\cdot| s)$ for each $s$, forms $\widehat T(v_t)(s)=r(s)+v_t(\widetilde s)$, and sets
\[
v_{t+1}=\widehat\Pi\big((1-\alpha_t)v_t+\alpha_t\widehat T(v_t)\big).
\]
The next lemma gives the stability reason for using the peripheral quotient.

\begin{lemma}[Perturbed quotient contraction]\label{lem:main_hat_contraction}
On the good structural and basis-estimation events, the learned projected operator is contractive on $W$ with contraction factor
\[
\widehat\gamma=\gamma+O(\varepsilon_b).
\]
In particular, for sufficiently small $\varepsilon_b$, $\widehat\gamma<1$.
\end{lemma}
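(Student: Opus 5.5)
We need to prove Lemma~\ref{lem:main_hat_contraction}: on the good events, the learned projected operator is a contraction on $W$ with factor $\widehat\gamma=\gamma+O(\varepsilon_b)$. The plan is to treat the learned map as a perturbation of the exact one and control the perturbation through Lemma~\ref{lem:main_Pihat_dev}.

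Since both maps are affine, only the linear part matters. The exact linear part on $W$ is $L:=\Pi P$ and the learned one is $\widehat L:=\widehat\Pi P$, because $\widehat{\mathcal T}(v)-\widehat{\mathcal T}(w)=\widehat\Pi P(v-w)$. For the averaged update $(1-\alpha)I+\alpha\widehat{\mathcal T}$ the factor is $(1-\alpha)+\alpha\widehat\gamma$, so it suffices to handle $\widehat L$ itself.

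\textbf{Step 1: $W$ is invariant.} Both maps send $W$ into $W$. For $\Pi$ this follows from \eqref{eq:b_anchor_delta}: $(\Pi v)(a_{j,\ell})=v(a_{j,\ell})-v(a_{j,\ell})=0$. For $\widehat\Pi$, on the structural event the recurrent-state values of $\widehat b$ are exact and the anchors are recurrent, so $\widehat b_{i,k}(a_{j,\ell})=b_{i,k}(a_{j,\ell})$ and the same computation applies. In particular $\operatorname{range}(\Pi)=W$, since $\Pi$ is a projection whose range consists of anchored vectors and $W\cap\mathcal K(P)=\{0\}$.

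\textbf{Step 2: the exact map contracts.} For $w\in W$, $\Pi Pw-Pw\in\ker\Pi=\mathcal K(P)$, so $[\Pi Pw]=[Pw]$. Theorem~\ref{thm:quotient_contraction} then gives $\|\Pi Pw\|_{\mathrm q}=\|Pw\|_{\mathrm q}\le\gamma\|w\|_{\mathrm q}$. By Lemma~\ref{lem:W_norm_equiv}, $\|\cdot\|_{\mathrm q}$ is a genuine norm on $W$.

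\textbf{Step 3: perturbation bound.} For $w\in W$,
\[
\|\widehat\Pi Pw\|_{\mathrm q}\le\|\Pi Pw\|_{\mathrm q}+\|(\widehat\Pi-\Pi)Pw\|_{\mathrm q}.
\]
The pullback seminorm is dominated by the sup norm, $\|u\|_{\mathrm q}\le C_q\|u\|_\infty$, since it is a seminorm on a finite-dimensional space. Combining this with Lemma~\ref{lem:main_Pihat_dev}, $\|P\|_\infty=1$, and Lemma~\ref{lem:W_norm_equiv},
\[
\|(\widehat\Pi-\Pi)Pw\|_{\mathrm q}\le C_q\,\varepsilon_b\,\|Pw\|_\infty\le C_q\,\varepsilon_b\,\|w\|_\infty\le C_qC_W\,\varepsilon_b\,\|w\|_{\mathrm q}.
\]
Hence $\widehat\gamma=\gamma+C_qC_W\varepsilon_b$, and $\widehat\gamma<1$ once $\varepsilon_b<(1-\gamma)/(C_qC_W)$.

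\textbf{Main obstacle.} The delicate point is Step 1 together with the choice of norm. The quotient seminorm is only a norm on a complement of $\mathcal K(P)$, so the learned projection must keep iterates inside $W$. Otherwise the seminorm could vanish on nonzero errors and the bound would say nothing. This is why exactness of $\widehat b$ on recurrent states, guaranteed on $\mathcal E_{\rm str}$, is essential and should be stated explicitly. The constants $C_q$ and $C_W$ depend only on the true chain, so the $O(\varepsilon_b)$ is uniform.
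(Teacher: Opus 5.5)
Your proof is correct and matches the paper's argument for the formal version, Lemma~\ref{lem:projSA_stability_template}. You split $\widehat\Pi Px=\Pi Px+(\widehat\Pi-\Pi)Px$, bound the first term by $\gamma\|x\|_{\mathrm q}$ via Theorem~\ref{thm:quotient_contraction} and $[\Pi y]=[y]$, and bound the second via Lemma~\ref{lem:main_Pihat_dev}, $\|P\|_\infty=1$ and norm equivalence on $W=\operatorname{range}(\widehat\Pi)$, giving $\widehat\gamma=\gamma+C_{{\mathrm q},\infty}C_{\infty,{\mathrm q}}\varepsilon_b$; your check that $\widehat\Pi$ maps into $W$ by anchor exactness is the same one the paper makes in Lemmas~\ref{lem:optionB_stability} and~\ref{lem:vhat_star_unique}.
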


\begin{theorem}[Transient-component estimation]\label{thm:main_sa_rate}
Assume the structural learning and weight estimation steps succeed, the oracle noise has a uniform conditional second moment bound, and $\widehat\gamma<1$. Then the projected SA iterates satisfy
\[
\mathbb E\|[v_T]-[v^\star]\|_{\mathrm q}
\le O(T^{-1/2})+O(\varepsilon_b),
\]
where the constants depend on the contraction and second-moment parameters.
\end{theorem}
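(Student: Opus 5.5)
We prove Theorem~\ref{thm:main_sa_rate} with the standard Lyapunov argument for contractive stochastic approximation, using the quotient norm from Theorem~\ref{thm:quotient_contraction} on the anchored subspace $W$. Two fixed points are involved. Let $\widehat v^\star\in W$ be the fixed point of the learned projected map $\widehat{\mathcal T}(v)=\widehat\Pi(r+Pv)$. The bound then splits as
\[
\|[v_T]-[v^\star]\|_{\mathrm q}\le \|[v_T]-[\widehat v^\star]\|_{\mathrm q}+\|[\widehat v^\star]-[v^\star]\|_{\mathrm q}.
\]
The first term should give $O(T^{-1/2})$ and the second $O(\varepsilon_b)$.

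\textbf{Existence of $\widehat v^\star$ and the bias term.} On the good events, Lemma~\ref{lem:main_hat_contraction} says that $\widehat{\mathcal T}$ is a $\widehat\gamma$-contraction on $(W,\|\cdot\|_{\mathrm q})$. By Lemma~\ref{lem:W_norm_equiv} this is a genuine norm on $W$, which is finite-dimensional and hence complete. Banach's theorem therefore gives a unique $\widehat v^\star\in W$. For the perturbation, write
\[
\widehat v^\star-v^\star=\widehat{\mathcal T}(\widehat v^\star)-\widehat{\mathcal T}(v^\star)+(\widehat\Pi-\Pi)(r+Pv^\star).
\]
This yields
\[
\|\widehat v^\star-v^\star\|_{\mathrm q}\le (1-\widehat\gamma)^{-1}\|(\widehat\Pi-\Pi)(r+Pv^\star)\|_{\mathrm q}.
\]
The quotient seminorm is dominated by $C\|\cdot\|_\infty$, so Lemma~\ref{lem:main_Pihat_dev} bounds the right-hand side by
\[
C\varepsilon_b(\|r\|_\infty+\|v^\star\|_\infty)/(1-\widehat\gamma)=O(\varepsilon_b).
\]
One point needs care here: $\widehat\Pi$ maps into $W$. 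This holds because the estimated basis still satisfies $\widehat b_{i,k}(a_{j,\ell})=\mathbf 1\{(i,k)=(j,\ell)\}$, since anchors are recurrent and their basis values are exact.

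\textbf{The SA term.} Write $\widehat T(v_t)=r+Pv_t+\xi_t$, where $\xi_t(s)=v_t(\widetilde s)-(Pv_t)(s)$ is martingale-difference noise. Since $\widehat\Pi$ is linear, the iteration becomes
\[
v_{t+1}=(1-\alpha_t)\widehat\Pi v_t+\alpha_t\widehat{\mathcal T}(v_t)+\alpha_t\widehat\Pi\xi_t,
\]
with $\widehat\Pi v_t=v_t$ on $W$. The quotient norm is a general norm, not a Euclidean one, so I would switch to a smooth Lyapunov function: the generalized Moreau envelope $M(x)=\min_u\{\tfrac12\|u\|_{\mathrm q}^2+\tfrac1{2\mu}\|x-u\|_2^2\}$, as in the Chen et al.\ framework for contractive SA. This $M$ is $L/\mu$-smooth, equivalent to $\tfrac12\|\cdot\|_{\mathrm q}^2$ up to constants, and for small $\mu$ the drift is still negative with rate $1-\widehat\gamma$ up to a constant factor. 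Applying the one-step recursion to $e_t=v_t-\widehat v^\star$ gives
\[
\mathbb E M(e_{t+1})\le (1-c\alpha_t)\mathbb E M(e_t)+C\alpha_t^2\,\mathbb E\|\widehat\Pi\xi_t\|^2 .
\]
Here the cross term vanishes by the martingale property, and $\|\widehat\Pi\|$ is bounded by $1+N\max|\widehat b|$. The noise second moment is bounded by the assumption, which I would state in the form $\mathbb E\|\xi_t\|^2\le A+B\|e_t\|^2$; the $B$ part is absorbed for small steps. With $\alpha_t=\alpha/(t+t_0)$ and $\alpha c>1$, the usual induction gives $\mathbb E M(e_T)=O(1/T)$. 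Jensen's inequality then gives $\mathbb E\|e_T\|_{\mathrm q}=O(T^{-1/2})$.

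\textbf{Main obstacle.} The hard step is the Lyapunov drift in a non-Euclidean, quotient-induced norm. Specifically, one must show that the envelope preserves negative drift for the operator $\widehat{\mathcal T}$ restricted to $W$, with constants controlled by $\widehat\gamma$ and by the norm equivalence $C_W$. It also has to be checked that the noise bound is consistent with $\widehat\Pi$ having range in $W$. The $O(\cdot)$ constants are then explicit in terms of $(1-\widehat\gamma)^{-1}$, $C_W$ and the second-moment parameters.
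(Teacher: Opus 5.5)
Your argument is correct and has the same skeleton as the paper's proof. You split the error at the fixed point $\widehat v^\star$ of $v\mapsto\widehat\Pi(r+Pv)$ on $W$, obtain $O(T^{-1/2})$ for $\|v_T-\widehat v^\star\|_{\mathrm q}$, and obtain $O(\varepsilon_b)$ for the fixed-point gap.

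You diverge from the paper in the Lyapunov step, and there the obstacle you single out does not arise. The norm built in the proof of Theorem~\ref{thm:quotient_contraction} is not a general norm. It is $\|z\|_{\mathrm q}^2=(\Phi z)^\top H(\Phi z)$ with $H\succ 0$ given by the discrete Lyapunov series, so it comes from an inner product. The paper therefore uses $\|e_t\|_{\mathrm q}^2$ itself as the Lyapunov function, and the martingale cross term vanishes exactly. Combining the $\widehat\gamma$-contraction of $\widehat\Pi P$ on $W$ with $\|\widehat\Pi x\|_\infty\le 2\|x\|_\infty$ then gives directly
\[
\mathbb E\|e_{t+1}\|_{\mathrm q}^2\le\bigl(1-\alpha_t(1-\widehat\gamma)\bigr)^2\,\mathbb E\|e_t\|_{\mathrm q}^2+4\alpha_t^2C_{{\mathrm q},\infty}^2\sigma^2 .
\]
With $\alpha_t=\alpha/(t+t_0)$ and $\alpha(1-\widehat\gamma)>1/2$, this yields an explicit $O(T^{-1/2})$ rate. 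Your generalized Moreau envelope is a valid route, and it would be the right tool if contraction held only in a non-Euclidean norm such as a weighted sup-norm. Here it only adds a smoothing parameter, equivalence constants between $\|\cdot\|_{\mathrm q}$ and $\|\cdot\|_2$ on $W$, and a slightly degraded contraction factor.

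Your bias bound also differs mildly from the paper's. You contract with $\widehat\gamma$ around the true $v^\star$ and get $C\varepsilon_b(\|r\|_\infty+\|v^\star\|_\infty)/(1-\widehat\gamma)$. The paper instead writes $\widehat v^\star-v^\star=(\widehat\Pi-\Pi)(r+P\widehat v^\star)+\Pi P(\widehat v^\star-v^\star)$ and uses the exact factor $\gamma$, which is valid on all of $\mathbb R^n$. It then needs a separate bound on $\|\widehat v^\star\|_\infty$. Your version avoids that extra bound, while the paper's keeps $1-\gamma$ in place of $1-\widehat\gamma$.

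Two further small differences favour your route. Your state-dependent noise model slightly generalizes the paper's uniform $\sigma^2$ assumption, and is closer to the actual oracle, whose noise scales with $\|v_t\|_\infty$. Your Banach-fixed-point existence argument for $\widehat v^\star$ is also simpler than the paper's small-gain argument in Lemma~\ref{lem:vhat_star_unique}.
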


\begin{corollary}[End-to-end query complexity]\label{cor:main_end_to_end_v}
Let $H_{\rm abs}:=\max_{s\in\Tset}\mathbb E_s[\tauF]$. To achieve $\mathbb E\|[v_T]-[v^\star]\|_{\mathrm q}\le\varepsilon$, it suffices to choose
\[
K=\widetilde O(1/p_{\min}),\qquad T=\widetilde O(\varepsilon^{-2}),\qquad M=\widetilde O(N\varepsilon^{-2}),
\]
with expected simulator queries at most
\[
nK+|\Tset|MH_{\rm abs}+nT.
\]
\end{corollary}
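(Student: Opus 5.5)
The statement to prove is Corollary~\ref{cor:main_end_to_end_v}. The plan is to choose each budget so that the corresponding error term in Theorem~\ref{thm:main_sa_rate} is at most a constant fraction of $\varepsilon$. Throughout, I work on the intersection of the structural-recovery event of Theorem~\ref{thm:main_support_recovery} and the basis-estimation event of Lemma~\ref{lem:main_weight_conc}. Each event is run at confidence $\delta/2$, so a union bound gives probability at least $1-\delta$. The $\log(1/\delta)$ and $\log(n^2/\delta)$ factors are absorbed into $\widetilde O$. The expectation bound is read conditionally on this good event, as in the good-event remark.

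\textbf{Structure.} Theorem~\ref{thm:main_support_recovery} with $K=\lceil p_{\min}^{-1}\log(2n^2/\delta)\rceil=\widetilde O(1/p_{\min})$ recovers the support graph, recurrent classes, periods, cyclic partitions and anchors. This stage uses exactly $nK$ simulator queries.

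\textbf{Basis.} Theorem~\ref{thm:main_sa_rate} gives
\[
\mathbb E\|[v_T]-[v^\star]\|_{\mathrm q}\le c_1T^{-1/2}+c_2\varepsilon_b .
\]
Choose $\varepsilon_b=\min\{\varepsilon/(2c_2),\varepsilon_0\}$. Here $\varepsilon_0$ is a threshold, from Lemma~\ref{lem:main_hat_contraction}, small enough that $\widehat\gamma\le(1+\gamma)/2<1$. This keeps the hypothesis $\widehat\gamma<1$ of Theorem~\ref{thm:main_sa_rate} in force, and also keeps $c_1,c_2$ (which depend on $\widehat\gamma$) uniformly bounded. Lemma~\ref{lem:main_weight_conc} then requires
\[
M\ge 8\varepsilon_b^{-2}\bigl(N+\log(2|\Tset|/\delta)\bigr)=\widetilde O(N\varepsilon^{-2}).
\]
For the query count of this stage, each absorption episode from transient state $s$ stops at $\tauF$. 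Its expected length is therefore $\mathbb E_s[\tauF]\le H_{\rm abs}$. By linearity of expectation over the $|\Tset|M$ episodes, the expected number of queries is at most $|\Tset|MH_{\rm abs}$.

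\textbf{SA.} Take $T=\lceil(2c_1/\varepsilon)^2\rceil=O(\varepsilon^{-2})$, so that $c_1T^{-1/2}\le\varepsilon/2$. Combined with $c_2\varepsilon_b\le\varepsilon/2$, the two terms sum to at most $\varepsilon$. Each SA iteration draws one next-state sample per state, so this stage uses $nT$ queries. Adding the three stages gives the stated bound $nK+|\Tset|MH_{\rm abs}+nT$.

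The main obstacle is the circular dependence between the contraction factor and the basis accuracy. The constants in Theorem~\ref{thm:main_sa_rate} depend on $\widehat\gamma$, which in turn depends on $\varepsilon_b$. I resolve this by first fixing $\varepsilon_0$ so that $\widehat\gamma$ is bounded away from $1$. The constants, including $C_W$ and the noise moment bound, are then uniform for all $\varepsilon_b\le\varepsilon_0$, and the choices of $T$ and $M$ above are legitimate. One further point must be stated explicitly: $M$ is a number of episodes, while the query count involves the random episode lengths. This is why the corollary bounds expected queries rather than a deterministic number of queries.
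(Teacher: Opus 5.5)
Your proposal is correct and follows essentially the paper's route: the formal version (Corollary~\ref{cor:end_to_end_v}) likewise fixes $K$ via the support-recovery bound, picks $\varepsilon_b$ small enough that $\widehat\gamma<1$ and the fixed-point gap is a fraction of $\varepsilon$, takes $M$ from Lemma~\ref{lem:bhats} and $T$ from the SA rate, and counts $nK$, $|\Tset|MH_{\rm abs}$ and $nT$ queries stage by stage. The differences are minor. You split $\varepsilon$ into halves rather than the thirds used in Corollary~\ref{cor:optionB_rate}, and your threshold $\varepsilon_0$ forcing $\widehat\gamma\le(1+\gamma)/2$ explicitly resolves the dependence of the constants on $\varepsilon_b$. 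The paper handles that dependence only implicitly, through Corollary~\ref{cor:M_for_contraction} and the bound \eqref{eq:vhat_infty_bound}.
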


\begin{remark}[Structural parameters]
The quantity $N=|\mathcal I|=\sum_i d_i$ is the number of recurrent class and phase coordinates. Since $d_i\le |F_i|$ for each recurrent class, we have $N\le |\mathsf F|\le n$. The quantity $H_{\rm abs}=\max_{s\in\Tset}\mathbb E_s[\tauF]$ only converts absorption episodes into expected simulator queries. The norm equivalence constant $C_W$ and the constants in Theorem~\ref{thm:main_g_rate} depend on the conditioning of the quotient and the anchor gauge geometry, but not on the sample budgets $K,M,T,J$.
\end{remark}

Finally, estimate the anchor residual coordinates
\[
\widehat\theta_{i,k}=r(a_{i,k})+\widehat{P v_T}(a_{i,k})-v_T(a_{i,k})
\]
using $J$ samples per anchor, and reconstruct $\widehat g(s)=\sum_{(i,k)}\widehat\theta_{i,k}\widehat b_{i,k}(s)$.

\begin{theorem}[Persistent-profile reconstruction]\label{thm:main_g_rate}
Under the conditions of Theorem~\ref{thm:main_sa_rate}, if the iterates lie in $W$, then
\[
\mathbb E\|\widehat g-g_\Pi^\star\|_\infty
\le C_1\mathbb E\|v_T-v_\Pi^\star\|_{\mathrm q}
+C_2\sqrt{\frac{\log N}{J}}+C_3\varepsilon_b.
\]
Consequently,
\[
\mathbb E\|\widehat g-g_\Pi^\star\|_\infty
\le O(T^{-1/2})+O\!\left(\sqrt{\frac{\log N}{J}}\right)+O(\varepsilon_b).
\]
\end{theorem}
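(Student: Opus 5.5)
The proof splits the reconstruction error into three pieces: error from the SA iterate, Monte Carlo error in the anchor residuals, and basis error. Write $\theta^\star_{i,k}\coloneqq g_\Pi^\star(a_{i,k})$. Since $g_\Pi^\star\in\mathcal K(P)$ by Theorem~\ref{thm:wellposed}, and $\{b_{i,k}\}$ is a basis of $\mathcal K(P)$ satisfying \eqref{eq:b_anchor_delta} by Lemma~\ref{lem:main_b_identify}, we have $g_\Pi^\star=\sum_{(i,k)}\theta^\star_{i,k}b_{i,k}$. Moreover, $\theta^\star_{i,k}=r(a_{i,k})+(Pv_\Pi^\star)(a_{i,k})-v_\Pi^\star(a_{i,k})$ by \eqref{eq:gstar}. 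Also set $\bar\theta_{i,k}\coloneqq r(a_{i,k})+(Pv_T)(a_{i,k})-v_T(a_{i,k})$. Then
\[
\widehat g-g_\Pi^\star=\sum_{(i,k)}(\widehat\theta_{i,k}-\bar\theta_{i,k})\widehat b_{i,k}+\sum_{(i,k)}(\bar\theta_{i,k}-\theta^\star_{i,k})\widehat b_{i,k}+\sum_{(i,k)}\theta^\star_{i,k}(\widehat b_{i,k}-b_{i,k}).
\]
At each state $s$ the vector $(\widehat b_{i,k}(s))_{(i,k)}$ is a probability vector: on recurrent states it is an exact indicator, and on transient states it is an empirical distribution. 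Hence $\|\sum_{(i,k)} c_{i,k}\widehat b_{i,k}\|_\infty\le\max_{(i,k)}|c_{i,k}|$ for any coefficients $c$. This is the only structural fact needed to pass from anchor coordinates to all states.

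\textbf{Third term.} It is bounded by $\max|\theta^\star|\cdot\max_{s\in\Tset}\|\widehat b(s)-b(s)\|_1\le\|g_\Pi^\star\|_\infty\varepsilon_b$ on the event of Lemma~\ref{lem:main_weight_conc}. This gives $C_3=\|g_\Pi^\star\|_\infty$, which is finite and depends only on $(P,r)$.

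\textbf{Second term.} The $\widehat b$-averaging bound reduces it to $\max_{(i,k)}|(P-I)(v_T-v_\Pi^\star)(a_{i,k})|\le 2\|v_T-v_\Pi^\star\|_\infty$. Both $v_T$ and $v_\Pi^\star$ lie in $W$: the iterates do by hypothesis, and $v_\Pi^\star$ does because $\Pi$ zeroes anchor values by \eqref{eq:b_anchor_delta}. So Lemma~\ref{lem:W_norm_equiv} gives $\|v_T-v_\Pi^\star\|_\infty\le C_W\|v_T-v_\Pi^\star\|_{\mathrm q}$, and we take $C_1=2C_W$. This step is the one that actually uses the hypothesis that the iterates lie in $W$; without it the quotient norm would not control the sup norm.

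\textbf{First term (main obstacle).} This is the Monte Carlo term. Conditionally on $v_T$, each $\widehat\theta_{i,k}-\bar\theta_{i,k}$ is an average of $J$ i.i.d.\ centered variables $v_T(Y_j)-(Pv_T)(a_{i,k})$, bounded in absolute value by $2\|v_T\|_\infty$. A maximal inequality for sub-Gaussian variables (Hoeffding plus a union bound over $N$ anchors, then integrating the tail) gives a conditional expectation of at most $c\|v_T\|_\infty\sqrt{\log(2N)/J}$. The difficulty is that $\|v_T\|_\infty$ is random, so this is not yet a constant times $\sqrt{\log N/J}$. I would handle it in one of two ways. The first option is to bound $\mathbb E\|v_T\|_\infty\le\|v_\Pi^\star\|_\infty+C_W\,\mathbb E\|v_T-v_\Pi^\star\|_{\mathrm q}$ and use the fact that the SA error is bounded by Theorem~\ref{thm:main_sa_rate}. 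The second option is to assume, as is standard, that the iterates are kept in a bounded ball, for instance by the stability of the contractive projected map with $\widehat\gamma<1$ from Lemma~\ref{lem:main_hat_contraction}. Either way, $C_2$ absorbs $\sup\mathbb E\|v_T\|_\infty$ and the resulting product term is lower order. I would not expect sub-Gaussian tails beyond boundedness; variance alone would only give $\sqrt{N/J}$, so boundedness of $v_T$ is what buys the $\log N$.

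Summing the three bounds gives the first display. Substituting Theorem~\ref{thm:main_sa_rate} for $\mathbb E\|v_T-v_\Pi^\star\|_{\mathrm q}$ then gives the consequence, using that $[v_T]-[v^\star]$ and $v_T-v_\Pi^\star$ have the same quotient norm.
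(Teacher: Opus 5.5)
Your proof is correct and follows the paper's argument. It uses the same split of $\widehat g-g_\Pi^\star$ into anchor-coordinate error weighted by $\widehat b$ plus $\theta^\star$ times the basis error, the simplex property of $\widehat b$ to pass from anchors to all states, a conditional sub-Gaussian maximal inequality over the $N$ anchors, row-stochasticity for the bias term, and norm equivalence on $W$. The only cosmetic differences are that the paper uses $v_T(a_{i,k})=v_\Pi^\star(a_{i,k})=0$ to get constant $1$ rather than your $2$ on the bias term, and it controls the random $\|v_T\|_\infty$ via Cauchy--Schwarz with $B_v^2=\mathbb E\|v_T\|_\infty^2$ (bounded by the stability lemma) instead of your first-moment bound.
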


Together with Corollary~\ref{cor:return_error}, these guarantees convert decomposition estimation into finite-horizon evaluation error.  The learned output estimates the objects introduced in Section~\ref{Quotient}: the persistent regime profile, the anchor-gauge transient component, and the classical average reward obtained from the invariant part of the profile.

\section{Algorithms for the sample-based estimator}\label{app:algorithms}
The main text summarizes the estimator; the detailed procedures are listed here.

\begin{algorithm}[t]
\caption{Learn support graph, recurrent classes, cyclic structure, and anchors}
\label{alg:main_learn_structure}
\begin{algorithmic}[1]
\REQUIRE Generative model for $P$, sample budget $K$
\ENSURE Closed classes $\{F_i\}_{i=1}^m$, recurrent set $\mathcal F$, transient set $\mathcal T$
\ENSURE Periods $\{d_i\}_{i=1}^m$, cyclic sets $\{C_{i,k}\}$, maps $\mathrm{cls}$ and $\mathrm{phase}$
\ENSURE Anchors $\{a_{i,k}\}$ and index set $\mathcal I$
\STATE Initialize directed edge set $\widehat E\leftarrow\emptyset$
\FOR{$s\in\mathcal S$}
    \FOR{$k=1,2,\dots,K$}
        \STATE Draw $\tilde s\sim P(s,\cdot)$
        \STATE Add edge $(s,\tilde s)$ to $\widehat E$
    \ENDFOR
\ENDFOR
\STATE Form directed graph $\widehat G=(\mathcal S,\widehat E)$
\STATE Compute strongly connected components of $\widehat G$ and identify closed components
\STATE Let $\{F_i\}_{i=1}^m$ be the closed components
\STATE Set $\mathcal F\leftarrow \cup_{i=1}^m F_i$ and $\mathcal T\leftarrow \mathcal S\setminus \mathcal F$
\FOR{each $F_i$}
    \STATE Compute period $d_i$
    \STATE Compute cyclic partition $F_i=\cup_{k=0}^{d_i-1} C_{i,k}$ with transitions from $C_{i,k}$ to $C_{i,k+1 \bmod d_i}$
    \FOR{$k=0,1,\dots,d_i-1$}
        \STATE Choose one anchor $a_{i,k}\in C_{i,k}$
    \ENDFOR
\ENDFOR
\STATE Define $\mathrm{cls}(s)=i$ and $\mathrm{phase}(s)=k$ for $s\in C_{i,k}\subseteq\mathcal F$
\STATE Define $\mathcal I=\{(i,k): i\in\{1,\dots,m\},\ k\in\{0,\dots,d_i-1\}\}$
\STATE \textbf{return} $\{F_i\}$, $\mathcal F$, $\mathcal T$, $\{d_i\}$, $\{C_{i,k}\}$, $\mathrm{cls}$, $\mathrm{phase}$, $\{a_{i,k}\}$, $\mathcal I$
\end{algorithmic}
\end{algorithm}

\begin{algorithm}[t]
\caption{Estimate phase offset absorption weights}
\label{alg:main_estimate_weights}
\begin{algorithmic}[1]
\REQUIRE Generative model for $P$
\REQUIRE Recurrent set $\mathcal F$, transient set $\mathcal T$
\REQUIRE Periods $\{d_i\}$, cyclic sets $\{C_{i,k}\}$, maps $\mathrm{cls}$ and $\mathrm{phase}$, index set $\mathcal I$
\REQUIRE Episode budget $M$
\ENSURE Estimated weights $\{\widehat b_{i,k}(s)\}_{s\in\mathcal S,(i,k)\in\mathcal I}$
\STATE Initialize $\widehat b_{i,k}(s)\leftarrow 0$ for all $s\in\mathcal S$ and $(i,k)\in\mathcal I$
\FOR{$s\in \mathcal F$}
    \FOR{$(i,k)\in \mathcal I$}
        \STATE Set $\widehat b_{i,k}(s)\leftarrow \mathbf 1\{s\in C_{i,k}\}$
    \ENDFOR
\ENDFOR
\FOR{$s\in \mathcal T$}
    \STATE Initialize counts $c_{i,k}\leftarrow 0$ for all $(i,k)\in\mathcal I$
    \FOR{$j=1,2,\dots,M$}
        \STATE Set $X\leftarrow s$ and $\tau\leftarrow 0$
        \WHILE{$X\notin \mathcal F$}
            \STATE Draw $X'\sim P(X,\cdot)$, set $X\leftarrow X'$, set $\tau\leftarrow \tau+1$
        \ENDWHILE
        \STATE Set $i\leftarrow \mathrm{cls}(X)$ and $\ell\leftarrow \mathrm{phase}(X)$
        \STATE Set $k\leftarrow (\ell-\tau)\bmod d_i$
        \STATE Increment $c_{i,k}\leftarrow c_{i,k}+1$
    \ENDFOR
    \FOR{$(i,k)\in\mathcal I$}
        \STATE Set $\widehat b_{i,k}(s)\leftarrow c_{i,k}/M$
    \ENDFOR
\ENDFOR
\STATE \textbf{return} $\widehat b$
\end{algorithmic}
\end{algorithm}

\begin{algorithm}[t]
\caption{Build learned gauge map}
\label{alg:main_build_gauge}
\begin{algorithmic}[1]
\REQUIRE Anchors $\{a_{i,k}\}$, index set $\mathcal I$, estimated weights $\widehat b$
\ENSURE A callable linear map $\widehat\Pi:\mathbb R^n\to\mathbb R^n$
\STATE Define $\widehat\Pi$ by the rule below for any $v\in\mathbb R^n$ and any $s\in\mathcal S$
\[
(\widehat\Pi v)(s)\leftarrow v(s)-\sum_{(i,k)\in\mathcal I} v(a_{i,k})\widehat b_{i,k}(s)
\]
\STATE \textbf{return} $\widehat\Pi$
\end{algorithmic}
\end{algorithm}

\begin{algorithm}[t]
\caption{Projected quotient stochastic approximation}
\label{alg:main_psa}
\begin{algorithmic}[1]
\REQUIRE Generative model for $P$, reward vector $r\in\mathbb R^n$
\REQUIRE Learned map $\widehat\Pi$, stepsizes $\{\alpha_t\}_{t=0}^{T-1}$, iteration budget $T$
\ENSURE Value iterate $v_T\in\mathbb R^n$ in the range of $\widehat\Pi$
\STATE Initialize $v_0\in\mathbb R^n$ and set $v_0\leftarrow \widehat\Pi v_0$
\FOR{$t=0,1,\dots,T-1$}
    \STATE Initialize vector $\widehat T(v_t)\in\mathbb R^n$
    \FOR{$s\in\mathcal S$}
        \STATE Draw $\tilde s\sim P(s,\cdot)$
        \STATE Set $\widehat T(v_t)(s)\leftarrow r(s)+v_t(\tilde s)$
    \ENDFOR
    \STATE Set $v_{t+\frac12}\leftarrow (1-\alpha_t)v_t+\alpha_t\,\widehat T(v_t)$
    \STATE Set $v_{t+1}\leftarrow \widehat\Pi v_{t+\frac12}$
\ENDFOR
\STATE \textbf{return} $v_T$
\end{algorithmic}
\end{algorithm}

\begin{algorithm}[t]
\caption{Estimate and reconstruct persistent regime profile}
\label{alg:main_g}
\begin{algorithmic}[1]
\REQUIRE Generative model for $P$, reward vector $r$, iterate $v_T$
\REQUIRE Anchors $\{a_{i,k}\}$, index set $\mathcal I$, weights $\widehat b$, sample budget $J$
\ENSURE Coordinate estimates $\widehat\theta$ and reconstructed profile $\widehat g$
\FOR{each $(i,k)\in\mathcal I$}
    \STATE Draw $J$ independent samples $Y^{(1)},\dots,Y^{(J)}$ from $P(a_{i,k},\cdot)$
    \STATE Set $\widehat{(Pv_T)}(a_{i,k})\leftarrow \frac{1}{J}\sum_{j=1}^J v_T(Y^{(j)})$
    \STATE Set $\widehat\theta_{i,k}\leftarrow r(a_{i,k})+\widehat{(Pv_T)}(a_{i,k})-v_T(a_{i,k})$
\ENDFOR
\FOR{$s\in\mathcal S$}
    \STATE Set $\widehat g(s)\leftarrow \sum_{(i,k)\in\mathcal I}\widehat\theta_{i,k}\,\widehat b_{i,k}(s)$
\ENDFOR
\STATE \textbf{return} $\widehat\theta$ and $\widehat g$
\end{algorithmic}
\end{algorithm}

\section{Additional proofs for Section~\ref{justification}}\label{app:evaluation_extra_proofs}

\begin{proof}[Proof of Lemma~\ref{lem:app_telescoping}]
By the definition of $g_v$, we have $r = g_v + v - Pv$. Multiplying by $P^t$ gives
$P^t r = P^t g_v + P^t v - P^{t+1}v$.
Summing over $t=0,\dots,T-1$ telescopes:
$\sum_{t=0}^{T-1} P^t r = \sum_{t=0}^{T-1} P^t g_v + v - P^T v$.
Evaluating at coordinate $s$ gives \eqref{eq:app_telescoping}.
\end{proof}

\begin{proof}[Proof of Proposition~\ref{prop:app_vstar_transient_cost}]
From \eqref{eq:gstar}, rearrange to $r-g^\star = v^\star - Pv^\star$.
Define
$
M_t := v^\star(X_t) + \sum_{u=0}^{t-1} (r(X_u)-g^\star(X_u)).
$
A one-step conditional expectation gives
\[
\mathbb E[M_{t+1}-M_t| X_t]
=
(Pv^\star)(X_t)-v^\star(X_t) + (r(X_t)-g^\star(X_t)) = 0,
\]
so $(M_t)_{t\ge 0}$ is a martingale.
By optional stopping at $\tau_{\mathcal A}$ (integrability holds since $\mathbb E_s[\tau_{\mathcal A}]<\infty$ and everything is bounded),
$\mathbb E_s[M_{\tau_{\mathcal A}}]=\mathbb E_s[M_0]=v^\star(s)$.
Using the anchor normalization gives $v^\star(X_{\tau_{\mathcal A}})=0$ almost surely, yielding \eqref{eq:app_vstar_transient_cost}.
\end{proof}

\begin{proof}[Proof of Proposition~\ref{prop:deterministic_cycle}]
For a deterministic $d$-cycle, $P^d=I$, so all eigenvalues are $d$th roots of unity and the real peripheral subspace is all of $\mathbb R^d$. Hence any projection with kernel $\mathcal K(P)$ is the zero projection, and Theorem~\ref{thm:wellposed} gives $v_\Pi^\star=0$ and $g_\Pi^\star=r$.
The classical gain is the Ces\`aro average $\rho=\bar r\mathbf 1$. The normalized bias equation is exactly $h-Ph=r-\rho$ with $P^\infty h=0$, which gives the displayed recurrence and zero-mean condition.
For the even-cycle example $d=2m$, the recurrence decreases by $1/2$ for the first $m$ states and increases by $1/2$ for the remaining $m$ states. The zero-mean solution has maximum $m/4$ and minimum $-m/4$, so $\|h\|_\infty=m/4=d/8$.
\end{proof}

\begin{proof}[Proof of Corollary~\ref{cor:return_error}]
Subtract the exact identity \eqref{eq:app_return_decomp_star} from the definition of $\widehat J_T$:
\[
\widehat J_T-J_T=\sum_{t=0}^{T-1}P^t(\widehat g-g^\star)+(\widehat v-v^\star)-P^T(\widehat v-v^\star).
\]
Since $P$ is row-stochastic, $\|P^t x\|_\infty\le\|x\|_\infty$ for all $t\ge0$. Taking sup norms gives the result.
\end{proof}

\section{Relation to plug in estimation}\label{app:plugin}

A natural alternative is to estimate the full transition matrix $\widehat P$ and then compute recurrent classes, periods, the peripheral subspace, the gauge projection, and the Poisson solution from the learned model. This approach is valid when $\widehat P$ is accurate enough to recover the relevant structure, and on small dense tabular problems it may have smaller final numerical error. Our claim is not that the proposed estimator uniformly dominates plug in estimation. The distinction is that the proposed procedure estimates only the structural and quotient objects needed for the decomposition, avoids storing a dense transition matrix during stochastic approximation, and separates the sources of error according to the decomposition. This is why the guarantees are stated in terms of support recovery, anchor basis estimation, quotient stochastic approximation, and residual reconstruction.

\section{Proofs for Section~\ref{Quotient}}

\subsection{Proof of Theorem \ref{thm:minimal_quotient}}
\label{proof_minimal_quotient}

\begin{proof}
Write the three statements in the order used in Theorem~\ref{thm:minimal_quotient}. We first compare statements two and three. By definition, $\mathcal K(P)$ is the real span of all generalized eigenmodes whose eigenvalues have modulus one. Hence quotienting by $K$ kills every non decaying mode exactly when $\mathcal K(P)\subseteq K$.

Assume next that $\mathcal K(P)\subseteq K$. Since $P$ is a finite state stochastic matrix, all eigenvalues lie in the closed unit disk. The finite dimensional spectral decomposition gives a real $P$ invariant subspace $L$ such that $\mathbb R^n=\mathcal K(P)\oplus L$ and $\rho(P|_L)<1$. Since $K$ is invariant and contains $\mathcal K(P)$, the quotient $\mathbb R^n/K$ is naturally a quotient of $L$. The induced operator on this quotient has spectral radius at most $\rho(P|_L)$, and is therefore strictly smaller than one.

It remains to prove the converse. Suppose $\mathcal K(P)$ is not contained in $K$. Then some nonzero peripheral mode survives in the quotient. After complexification, the quotient operator has an eigenvalue $\lambda$ with $|\lambda|=1$ on the image of $\mathcal K_{\mathbb C}(P)$. Hence $\rho(\bar P_K)\ge1$. This proves that strict stability of the quotient forces $\mathcal K(P)\subseteq K$. Combining the two directions proves the equivalence and the minimality of $\mathcal K(P)$.
\end{proof}

\subsection{Proof of Theorem \ref{thm:quotient_contraction}} \label{proof_quotient_contraction}

\begin{proof}
We start by showing that $\bar P([v])=[Pv]$ is well-defined on $\mathbb R^n/\mathcal K(P)$.
By construction, $\mathcal K_{\mathbb C}(P)$ is a $P$-invariant subspace of $\mathbb C^n$ since it is a direct sum of generalized eigenspaces.
Hence its realification
\begin{equation}
\mathcal K(P)=\{\Re z:z\in\mathcal K_{\mathbb C}(P)\}+\{\Im z:z\in\mathcal K_{\mathbb C}(P)\}\subseteq\mathbb R^n
\end{equation}
is also $P$-invariant. Therefore, if $v-w\in \mathcal K(P)$ then
$Pv-Pw=P(v-w)\in \mathcal K(P)$, which implies $[Pv]=[Pw]$.
Thus the induced operator $\bar P([v])=[Pv]$ on $\mathbb R^n/\mathcal K(P)$ is well-defined and linear.

Since $P$ is row-stochastic, $\|P\|_\infty=1$, hence every eigenvalue $\lambda$ of $P$ satisfies $|\lambda|\le \rho(P)\le \|P\|_\infty=1$.

Work over $\mathbb C^n$ and let $J$ be the Jordan normal form of $P$:
there exists an invertible matrix $S$ such that
\[
S^{-1} P S = J = \mathrm{diag}(J_1,\dots,J_L),
\]
where each $J_\ell$ is a Jordan block associated with an eigenvalue $\lambda_\ell$.
Reorder the blocks so that the first $L_0$ blocks correspond exactly to eigenvalues with $|\lambda_\ell|=1$.
Let $U\subseteq \mathbb C^n$ be the span of the columns of $S$ associated with these first $L_0$ blocks.
By construction, $U$ is precisely the generalized invariant subspace spanned by generalized eigenvectors with $|\lambda|=1$,
so $U=\mathcal K_{\mathbb C}(P)$.

Consider the induced operator on the quotient $\mathbb C^n/U$.
In the basis induced by $S$, the action of $P$ is block diagonal, and quotienting by $U$ removes the first $L_0$ blocks.
Hence the induced map $\overline{P}_{\mathbb C}$ on $\mathbb C^n/U$ is similar to the block diagonal matrix
$\mathrm{diag}(J_{L_0+1},\dots,J_L)$, so its spectrum is exactly
$\{\lambda_\ell: \ell>L_0\}$ (counting algebraic multiplicities), and therefore we have
\begin{equation}
\rho(\overline{P}_{\mathbb C})=\max_{\ell>L_0} |\lambda_\ell|.
\end{equation}
All eigenvalues remaining after removing the $|\lambda|=1$ generalized eigenspace satisfy $|\lambda_\ell|<1$,
hence $\rho(\overline{P}_{\mathbb C})<1$.

Finally, the real quotient $\mathbb R^n/\mathcal K(P)$ is the realification of $\mathbb C^n/\mathcal K_{\mathbb C}(P)$,
and $\bar P$ is the corresponding real linear operator. Therefore
$\rho(\bar P)=\rho(\overline{P}_{\mathbb C})<1$, which yields $\rho(\bar P) < 1$.

We now construct the contractive quotient norm. Fix any $\gamma\in(\rho(\bar P),1)$ and define the scaled operator
\[
\bar B := \gamma^{-1}\bar P \quad \text{on } \mathbb R^n/\mathcal K(P).
\]
Then $\rho(\bar B)=\rho(\bar P)/\gamma<1$.

Let $d:=\dim(\mathbb R^n/\mathcal K(P))$ and fix any linear isomorphism
$\Phi:\mathbb R^n/\mathcal K(P)\to \mathbb R^d$.
Let $B\in\mathbb R^{d\times d}$ denote the matrix representation of $\bar B$ under $\Phi$:
$B := \Phi\circ \bar B\circ \Phi^{-1}$.
Then $\rho(B)=\rho(\bar B)<1$.

Define the following discrete-time Lyapunov series \cite{boyd1994linear}
\begin{equation}\label{eq:Lyap_series}
H := \sum_{t=0}^\infty (B^\top)^t B^t \in \mathbb R^{d\times d}.
\end{equation}
Since $\rho(B)<1$, the series converges in any matrix norm and $H$ is symmetric positive definite.
This Lyapunov series construction is standard in discrete-time stability theory.
Moreover, $H$ satisfies the discrete Lyapunov equation
\begin{equation}\label{eq:Lyap_eq}
H - B^\top H B = I_d,
\end{equation}
which follows by multiplying \eqref{eq:Lyap_series} by $B^\top(\cdot)B$ and telescoping.

Now define a norm on the quotient by
\begin{equation}\label{eq:q_norm_def}
\|z\|_{\mathrm q} := \sqrt{ (\Phi z)^\top H (\Phi z)} \qquad \text{for } z\in \mathbb R^n/\mathcal K(P).
\end{equation}
This is a valid norm because $H\succ 0$ and $\Phi$ is an isomorphism.

We claim that $\bar P$ is $\gamma$-contractive under $\|\cdot\|_{\mathrm q}$.
Indeed, for any $z\in \mathbb R^n/\mathcal K(P)$, let $x:=\Phi z\in\mathbb R^d$. Then
\[
\begin{aligned}
\|\bar P z\|_{\mathrm q}^2
&= \| \gamma \bar B z\|_{\mathrm q}^2
 = \gamma^2 \| \bar B z\|_{\mathrm q}^2 \\
&= \gamma^2 (\Phi(\bar B z))^\top H (\Phi(\bar B z))
 = \gamma^2 (Bx)^\top H (Bx) \\
&= \gamma^2 x^\top B^\top H B x .
\end{aligned}
\]
Using \eqref{eq:Lyap_eq}, $B^\top H B = H - I_d \preceq H$, hence
\[
\|\bar P z\|_{\mathrm q}^2 \le \gamma^2 \, x^\top H x = \gamma^2 \|z\|_{\mathrm q}^2,
\]
which gives the quotient contraction.

We now lift the quotient norm back to $\mathbb{R}^n$. Define $\|v\|_{\mathrm q} := \|[v]\|_{\mathrm q}$ for $v\in\mathbb R^n$.
This is a semi-norm because it is the pullback of a norm by the quotient map.
Its kernel is
\[
\ker(\|\cdot\|_{\mathrm q})=\{v:\|[v]\|_{\mathrm q}=0\}=\{v:[v]=0\}=\mathcal K(P),
\]
and for any $v\in\mathbb R^n$ we have
\[
\|Pv\|_{\mathrm q}
= \|[Pv]\|_{\mathrm q}
= \|\bar P([v])\|_{\mathrm q}
\le \gamma \|[v]\|_{\mathrm q}
= \gamma \|v\|_{\mathrm q}.
\]
This proves the pullback contraction and completes the proof of Theorem~\ref{thm:quotient_contraction}.
\end{proof}

\subsection{Proof of Theorem \ref{thm:wellposed}} \label{wellposed}

We start by the following proposition on the semi-norm,
\begin{proposition}\label{lem:Pi_identity}
Let $\Pi$ be a linear projection with $\ker(\Pi)=\mathcal K(P)$.
Then for all $v\in\R^n$, we have $[\,\Pi v\,]=[v]$. In particular, for the quotient semi-norm
$\|v\|_{\mathrm q}:=\|[v]\|_{\mathrm q}$, $\|\Pi v\|_{\mathrm q}=\|v\|_{\mathrm q}$.
\end{proposition}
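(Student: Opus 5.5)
The plan is to reduce the statement to the elementary fact that a projection fixes the complementary direction modulo its kernel. Since $\Pi$ is a linear projection, $\Pi^2=\Pi$. For any $v\in\mathbb R^n$ we therefore have $\Pi(v-\Pi v)=\Pi v-\Pi^2 v=0$, so $v-\Pi v\in\ker(\Pi)$. The hypothesis $\ker(\Pi)=\mathcal K(P)$ then gives $v-\Pi v\in\mathcal K(P)$. By the definition of the quotient $\mathbb R^n/\mathcal K(P)$, this is exactly the statement $[\Pi v]=[v]$.

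For the semi-norm claim, recall from the proof of Theorem~\ref{thm:quotient_contraction} that $\|\cdot\|_{\mathrm q}$ on $\mathbb R^n$ is defined as the pullback $\|v\|_{\mathrm q}=\|[v]\|_{\mathrm q}$ of a genuine norm on the quotient. Combining this with the first step gives
\[
\|\Pi v\|_{\mathrm q}=\|[\Pi v]\|_{\mathrm q}=\|[v]\|_{\mathrm q}=\|v\|_{\mathrm q}.
\]
No property of the specific Lyapunov construction is needed here, only that the semi-norm factors through the quotient map.

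There is essentially no obstacle. The only point to state carefully is that we use $v-\Pi v\in\ker\Pi$, which comes from idempotence, rather than any property of $\operatorname{range}(\Pi)$. I would also remark that the argument does not require $\Pi$ to commute with $P$, which matters later when this proposition is combined with Theorem~\ref{thm:quotient_contraction} to show that $\mathcal T_\Pi$ is a $\gamma$-contraction in $\|\cdot\|_{\mathrm q}$ on $\operatorname{range}(\Pi)$.
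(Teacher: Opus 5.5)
Your proof is correct and matches the paper's argument: idempotence gives $\Pi(I-\Pi)=0$, so $v-\Pi v\in\ker(\Pi)=\mathcal K(P)$ and hence $[\Pi v]=[v]$. The semi-norm identity then follows because $\|\cdot\|_{\mathrm q}$ is the pullback of a norm through the quotient map.
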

\begin{proof}
Fix $v\in\R^n$. Since $\Pi$ is a projection, we can write
\[
v-\Pi v = (I-\Pi)v.
\]
Moreover, $(I-\Pi)v\in\ker(\Pi)$ because $\Pi(I-\Pi)=\Pi-\Pi^2=0$. By the assumption $\ker(\Pi)=\mathcal K(P)$,
we obtain $v-\Pi v\in \mathcal K(P)$, i.e., $v$ and $\Pi v$ belong to the same equivalence class in the quotient.
Hence $[\Pi v]=[v]$.

For the semi-norm claim, by definition $\|\Pi v\|_{\mathrm q}=\|[\Pi v]\|_{\mathrm q}=\|[v]\|_{\mathrm q}=\|v\|_{\mathrm q}$.
\end{proof}

Let $W\coloneqq\mathrm{range}(\Pi)$ and let $Q\coloneqq\R^n/\mathcal K(P)$. We first Identify $W$ with the $Q$. Define the linear map $S:W\to Q$ by $S(w)=[w]$.
We claim $S$ is a linear isomorphism.

Injectivity: if $S(w)=0$, then $w\in \mathcal K(P)=\ker(\Pi)$. Since also $w\in W=\mathrm{range}(\Pi)$, we have
$w=\Pi u$ for some $u$ and $0=\Pi w=\Pi^2 u=\Pi u=w$, hence $w=0$.

Surjectivity: given any class $[v]\in Q$, set $w:=\Pi v\in W$. By Proposition~\ref{lem:Pi_identity}, $[\Pi v]=[v]$, hence $S(w)=[v]$. Thus $S$ is bijective.

On $Q$ we have the affine map $\bar T(z)=[r]+\bar P z$.
Because $\rho(\bar P)<1$, the linear operator $I-\bar P$ is invertible on $Q$. Hence the equation $z=\bar T(z)$ has the unique solution
$
z^\star = (I-\bar P)^{-1}[r] \in Q
$. Since $S$ is bijective, we then define $v^\star\coloneqq S^{-1}(z^\star)\in W$. We show $v^\star$ satisfies
\eqref{eq:proj_fixed_point} and is the unique such element in $W$.

First, we verify that $[v^\star]=z^\star$ solves the quotient fixed point equation:
\[
[v^\star] = z^\star = [r] + \bar P z^\star = [r] + \bar P([v^\star]) = [r + Pv^\star].
\]
Using Proposition~\ref{lem:Pi_identity} with $x:=r+Pv^\star$ gives
\[
[\Pi(r+Pv^\star)] = [r+Pv^\star] = [v^\star].
\]
Both $\Pi(r+Pv^\star)$ and $v^\star$ lie in $W$. Since $S$ is injective on $W$, equality of their quotient classes implies
\[
\Pi(r+Pv^\star)=v^\star,
\]
which is exactly \eqref{eq:proj_fixed_point}. This proves existence.

For uniqueness, let $\tilde v\in W$ satisfy $\tilde v=\Pi(r+P\tilde v)$. Applying Proposition~\ref{lem:Pi_identity} yields
\[
[\tilde v] = [\Pi(r+P\tilde v)] = [r+P\tilde v] = [r] + \bar P([\tilde v]),
\]
so $[\tilde v]$ is a fixed point of $\bar T$ on $Q$. By uniqueness of the fixed point of $\bar T$ on $Q$, we have $[\tilde v]=z^\star=[v^\star]$.
Since both $\tilde v$ and $v^\star$ lie in $W$ and $S$ is injective, we conclude $\tilde v=v^\star$.

Let $g^\star:=r+Pv^\star-v^\star$. Applying $\Pi$ and using $\Pi v^\star=v^\star$ gives
\[
\Pi g^\star = \Pi(r+Pv^\star)-\Pi v^\star = v^\star - v^\star = 0,
\]
so $g^\star\in \ker(\Pi)=\mathcal K(P)$.

Finally, \eqref{eq:quotient_fixed_point} holds since $[v^\star]=[r+Pv^\star]=\bar T([v^\star])$.

\subsection{Proof of Corollary \ref{cor:recover_invariant_gain}}
\begin{proof}
Fix $s\in\mathcal S$.
By Lemma~\ref{lem:app_telescoping} applied to $v^\star$ (so that $g_{v^\star}=g^\star$ by \eqref{eq:gstar}; see Section~\ref{justification}), for any horizon $T\ge 1$,
\[
\mathbb E_s\!\left[\sum_{t=0}^{T-1} r(X_t)\right]
=
\sum_{t=0}^{T-1} (P^t g^\star)(s)
+
v^\star(s)
-
(P^T v^\star)(s).
\]
Divide both sides by $T$ and let $T\to\infty$.
Since the state space is finite, $v^\star$ is bounded, hence
$\bigl(v^\star(s)-(P^T v^\star)(s)\bigr)/T\to 0$.
By the definition of the long-run average reward in \eqref{eq:gdef}, the left-hand side converges to $g(s)$.
Therefore,
\[
g(s)
=
\lim_{T\to\infty}\frac1T\sum_{t=0}^{T-1} (P^t g^\star)(s),
\]
which shows that $g^{\mathrm{avg}}$ exists and coincides with $g(\cdot)$ componentwise.

It remains to show $g^{\mathrm{avg}}\in\ker(I-P)$.
Let $A_T:=\frac1T\sum_{t=0}^{T-1}P^t$.
Then $P A_T = A_T + \frac{P^T-I}{T}$, hence
\[
\|P A_T g^\star - A_T g^\star\|_\infty
=
\left\|\frac{(P^T-I)g^\star}{T}\right\|_\infty
\le \frac{2\|g^\star\|_\infty}{T}\to 0.
\]
Passing to the limit yields $P g^{\mathrm{avg}}=g^{\mathrm{avg}}$, i.e., $g^{\mathrm{avg}}\in\ker(I-P)$.
\end{proof}

\subsection{Proof of Proposition \ref{prop:aperiodic_reduction}}

\begin{proof}
Write the finite state chain in its standard transient and recurrent block form. The recurrent blocks are the closed communicating classes. For each recurrent class, the peripheral spectrum of the corresponding stochastic block consists of the roots of unity determined by its period. If the class is aperiodic, then its period is one, so the only peripheral eigenvalue is $1$.

Therefore, if every recurrent class is aperiodic, the only eigenvalue of $P$ on the unit circle is $1$. It follows that the real peripheral invariant subspace coincides with the invariant subspace:
\[
\mathcal K(P)=\ker(I-P).
\]
Hence the decomposition in Theorem~\ref{thm:wellposed} reduces to the usual gain and bias decomposition up to the choice of gauge.
\end{proof}

\subsection{Proof of Theorem \ref{thm:comparison_gain_bias}}

\begin{proof}
Let $(g_\Pi^\star,v_\Pi^\star)$ be the decomposition from Theorem~\ref{thm:wellposed}. By definition, we have $r = g_\Pi^\star + (I-P)v_\Pi^\star$. Let $(\rho,h)$ be the normalized classical gain and bias pair, so
\[
r=\rho+(I-P)h,
\qquad
\rho=P^\infty r,
\qquad
P^\infty h=0.
\]

Apply $P^\infty$ to the decomposition $r = g_\Pi^\star + (I-P)v_\Pi^\star$. Since $P^\infty(I-P)=0$, we obtain $P^\infty r = P^\infty g_\Pi^\star$. Because $\rho=P^\infty r$, this gives $P^\infty g_\Pi^\star = \rho$.

We already know from Theorem~\ref{thm:wellposed} that $g_\Pi^\star\in\mathcal K(P)$. Thus,
\[
P^\infty(g_\Pi^\star-\rho)=0.
\]
Also, $\rho\in\ker(I-P)\subseteq \mathcal K(P)$, so
\[
g_\Pi^\star-\rho \in \mathcal K(P)\cap \ker(P^\infty):=\mathcal K_\circ(P).
\]

Since $\mathcal K_\circ(P)$ is $P$ invariant and contains no invariant directions, the operator $I-P$ is injective on $\mathcal K_\circ(P)$. Because $\mathcal K_\circ(P)$ is finite dimensional, injectivity implies bijectivity. Therefore there exists a unique vector $\psi_\Pi\in\mathcal K_\circ(P)$ such that
\[
(I-P)\psi_\Pi = g_\Pi^\star-\rho,
\]
which also means $g_\Pi^\star = \rho + (I-P)\psi_\Pi$. Substitute the identity above into the decomposition:
\[
r
=
g_\Pi^\star + (I-P)v_\Pi^\star
=
\rho + (I-P)\psi_\Pi + (I-P)v_\Pi^\star
=
\rho + (I-P)(v_\Pi^\star+\psi_\Pi).
\]
Compare this with the classical equation $r=\rho+(I-P)h$ and subtracting the two equations gives
\[
(I-P)\bigl(h-v_\Pi^\star-\psi_\Pi\bigr)=0.
\]
Hence, we have $h-v_\Pi^\star-\psi_\Pi \in \ker(I-P)$. Apply $P^\infty$ to this identity. Since $P^\infty h=0$ and $\psi_\Pi\in\mathcal K_\circ(P)\subseteq \ker(P^\infty)$, we obtain
\[
P^\infty(h-v_\Pi^\star-\psi_\Pi)
=
-\,P^\infty v_\Pi^\star.
\]
Because the left-hand side equals $h-v_\Pi^\star-\psi_\Pi$, we conclude
\[
h-v_\Pi^\star-\psi_\Pi = -P^\infty v_\Pi^\star.
\]
Thus we have $h = v_\Pi^\star + \psi_\Pi - P^\infty v_\Pi^\star$.
\end{proof}

\subsection{Proof of Corollary \ref{cor:projected_bias}}

\begin{proof}
Start from the identity in Theorem~\ref{thm:comparison_gain_bias}:
\[
h = v_\Pi^\star + \psi_\Pi - P^\infty v_\Pi^\star.
\]
Apply $\Pi$ to both sides. Since $v_\Pi^\star\in\mathrm{range}(\Pi)$, we have $\Pi v_\Pi^\star=v_\Pi^\star$. Since $\psi_\Pi\in\mathcal K_\circ(P)\subseteq \mathcal K(P)=\ker(\Pi)$, we have $\Pi\psi_\Pi=0$. Also, $P^\infty v_\Pi^\star\in \ker(I-P)\subseteq \mathcal K(P)=\ker(\Pi)$, so $\Pi P^\infty v_\Pi^\star=0$. Therefore
\[
\Pi h = v_\Pi^\star.
\]
This proves the first identity.

For the second identity, use the first identity together with
\[
g_\Pi^\star = r + Pv_\Pi^\star - v_\Pi^\star
\]
and the classical equation
\[
r = \rho + (I-P)h.
\]
Substituting $v_\Pi^\star=\Pi h$ gives
\[
g_\Pi^\star
=
\rho + (I-P)h + P\Pi h - \Pi h
=
\rho + (I-P)(I-\Pi)h.
\]
This proves the second identity.
\end{proof}

\section{Proofs for Appendix~\ref{app:sample_theory}}
\subsection{Proofs in Section \ref{Learning the gauge map}} \label{proof_learning_gauge}

We first formally define several notations. Let $E=\{(i,j):P_{ij}>0\}$ be the directed support. Then we can redefine the intrinsic edge rarity parameter as
\begin{equation}\label{eq:pmin}
p_{\min} := \min_{(i,j)\in E} P_{ij}.
\end{equation}
If $p_{\min}=0$, exact support recovery is impossible from finite samples; all guarantees below hold conditional
on recovering the support graph, or stated in terms of $p_{\min}$ when $p_{\min}>0$.

\begin{lemma}\label{lem:recurrent_decomp}
Let $G=(S,E)$ be the directed support graph of $P$ (i.e., $(i,j)\in E \iff P_{ij}>0$).
Then the closed strongly connected components of $G$ are exactly the recurrent communicating
classes $\{F_i\}_{i=1}^m$. Their union $\Fset=\cup_i F_i$ is the set of recurrent states and
$\Tset=S\setminus\Fset$ is the set of transient states.
\end{lemma}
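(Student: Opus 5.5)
We prove the two inclusions separately, working directly from the definitions. A state $i$ is recurrent iff $\mathbb P_i\{\text{return to } i\}=1$. A set $C\subseteq S$ is a communicating class iff it is a strongly connected component of $G$, since $i\to j$ in the chain means exactly that $G$ has a directed path from $i$ to $j$. A class is closed iff no edge of $E$ leaves it. The claim is then the classical finite-state fact that a communicating class is recurrent iff it is closed, rephrased in graph language.

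\emph{Closed components consist of recurrent states.} Let $C$ be a closed strongly connected component. Because no edge of $E$ leaves $C$, we have $P_{ij}=0$ for $i\in C$ and $j\notin C$. Hence the restriction $P_C$ is a finite irreducible stochastic matrix. Fix $j\in C$. From every state of $C$ there is a path to $j$ of length at most $|C|$, and we let $\beta>0$ be the minimum over starting states of the probability of following such a path. Then $\mathbb P_i\{\text{no visit to } j \text{ within } k|C| \text{ steps}\}\le(1-\beta)^k\to0$. Each return to $j$ is therefore almost sure, so $j$ is recurrent. (Equivalently, $P_C$ has a positive stationary distribution, and positive stationary mass forces recurrence.)

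\emph{States outside closed components are transient.} Let $i$ lie in a component $D$ that is not closed. Then there is an edge $(u,w)\in E$ with $u\in D$ and $w\notin D$. Since $w$ does not reach back into $D$ (otherwise $w$ would belong to the same strongly connected component), the chain started at $i$ has positive probability of following a path $i\to u$ and then stepping to $w$. After that step it never returns to $i$. Thus $\mathbb P_i\{\text{return to } i\}<1$, and $i$ is transient.

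\emph{Conclusion.} A closed strongly connected component is one communicating class consisting of recurrent states, and every recurrent communicating class is closed by the second step. So the closed strongly connected components are exactly the recurrent classes $F_i$. Their union $\Fset$ is the set of recurrent states, and its complement $\Tset$ is the set of transient states.

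The only point needing care is the uniform geometric bound in the first step: it requires finiteness, to obtain a uniform path length and $\beta>0$. This is routine, and no step is a real obstacle.
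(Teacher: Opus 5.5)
Your proof is correct and follows essentially the same route as the paper. A closed strongly connected component has a finite irreducible restriction and so consists of recurrent states; an edge leaving a component gives a positive-probability escape after which the chain never returns, so recurrent classes must be closed. The differences are minor: you prove the finite-irreducible recurrence step directly by a geometric-trials bound rather than citing it, and you argue the contrapositive for every state of a non-closed component, which gives transience of $\Tset$ directly rather than through the class property of recurrence. For that contrapositive, take the path $i\to u$ to be simple, so the escaping trajectory does not revisit $i$ before leaving.
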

\begin{proof}
Let $G=(S,E)$ be the directed support graph, so $(i,j)\in E$ if and only if $P_{ij}>0$.

First, let $C\subseteq S$ be a closed strongly connected component of $G$.
Closed means there is no edge from $C$ to $S\setminus C$, hence $P_{ij}=0$ for all $i\in C$ and $j\notin C$.
Therefore the Markov chain started in $C$ stays in $C$ almost surely.
Since $C$ is strongly connected, the restriction of $P$ to $C$ is irreducible on a finite state space, hence every state in $C$ is recurrent.
Thus each closed SCC is a recurrent communicating class.

Conversely, let $F$ be a recurrent communicating class.
If there were an edge $(i,j)\in E$ with $i\in F$ and $j\notin F$, then $P_{ij}>0$ and, by definition of communicating classes, there is no directed path from $j$ back to $i$ in $G$.
Hence with probability at least $P_{ij}$ the chain leaves $F$ and never returns to $i$, contradicting recurrence of $i$.
Thus no such edge exists and $F$ is closed in $G$.
Since $F$ is a communicating class, it is strongly connected, hence it is a closed SCC.

Therefore the closed SCCs of $G$ are exactly the recurrent communicating classes.
Their union is the recurrent set $\Fset$ and the complement is the transient set $\Tset$.
\end{proof}

\begin{theorem}[Formal version of Theorem \ref{thm:main_support_recovery}]\label{thm:identify}
Suppose we have a generative model that can sample $K$ i.i.d.\ next-states
from $P(i,\cdot)$ for every $i\in S$. Let $\widehat E$ include edge $(i,j)$ iff $j$ appears at least once among the $K$ samples from state $i$.
If $p_{\min}>0$ and
\begin{equation}\label{eq:Kidentify}
K \;\ge\; \frac{1}{p_{\min}}\log\frac{n^2}{\delta},
\end{equation}
then with probability at least $1-\delta$, $\widehat E = E$ and hence $\widehat G$ equals the true support graph.
Consequently, the closed communicating classes $\{F_i\}$ are correctly recovered, along with their periods $d_i$
and cyclic partitions $\{C_{i,k}\}$ (up to cyclic relabeling within each class).
\end{theorem}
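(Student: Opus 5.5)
The argument splits into a probabilistic step, which shows that the sampled edge set equals the true support, and a deterministic step, which shows that every structural object is a function of the support graph alone.

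\textbf{Probabilistic step.} First note that $\widehat E\subseteq E$ always holds. A sampled next state $j$ from state $i$ has $P_{ij}>0$ almost surely, so no spurious edges can appear. The only failure mode is therefore a true edge that is never observed.

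Fix an edge $(i,j)\in E$. The $K$ samples from $i$ are i.i.d., so
\[
\mathbb P\{(i,j)\notin\widehat E\}=(1-P_{ij})^K\le(1-p_{\min})^K\le e^{-Kp_{\min}}.
\]
Take a union bound over at most $|E|\le n^2$ edges. This gives
\[
\mathbb P\{\widehat E\neq E\}\le n^2e^{-Kp_{\min}},
\]
which is at most $\delta$ exactly when \eqref{eq:Kidentify} holds. Hence $\widehat G=G$ with probability at least $1-\delta$.

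\textbf{Deterministic step.} On the event $\widehat G=G$, it remains to check that each output of Algorithm~\ref{alg:main_learn_structure} depends only on $G$. The algorithm applies deterministic graph procedures to $\widehat G$, so identical inputs give identical outputs. The content of this step is showing that these outputs are the true structural objects, which I would do in three parts.
\begin{itemize}
\item \emph{Closed classes.} The closed SCCs of $G$ are the recurrent classes $F_i$ by Lemma~\ref{lem:recurrent_decomp}. This also recovers $\mathsf F$ and $\mathsf T$.
\item \emph{Periods.} The period of an irreducible class is the gcd of the lengths of its closed walks in $G|_{F_i}$. This quantity is defined purely by the support, so the learned period $d_i$ is correct.
\item \emph{Cyclic partitions.} Fix a base state $x\in F_i$. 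Every path from $x$ to $y$ within $F_i$ has the same length modulo $d_i$; this is standard and follows from the gcd characterization. The cyclic class of $y$ is this common residue, which is again determined by $G$.
\end{itemize}

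\textbf{Main obstacle.} The one delicate point is the phrase "up to cyclic relabeling." The partition depends on the choice of base state, and a different choice only shifts every label by a constant modulo $d_i$. The correct step property, that $C_{i,k}$ maps into $C_{i,k+1 \bmod d_i}$, is preserved under this shift. The sets $C_{i,k}$ are therefore recovered exactly as an unordered cyclic family, and their indices agree up to a rotation. Anchors chosen within each learned class are then valid anchors for the true structure.

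I would state explicitly that the definition of $b_{i,k}$ is equivariant under such a rotation. The phase offset $\phase(X_{\tau_{\mathsf F}})-\tau_{\mathsf F}$ shifts by the same constant, so relabeling only permutes the basis indices. This settles that relabeling is harmless for all later uses of the recovered structure.
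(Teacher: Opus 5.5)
Your proposal is correct and follows the paper's proof essentially step for step. Both arguments rule out false positives, bound the per-edge miss probability by $(1-P_{ij})^K\le e^{-Kp_{\min}}$, take a union bound over at most $n^2$ pairs, and then recover the closed classes via Lemma~\ref{lem:recurrent_decomp}, the periods via the gcd of cycle lengths in $G|_{F_i}$, and the cyclic partition up to rotation; your base-state construction of the cyclic classes and your remark that relabeling only permutes the indices of $b_{i,k}$ are correct details that the paper asserts without spelling out.
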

\begin{proof}
If $P_{ij}=0$, then $j$ is sampled from $P(i,\cdot)$ with probability $0$, hence
$(i,j)\notin \widehat E$ almost surely. Therefore $\widehat E \subseteq E$ always, meaning there are no false positives. We now consider a fixed edge $(i,j)\in E$, so $P_{ij}>0$. The probability that $j$ never appears
among the $K$ i.i.d.\ samples from $P(i,\cdot)$ equals
\begin{equation}
\text{Pr}\bigl( (i,j)\notin \widehat E \bigr)
= (1-P_{ij})^K
\le e^{-K P_{ij}}
\le e^{-K p_{\min}},
\end{equation}
with $1-x\le e^{-x}$ and $P_{ij}\ge p_{\min}$.

By a union bound over all possible pairs $(i,j)\in S\times S$ (at most $n^2$ pairs),
\begin{equation}
\text{Pr}(\widehat E \neq E)
=
\text{Pr}(\exists (i,j)\in E:\ (i,j)\notin \widehat E)
\le
\sum_{(i,j)\in E} \text{Pr}\bigl( (i,j)\notin \widehat E \bigr)
\le
n^2 e^{-K p_{\min}}.
\end{equation}
If $K \ge \frac{1}{p_{\min}}\log\frac{n^2}{\delta}$, then $n^2 e^{-K p_{\min}} \le \delta$,
so $\text{Pr}(\widehat E = E)\ge 1-\delta$.

On the event $\widehat E=E$, the estimated support graph $\widehat G=(S,\widehat E)$ equals the true support graph $G=(S,E)$.
By Lemma~\ref{lem:recurrent_decomp}, the closed SCCs of $G$ are exactly the recurrent communicating classes $\{F_i\}_{i=1}^m$.
Therefore $\Fset$ and $\Tset$ are recovered exactly.

Fix a recovered recurrent class $F_i$.
Its period $d_i$ depends only on the directed edges inside the induced subgraph $G|_{F_i}$.
Equivalently, $d_i$ is the greatest common divisor of lengths of directed cycles in $G|_{F_i}$, so it is determined by the support graph.
Since $\widehat G|_{F_i}=G|_{F_i}$ on the event $\widehat E=E$, the period $d_i$ is recovered correctly.

Given $d_i$, the standard cyclic decomposition yields a partition
$F_i=C_{i,0}\cup\cdots\cup C_{i,d_i-1}$ such that all transitions from $C_{i,k}$ go into $C_{i,k+1}$, where indices are modulo $d_i$.
This cyclic partition is unique up to a cyclic relabeling of the indices.
Hence the cyclic partition is also recovered correctly.
\end{proof}

We use the peripheral basis functions $\{b_{i,k}\}_{(i,k)\in\mathcal I}$ introduced in
Definition~\ref{def:main_b} of the main paper. 
\begin{lemma}\label{lem:shift_b}
Fix $(i,k)\in\cI$ and define $b_{i,k}$ as in Definition~\ref{def:main_b}.
Then for all $s\in S$,
\[
(P b_{i,k})(s) \;=\; b_{i,k-1 \bmod d_i}(s),
\]
where the index addition is taken modulo $d_i$ within class $i$ (and $b_{i,k}\equiv 0$ outside class $i$ on $\Fset$).
In particular, $\mathrm{span}\{b_{i,k}\}_{(i,k)\in\cI}$ is $P$-invariant and carries a block-cyclic action.
\end{lemma}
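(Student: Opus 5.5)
We check the identity $(Pb_{i,k})(s)=b_{i,k-1 \bmod d_i}(s)$ pointwise, splitting into recurrent and transient states $s$. Throughout, indices are read modulo $d_i$.

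\textbf{Recurrent states.} Let $s\in C_{j,\ell}$. All transitions from $s$ land in $C_{j,\ell+1}$, so $(Pb_{i,k})(s)=\sum_{s'}P(s,s')\mathbf 1\{s'\in C_{i,k}\}$. This sum is $1$ if $j=i$ and $\ell+1\equiv k$, and $0$ otherwise. Equivalently it equals $\mathbf 1\{s\in C_{i,k-1}\}=b_{i,k-1}(s)$, which settles the recurrent case.

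\textbf{Transient states.} Let $s\in\Tset$. We use a first-step decomposition of $b_{i,k-1}(s)$ under the strong Markov property. Since $s\notin\Fset$, we have $\tauF\ge1$, and $\tauF = 1+\tauF\circ\theta_1$, where $\theta_1$ is the shift. We condition on $X_1=s'$ and consider two cases.

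\emph{Case $s'\in\Fset$.} Then $\tauF=1$ and $X_{\tauF}=s'$. The event in $b_{i,k-1}(s)$ becomes $\class(s')=i$ and $\phase(s')-1\equiv k-1$, i.e.\ $s'\in C_{i,k}$. This is exactly $b_{i,k}(s')$ as defined on recurrent states.

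\emph{Case $s'\in\Tset$.} Write $\tau'$ for the hitting time of the chain restarted at $s'$, so $X_{\tauF}=X'_{\tau'}$ and $\tauF=1+\tau'$. The condition $\phase(X_{\tauF})-\tauF\equiv k-1$ is then equivalent to $\phase(X'_{\tau'})-\tau'\equiv k$. Its conditional probability is therefore $b_{i,k}(s')$.

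Summing over $s'$ gives $b_{i,k-1}(s)=\sum_{s'}P(s,s')b_{i,k}(s')=(Pb_{i,k})(s)$. The only subtle point is this offset bookkeeping: the extra step shifts $\tauF$ by one, which is exactly what moves the index from $k$ to $k-1$. The condition is well defined modulo $d_i$ precisely because we first condition on $\class=i$.

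\textbf{Consequences.} The identity shows that $P$ maps each $b_{i,k}$ to another basis element in the same class. Hence $\mathrm{span}\{b_{i,k}\}$ is $P$-invariant, and on class $i$ the action of $P$ is the cyclic shift $k\mapsto k-1$. This is the stated block-cyclic structure.

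Finally, we need $\tauF<\infty$ almost surely, so that the $b_{i,k}(s)$ form a probability distribution over $\mathcal I$ and the conditioning above is legitimate. This holds because on a finite state space, from any transient state the chain reaches $\Fset$ with probability one.
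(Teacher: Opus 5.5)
Your proof is correct and follows essentially the paper's argument. Recurrent states are checked pointwise via the cyclic structure. Transient states are handled by a first-step (Markov property) decomposition, in which the time shift $\tau_{\mathsf F}=1+\tau'$ is exactly what moves the phase-offset index from $k$ to $k-1$.

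The differences are cosmetic. You merge the paper's two recurrent cases ($s\in F_i$ versus $s\in\mathsf F\setminus F_i$) into one. You also split the first step according to whether $X_1$ is recurrent or transient, whereas the paper treats both at once through a shifted hitting time $\tau_{\mathsf F}^{+}$, which may equal zero.
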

\begin{proof}
Fix a recurrent class index $i\in[m]$ and $k\in\{0,\dots,d_i-1\}$.
Let $b_{i,k}$ be defined as in Definition~\ref{def:main_b}.
For any state $s\in S$,
\begin{equation}
(P b_{i,k})(s) =\sum_{s'\in S} P(s,s')\, b_{i,k}(s')
= \mathbb E_s\!\left[b_{i,k}(X_1)\right].
\end{equation}
We prove that $(P b_{i,k})(s)=b_{i,k-1\bmod d_i}(s)$ by considering cases.

\paragraph{Case 1: $s\in \mathsf F$ and $s\in F_i$.}
Let $\ell:=\phase(s)$, so $s\in C_{i,\ell}$.
By the cyclic decomposition of the periodic class $F_i$, we have
$P(s,\cdot)$ supported on $C_{i,\ell+1\bmod d_i}$.
Hence,
\[
(P b_{i,k})(s)
= \sum_{s'\in C_{i,\ell+1}} P(s,s') \bbm{1}\{s'\in C_{i,k}\}
= \bbm{1}\{\ell+1 \equiv k \!\!\!\pmod{d_i}\}
= \bbm{1}\{\ell \equiv k-1 \!\!\!\pmod{d_i}\}.
\]
Since $b_{i,k-1}(s)=\bbm{1}\{s\in C_{i,k-1}\}=\bbm{1}\{\ell\equiv k-1\!\!\pmod{d_i}\}$,
this shows $(P b_{i,k})(s)=b_{i,k-1}(s)$.

\paragraph{Case 2: $s\in \mathsf F$ and $s\notin F_i$.}
Then $s\in C_{j,\ell}$ for some $j\neq i$ and some $\ell$.
By Definition~\ref{def:main_b}, $b_{i,k}(s)=\bbm{1}\{s\in C_{i,k}\}=0$.
Also, since $F_j$ is closed, $P(s,\cdot)$ is supported on $F_j$ and therefore on $\mathsf F\setminus F_i$.
But $b_{i,k}(\cdot)$ equals $0$ on $\mathsf F\setminus F_i$, so
\begin{equation}
(P b_{i,k})(s)=\sum_{s'\in S}P(s,s')b_{i,k}(s')=0=b_{i,k-1}(s).
\end{equation}

\paragraph{Case 3: $s\in \mathsf T$ .}
Let $\tau_{\mathsf F}:=\min\{t\ge 0: X_t\in \mathsf F\}$ denote the hitting time of $\mathsf F$
when starting from $X_0=s$. Since $s$ is transient in a finite Markov chain, $\tau_{\mathsf F}\ge 1$ almost surely.

Define the shifted hitting time from $X_1$ as $\tau_{\mathsf F}^{+} := \min\{t\ge 0: X_{t+1}\in\mathsf F\}$. On the event $\{\tau_{\mathsf F}\ge 1\}$, we have the identities $\tau_{\mathsf F}^{+} = \tau_{\mathsf F}-1$,
and $X_{1+\tau_{\mathsf F}^{+}} = X_{\tau_{\mathsf F}}$. We further compute $(P b_{i,k})(s)=\mathbb E_s[b_{i,k}(X_1)]$.

By Definition~\ref{def:main_b}, for the possibly transient state $X_1$,
\begin{equation}
b_{i,k}(X_1)
=
\text{Pr}\!\left(
\class(X_{1+\tau_{\mathsf F}^{+}})=i,\ 
(\phase(X_{1+\tau_{\mathsf F}^{+}})-\tau_{\mathsf F}^{+})\bmod d_i=k
| X_1
\right).
\end{equation}
Using the Markov property and the identities above,
\begin{equation}
(\phase(X_{1+\tau_{\mathsf F}^{+}})-\tau_{\mathsf F}^{+})\bmod d_i
=
(\phase(X_{\tau_{\mathsf F}})-(\tau_{\mathsf F}-1))\bmod d_i
=
(\phase(X_{\tau_{\mathsf F}})-\tau_{\mathsf F}+1)\bmod d_i.
\end{equation}
Therefore the condition
$(\phase(X_{1+\tau_{\mathsf F}^{+}})-\tau_{\mathsf F}^{+})\bmod d_i=k$
is equivalent to
$(\phase(X_{\tau_{\mathsf F}})-\tau_{\mathsf F})\bmod d_i = k-1$.
Hence,
\begin{equation}
b_{i,k}(X_1)
=
\text{Pr}\!\left(
\class(X_{\tau_{\mathsf F}})=i,\ 
(\phase(X_{\tau_{\mathsf F}})-\tau_{\mathsf F})\bmod d_i=k-1
| X_1
\right).
\end{equation}
Taking expectation over $X_1$ yields
\begin{equation}
(P b_{i,k})(s)
=
\text{Pr}_s\!\left(
\class(X_{\tau_{\mathsf F}})=i,\ 
(\phase(X_{\tau_{\mathsf F}})-\tau_{\mathsf F})\bmod d_i=k-1
\right)
=
b_{i,k-1}(s),
\end{equation}
where the last equality is exactly Definition~\ref{def:main_b} for transient $s$.

Based on the above, all cases give $(P b_{i,k})(s)=b_{i,k-1\bmod d_i}(s)$ for every $s\in S$.
Therefore, $\mathrm{span}\{b_{i,k}\}_{(i,k)\in\mathcal I}$ is $P$-invariant and the action of $P$
on this span is block-cyclic within each recurrent class $i$.
\end{proof}

\begin{lemma}\label{lem:Pi_kernel}
Let $\{b_{i,k}\}_{(i,k)\in\cI}$ be as in Definition~\ref{def:main_b}, and fix anchors $\{a_{i,k}\}$ with $a_{i,k}\in C_{i,k}$.
Define $\Pi:\R^n\to\R^n$ by
\begin{equation}
(\Pi v)(s)=v(s)-\sum_{(i,k)\in\cI} v(a_{i,k})\,b_{i,k}(s).
\end{equation}
Then:
\begin{enumerate}
\item $\Pi$ is a linear projection ($\Pi^2=\Pi$), and $(\Pi v)(a_{i,k})=0$ for all $(i,k)$.
\item $\ker(\Pi)=\mathrm{span}\{b_{i,k}\}_{(i,k)\in\cI}$ and $\dim(\ker(\Pi))=|\cI|=\sum_{i=1}^m d_i$.
\end{enumerate}
\end{lemma}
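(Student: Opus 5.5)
The argument rests on one fact: the anchor delta property \eqref{eq:b_anchor_delta}, that is, $b_{i,k}(a_{j,\ell})=\mathbf 1\{(i,k)=(j,\ell)\}$. Everything else is linear algebra. Since each anchor $a_{j,\ell}$ is recurrent, Definition~\ref{def:main_b} gives $b_{i,k}(a_{j,\ell})=\mathbf 1\{a_{j,\ell}\in C_{i,k}\}$. The cyclic classes $C_{i,k}$ partition $\mathsf F$ over all $(i,k)\in\mathcal I$, and $a_{j,\ell}\in C_{j,\ell}$. So this indicator equals $1$ exactly when $(i,k)=(j,\ell)$.

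For item 1, evaluate $\Pi v$ at an anchor $a_{j,\ell}$:
\[
(\Pi v)(a_{j,\ell})=v(a_{j,\ell})-\sum_{(i,k)}v(a_{i,k})\,b_{i,k}(a_{j,\ell})=v(a_{j,\ell})-v(a_{j,\ell})=0 .
\]
Next write $\Pi=I-BE$. Here $E:\mathbb R^n\to\mathbb R^{\mathcal I}$ is evaluation at the anchors, and $B:\mathbb R^{\mathcal I}\to\mathbb R^n$ is the synthesis map $\theta\mapsto\sum\theta_{i,k}b_{i,k}$. The delta property says exactly that $EB=I_{\mathcal I}$. Hence $(BE)^2=B(EB)E=BE$, so $BE$ is idempotent, and therefore $\Pi^2=\Pi$. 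Linearity is clear from the formula.

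For item 2, the inclusion $\mathrm{span}\{b_{i,k}\}\subseteq\ker\Pi$ follows from $\Pi B\theta=B\theta-B(EB)\theta=0$. Conversely, if $\Pi v=0$, then $v=\sum v(a_{i,k})b_{i,k}$ lies in the span. For the dimension count, the $b_{i,k}$ are linearly independent: if $\sum\theta_{i,k}b_{i,k}=0$, evaluating at $a_{j,\ell}$ and using the delta property gives $\theta_{j,\ell}=0$. So $\dim\ker\Pi=|\mathcal I|=\sum_i d_i$.

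No step is a real obstacle. The only point needing care is that the anchors are recurrent, so the transient-state formula \eqref{eq:b_def} never enters the delta computation. The finer claim that this span equals $\mathcal K(P)$ is not needed here; it would follow separately from Lemma~\ref{lem:shift_b} together with a dimension count of peripheral eigenvalues.
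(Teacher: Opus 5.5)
Your proof is correct and follows essentially the same route as the paper: the anchor delta property $b_{i,k}(a_{j,\ell})=\mathbf 1\{(i,k)=(j,\ell)\}$ gives the anchor-vanishing of $\Pi v$, both kernel inclusions, and linear independence by evaluating at the anchors. The only cosmetic difference is that you get $\Pi^2=\Pi$ from the factorization $\Pi=I-BE$ with $EB=I$, whereas the paper substitutes $(\Pi v)(a_{i,k})=0$ directly into the formula for $\Pi(\Pi v)$; this is the same identity $E\Pi=0$ written in coordinates.
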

\begin{proof}[Proof of Lemma~\ref{lem:Pi_kernel}]

Since $a_{i,k}\in C_{i,k}\subset \Fset$ and $b_{j,\ell}(s)=\bbm{1}\{s\in C_{j,\ell}\}$ on $\Fset$,
we have for all $(i,k),(j,\ell)\in\cI$:
\begin{equation}\label{eq:delta_anchor}
b_{j,\ell}(a_{i,k})=\bbm{1}\{a_{i,k}\in C_{j,\ell}\}=\bbm{1}\{(j,\ell)=(i,k)\}.
\end{equation}

We first note that the linearity of $\Pi$ is immediate from its definition.
For any $(i,k)\in\cI$,
\begin{equation}
(\Pi v)(a_{i,k})
= v(a_{i,k}) - \sum_{(j,\ell)\in\cI} v(a_{j,\ell})\, b_{j,\ell}(a_{i,k})
= v(a_{i,k}) - \sum_{(j,\ell)\in\cI} v(a_{j,\ell}) \bbm{1}\{(j,\ell)=(i,k)\}=0,
\end{equation}
using~\eqref{eq:delta_anchor}. Hence $\Pi v$ satisfies $(\Pi v)(a_{i,k})=0$ for all anchors.

To show $\Pi^2=\Pi$, fix any $v$ and write
\begin{equation}
(\Pi^2 v)(s) = (\Pi(\Pi v))(s)
= (\Pi v)(s) - \sum_{(i,k)\in\cI} (\Pi v)(a_{i,k})\, b_{i,k}(s)
= (\Pi v)(s),
\end{equation}
because we already proved $(\Pi v)(a_{i,k})=0$ for every $(i,k)$. Thus $\Pi$ is a projection.

For any coefficients $\{\theta_{i,k}\}$, let $v=\sum_{(i,k)\in\cI}\theta_{i,k} b_{i,k}$.
Then by~\eqref{eq:delta_anchor}, $v(a_{i,k})=\theta_{i,k}$, and therefore
\begin{equation}
(\Pi v)(s)
= \sum_{(i,k)}\theta_{i,k} b_{i,k}(s) - \sum_{(i,k)} v(a_{i,k}) b_{i,k}(s)
= \sum_{(i,k)}\theta_{i,k} b_{i,k}(s) - \sum_{(i,k)} \theta_{i,k} b_{i,k}(s) = 0.
\end{equation}
So $\mathrm{span}\{b_{i,k}\}\subseteq \ker(\Pi)$.

Conversely, if $\Pi v=0$, then for every $s\in S$,
\begin{equation}
v(s) = \sum_{(i,k)\in\cI} v(a_{i,k})\, b_{i,k}(s),
\end{equation}
which shows $v\in \mathrm{span}\{b_{i,k}\}$. Hence $\ker(\Pi)=\mathrm{span}\{b_{i,k}\}_{(i,k)\in\cI}$.

Finally, $\{b_{i,k}\}_{(i,k)\in\cI}$ is linearly independent: if $\sum_{(i,k)}\theta_{i,k} b_{i,k}=0$,
evaluate at $s=a_{j,\ell}$ and use~\eqref{eq:delta_anchor} to get $\theta_{j,\ell}=0$ for all $(j,\ell)$.
Thus $\dim(\ker(\Pi))=|\cI|=\sum_{i=1}^m d_i$.
\end{proof}

\begin{lemma}[Formal version of Lemma \ref{lem:main_b_identify}]\label{lem:kernel_equals_peripheral}
The $P$-invariant subspace
$\mathrm{span}\{b_{i,k}\}_{(i,k)\in\cI}$ coincides with the real peripheral invariant subspace $\mathcal{K}(P)$.
Consequently, $\ker(\Pi)=\mathcal{K}(P)$.
\end{lemma}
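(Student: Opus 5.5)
Let $B:=\mathrm{span}\{b_{i,k}\}_{(i,k)\in\cI}$. By Lemma~\ref{lem:Pi_kernel}, $\dim B=|\cI|=\sum_i d_i$ and $\ker(\Pi)=B$. So it suffices to show two things: $B\subseteq\mathcal K(P)$, and $\dim\mathcal K(P)=\sum_i d_i$. The final claim $\ker(\Pi)=\mathcal K(P)$ then follows immediately.

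\emph{Inclusion.} By Lemma~\ref{lem:shift_b}, $P b_{i,k}=b_{i,k-1\bmod d_i}$. Iterating $d_i$ times gives $P^{d_i}b_{i,k}=b_{i,k}$, and hence $P^D x=x$ for every $x\in B$, where $D:=\mathrm{lcm}(d_1,\dots,d_m)$. Thus $B\subseteq\ker(P^D-I)$. Over $\mathbb C$, $P^D-I$ factors as $\prod_{\omega^D=1}(P-\omega I)$, and these factors are pairwise coprime, so $\ker(P^D-I)$ is the direct sum of the eigenspaces of $P$ at $D$-th roots of unity. All of these eigenvalues have modulus one, so this sum lies in $\mathcal K_{\mathbb C}(P)$. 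Since $B$ is real, $B\subseteq\mathcal K_{\mathbb C}(P)\cap\mathbb R^n=\mathcal K(P)$.

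\emph{Dimension count.} Order the states as $(\Tset,F_1,\dots,F_m)$. Then $P$ is block upper triangular, with diagonal blocks $Q=P|_{\Tset\Tset}$ and the irreducible stochastic blocks $P_i=P|_{F_iF_i}$. The algebraic multiplicities of peripheral eigenvalues add over diagonal blocks, so
\[
\dim_{\mathbb C}\mathcal K_{\mathbb C}(P)=\sum_{|\lambda|=1}\mathrm{alg}_P(\lambda)=\sum_{|\lambda|=1}\Bigl(\mathrm{alg}_Q(\lambda)+\sum_i \mathrm{alg}_{P_i}(\lambda)\Bigr).
\]
We treat the two kinds of blocks separately.
\begin{itemize}
\item \emph{Transient block.} $Q^t\to0$ because every transient state leaves $\Tset$ almost surely and $\Tset$ is finite. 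Hence $\rho(Q)<1$, and $Q$ contributes nothing.
\item \emph{Recurrent blocks.} By Perron--Frobenius for an irreducible stochastic matrix of period $d_i$, the peripheral eigenvalues of $P_i$ are exactly the $d_i$-th roots of unity, each algebraically simple. So each $P_i$ contributes exactly $d_i$.
\end{itemize}
Therefore $\dim_{\mathbb C}\mathcal K_{\mathbb C}(P)=\sum_i d_i$.

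The subspace $\mathcal K_{\mathbb C}(P)$ is closed under complex conjugation, because $P$ is real and the set of peripheral eigenvalues is closed under conjugation. It is therefore the complexification of its real part, and $\dim_{\mathbb R}\mathcal K(P)=\dim_{\mathbb C}\mathcal K_{\mathbb C}(P)=\sum_i d_i$. Combined with the inclusion and $\dim B=\sum_i d_i$, this gives $B=\mathcal K(P)$.

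The main obstacle is the Perron--Frobenius input: simplicity of each $d_i$-th root of unity for an irreducible periodic block. If the paper's standing results do not cover it, I would prove it directly. Work with $P_i^{d_i}$, which is block diagonal over the cyclic classes $C_{i,k}$, with each block irreducible and aperiodic. Eigenvalue $1$ of each such block is then simple, so $1$ has algebraic multiplicity exactly $d_i$ for $P_i^{d_i}$. Every peripheral eigenvalue $\lambda$ of $P_i$ satisfies $\lambda^{d_i}=1$, since the peripheral spectrum of $P_i$ is invariant under rotation by $e^{2\pi \mathrm i/d_i}$. So the total peripheral algebraic multiplicity of $P_i$ is at most $d_i$. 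On the other hand, the vectors $\sum_k \omega^{k}\mathbf 1_{C_{i,k}}$ with $\omega^{d_i}=1$ give $d_i$ distinct peripheral eigenvalues of $P_i$. Hence each of these eigenvalues is simple.
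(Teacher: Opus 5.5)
Your proof is correct and follows the paper's argument. The span lies in $\mathcal K(P)$ by the shift lemma, and $\dim\mathcal K(P)=\sum_i d_i$ follows from the block-triangular form, $\rho(Q)<1$, Perron--Frobenius on the periodic recurrent blocks, and realification; your $P^D=I$ route to the inclusion replaces the paper's explicit Fourier eigenvectors, and your conjugation-closure argument makes $\dim_{\mathbb R}\mathcal K(P)=\dim_{\mathbb C}\mathcal K_{\mathbb C}(P)$ precise without the semisimplicity the paper invokes.

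There is one slip, confined to your optional self-contained proof of the Perron--Frobenius input. Rotation invariance of the peripheral spectrum does not by itself force $\lambda^{d_i}=1$. Instead, note that $\lambda^{d_i}$ is a unimodular eigenvalue of $P_i^{d_i}$, whose diagonal blocks are primitive stochastic matrices with $1$ as their only unimodular eigenvalue.
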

\begin{proof}
Let
\[
\mathcal B \;\coloneqq\; \mathrm{span}\{b_{i,k}\}_{(i,k)\in\cI}\subseteq \mathbb R^n.
\]

We first show $\mathcal B\subseteq \mathcal K(P)$.
By Lemma~\ref{lem:shift_b},
\begin{equation}\label{eq:Pb_shift}
P b_{i,k} \;=\; b_{i,k-1\bmod d_i}
\qquad\text{for each }(i,k)\in\cI,
\end{equation}
and $b_{i,k}\equiv 0$ on $\Fset\setminus F_i$.

Fix a class $i$. Let $\omega_i\coloneqq e^{2\pi \mathrm i/d_i}$ and define, for $\ell=0,1,\dots,d_i-1$,
the complex ``Fourier mode''
\[
\psi_{i,\ell}\;\coloneqq\;\sum_{k=0}^{d_i-1} \omega_i^{\ell k}\, b_{i,k}\;\in\;\mathbb C^n.
\]
Using~\eqref{eq:Pb_shift} and a change of index,
\begin{equation}
P\psi_{i,\ell}
= \sum_{k=0}^{d_i-1}\omega_i^{\ell k}\, P b_{i,k}
= \sum_{k=0}^{d_i-1}\omega_i^{\ell k}\, b_{i,k-1}
= \sum_{j=0}^{d_i-1}\omega_i^{\ell (j+1)}\, b_{i,j}
= \omega_i^{\ell}\, \psi_{i,\ell}.
\end{equation}
Thus each $\psi_{i,\ell}$ is an eigenvector of $P$ with eigenvalue $\omega_i^\ell$, and $|\omega_i^\ell|=1$.
Therefore $\psi_{i,\ell}\in \mathcal K_{\mathbb C}(P)$ for all $i,\ell$.
Since $b_{i,k}$ is a real linear combination of $\{\psi_{i,\ell}\}_{\ell=0}^{d_i-1}$
(invert the discrete Fourier transform), we conclude that each $b_{i,k}$ lies in the realification
$\mathcal K(P)$, hence $\mathcal B\subseteq \mathcal K(P)$.

We now show that $\dim(\mathcal K(P))=|\cI|$.
Permute the states so that transients come first and recurrent classes are grouped:
\[
P \;=\;
\begin{pmatrix}
Q & R\\
0 & J
\end{pmatrix},
\qquad
J=\mathrm{diag}(P_1,\dots,P_m),
\]
where $Q$ is the transient-to-transient submatrix and each $P_i$ is the transition matrix
restricted to recurrent class $F_i$.

Because $Q$ corresponds to transient states in a finite chain, we have $\rho(Q)<1$; in particular,
$Q$ has no eigenvalues on the unit circle. Since $P$ is block upper-triangular, the multiset of eigenvalues
of $P$ equals the union of the eigenvalues of $Q$ and those of the blocks $P_i$.
Hence all eigenvalues of $P$ with $|\lambda|=1$ come from the recurrent blocks $\{P_i\}$.

Now fix $i$. The block $P_i$ is row-stochastic, irreducible, and has period $d_i$.
A standard Perron--Frobenius periodicity result implies:
the eigenvalues of $P_i$ on the unit circle are exactly $\{e^{2\pi \mathrm i \ell/d_i}:\ell=0,\dots,d_i-1\}$,
each with algebraic (and geometric) multiplicity $1$. Consequently, we have $\dim\big(\mathcal K_{\mathbb C}(P)\big)=\sum_{i=1}^m d_i=|\cI|$.

Finally, since the peripheral eigenvalues occur in complex-conjugate pairs and the peripheral spectrum
is semisimple for stochastic $P$ (no Jordan blocks on $|\lambda|=1$), the realification does not change
the total dimension of the corresponding real invariant subspace; hence
\[
\dim\big(\mathcal K(P)\big)\;=\;|\cI|.
\]

Since we have $\mathcal B\subseteq \mathcal K(P)$, and by Lemma~\ref{lem:Pi_kernel},
$\dim(\mathcal B)=|\cI|$. Step~2 gives $\dim(\mathcal K(P))=|\cI|$.
Therefore $\mathcal B=\mathcal K(P)$, and together with Lemma~\ref{lem:Pi_kernel},
\[
\ker(\Pi)=\mathcal B=\mathcal K(P).\qedhere
\]
\end{proof}

\begin{remark}[Semisimplicity of the peripheral spectrum]
Since $P$ is row-stochastic, $\|P^t\|_\infty\le 1$ for all $t\ge 0$. Hence all eigenvalues with $|\lambda|=1$
are semisimple (no Jordan blocks), and $\mathcal K_{\mathbb C}(P)$ equals the direct sum of eigenspaces for $|\lambda|=1$.
\end{remark}

\begin{lemma}[Formal version of Lemma \ref{lem:main_weight_conc}]\label{lem:bhats}
Assume Theorem~\ref{thm:identify} holds (so $\{F_i\},\{C_{i,k}\}$ are known).
For each transient state $s\in\Tset$, estimate $\bhat(s)=(\bhat_{i,k}(s))_{(i,k)\in\cI}$ by $M$ i.i.d.\ absorption episodes as in Algorithm~\ref{alg:main_estimate_weights}.
Let $N:=|\cI|=\sum_{i=1}^m d_i$.
Then for any $\epsb\in(0,1)$, if
\[
M \;\ge\; \frac{8}{\epsb^2}\Bigl( N + \log\frac{|\Tset|}{\delta}\Bigr),
\]
we have with probability at least $1-\delta$:
\begin{equation}
\max_{s\in\Tset}\|\bhat(s)-b(s)\|_1 \le \epsb.
\end{equation}
\end{lemma}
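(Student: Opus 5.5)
For each fixed transient state $s\in\Tset$, Algorithm~\ref{alg:main_estimate_weights} runs $M$ independent absorption episodes. Each episode terminates almost surely, since $\tauF<\infty$ a.s.\ from a transient state in a finite chain. On the structural event of Theorem~\ref{thm:identify}, the labels $\class$ and $\phase$ are the true ones, so each episode produces a categorical outcome $Z_j(s)\in\cI$ with law exactly $b(s)=(b_{i,k}(s))_{(i,k)\in\cI}$ from Definition~\ref{def:main_b}. Three facts make this a genuine probability vector: the events indexed by $(i,k)$ are disjoint, they exhaust the probability space, and the recorded offset $(\phase(X_{\tauF})-\tauF)\bmod d_i$ is a single residue. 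The estimator $\bhat(s)$ is then the empirical distribution of $M$ i.i.d.\ draws from a distribution on $N$ atoms. The task therefore reduces to a uniform $\ell_1$ concentration bound for empirical distributions on a finite alphabet, combined with a union bound over $s\in\Tset$.

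For the $\ell_1$ bound, I will use the dual representation $\|\bhat(s)-b(s)\|_1=2\max_{A\subseteq\cI}\bigl(\bhat_A(s)-b_A(s)\bigr)$, where $b_A=\sum_{(i,k)\in A}b_{i,k}$. For a fixed subset $A$, the quantity $\bhat_A(s)$ is an average of $M$ i.i.d.\ Bernoulli$(b_A(s))$ variables. Hoeffding's inequality then gives
\[
\Pr\bigl(\bhat_A(s)-b_A(s)\ge \epsb/2\bigr)\le \exp(-M\epsb^2/2).
\]
A union bound over the $2^N$ subsets yields $\Pr(\|\bhat(s)-b(s)\|_1>\epsb)\le 2^N e^{-M\epsb^2/2}$. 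A second union bound over the $|\Tset|$ transient states gives a total failure probability of at most $|\Tset|\,2^N e^{-M\epsb^2/2}$. It remains to check that the stated condition makes this at most $\delta$. Since $N\log 2\le N$, it suffices that $M\epsb^2/2\ge N+\log(|\Tset|/\delta)$, which the hypothesis $M\ge 8\epsb^{-2}(N+\log(|\Tset|/\delta))$ implies with room to spare. The constant $8$ is looser than needed and absorbs any slack, for instance if one instead uses the bounded-differences (McDiarmid) version together with $\mathbb E\|\bhat-b\|_1\le\sqrt{N/M}$.

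Independence across different $s$ is not needed, because only union bounds are used. For recurrent states, $\bhat$ is set exactly to the indicator values, so they contribute no error and the maximum over $\Tset$ is the whole claim.

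The only delicate step is the first one: justifying that the empirical counts are genuinely i.i.d.\ categorical samples with mean exactly $b(s)$. This requires four ingredients. First, the episodes are independent runs of the generative model. Second, the while-loop terminates almost surely. Third, on the structural event the computed phase offset coincides with the one in Definition~\ref{def:main_b}. Fourth, the cyclic relabeling ambiguity in Theorem~\ref{thm:identify} is harmless, because the basis and anchors are defined with respect to the same recovered labeling. Once this identification is in place, the rest is the standard concentration argument above.
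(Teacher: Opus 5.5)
Your proof is correct. It reaches the $\ell_1$ bound by a different concentration route than the paper, and the route you mention in passing at the end is exactly the paper's.

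The paper treats $f_s=\|\widehat b(s)-b(s)\|_1$ as a function of the $M$ episodes with bounded differences $2/M$ and applies McDiarmid's inequality around its mean. It bounds the mean by $\mathbb E f_s\le\sqrt{N}\,\mathbb E\|\widehat b(s)-b(s)\|_2\le\sqrt{N/M}$, using Cauchy--Schwarz, Jensen, and the multinomial variance identity. A union bound over $s$ then gives $\max_s f_s\le\sqrt{N/M}+\sqrt{2\log(|\mathsf T|/\delta)/M}$, and the stated condition on $M$ pushes this below $\varepsilon_b$.

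You instead use the total-variation identity $\|p-q\|_1=2\max_A(p_A-q_A)$ and apply Hoeffding's inequality to each of the $2^N$ subset events. In your argument the dimension enters as $\log 2^N$ through a union bound, not through an expectation estimate.

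What each route buys:
\begin{itemize}
\item Yours is more elementary, needing only Hoeffding and union bounds with no mean computation. It also shows that the hypothesis holds with at least a factor of four to spare, since $M\ge 2\varepsilon_b^{-2}\bigl(N\log 2+\log(|\mathsf T|/\delta)\bigr)$ already suffices.
\item The paper's gives an explicit deviation bound in which the dimension term $\sqrt{N/M}$ is additively separated from the confidence term. Its expectation bound can also be reused in in-expectation arguments.
\end{itemize}

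You also justify carefully that each episode outcome is an exact categorical draw from $b(s)$: a.s.\ termination, consistency of the computed phase offset with Definition~\ref{def:main_b}, and harmlessness of cyclic relabeling. This fills in a step the paper simply asserts ``by definition.''
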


\begin{proof}[Proof of Lemma~\ref{lem:bhats}]
Fix a transient state $s\in\Tset$.
One absorption episode from $s$ produces a trajectory $(X_t)_{t\ge 0}$ until
$\tau_{\Fset}=\min\{t\ge 0:X_t\in\Fset\}$ and yields a terminal index
$Y\in\cI$ (the recovered recurrent class and phase).
By definition, $Y$ is a categorical random variable with distribution $b(s)=(b_{i,k}(s))_{(i,k)\in\cI}$, and $\bhat(s)$ is the empirical distribution of $M$ i.i.d.\ copies of $Y$.

Define $f_s:=\|\bhat(s)-b(s)\|_1$ as a function of the $M$ samples.
Changing one sample changes $\bhat(s)$ by at most $2/M$ in $\ell_1$, hence $f_s$ has bounded differences with constants $2/M$.
By McDiarmid's inequality, for any $t>0$,
\[
\Pr\!\left(f_s-\E f_s\ge t\right)\le \exp\!\left(-\frac{M t^2}{2}\right).
\]
Also, by Cauchy--Schwarz and Jensen,
\[
\E f_s
\le \sqrt{N}\,\E\|\bhat(s)-b(s)\|_2
\le \sqrt{N}\,\sqrt{\E\|\bhat(s)-b(s)\|_2^2}.
\]
A direct computation for the multinomial estimator gives
\[
\E\|\bhat(s)-b(s)\|_2^2
=
\sum_{(i,k)\in\cI} \text{Var} \big(\bhat_{i,k}(s)\big)
=
\frac{1-\|b(s)\|_2^2}{M}
\le \frac{1}{M},
\]
so $\E f_s\le \sqrt{N/M}$.

Now set $t=\sqrt{\frac{2\log(|\Tset|/\delta)}{M}}$ and take a union bound over all $s\in\Tset$ to obtain
\[
\max_{s\in\Tset} f_s
\le
\sqrt{\frac{N}{M}}+\sqrt{\frac{2\log(|\Tset|/\delta)}{M}}
\]
with probability at least $1-\delta$.
The stated sufficient condition on $M$ makes the right-hand side at most $\epsb$.
\end{proof}

\begin{corollary}[Choosing $M$ to ensure $\widehat\gamma<1$]\label{cor:M_for_contraction}
Fix $\gamma\in(0,1)$ and constants $C_{{\mathrm q},\infty}$ and $C_{\infty,{\mathrm q}}$.
Set
\[
\varepsilon_b^{\mathrm{tar}}
:=
\frac{1-\gamma}{2\,C_{{\mathrm q},\infty}\,C_{\infty,{\mathrm q}}}.
\]
If Lemma~\ref{lem:bhats} holds with $\varepsilon_b\le \varepsilon_b^{\mathrm{tar}}$, then we have $\widehat\gamma \le \frac{1+\gamma}{2}<1$.
In particular, by Lemma~\ref{lem:bhats}, it suffices to choose
\[
M \;\ge\; \frac{8}{(\varepsilon_b^{\mathrm{tar}})^2}\Bigl( N + \log\frac{|\Tset|}{\delta}\Bigr)
\]
to guarantee $\widehat\gamma<1$ on the event of Lemma~\ref{lem:bhats}.
\end{corollary}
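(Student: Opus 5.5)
The plan is to read $\widehat\gamma$ as the Lipschitz constant, in $\|\cdot\|_{\mathrm q}$ restricted to the anchored subspace $W$, of the learned projected map $v\mapsto\widehat\Pi(r+Pv)$. Its linear part is $v\mapsto\widehat\Pi Pv$. The two constants will be the norm-comparison constants
\[
\|x\|_{\mathrm q}\le C_{{\mathrm q},\infty}\|x\|_\infty \quad (x\in\mathbb R^n),
\qquad
\|w\|_\infty\le C_{\infty,{\mathrm q}}\|w\|_{\mathrm q}\quad (w\in W).
\]
The first is finite because $\|\cdot\|_{\mathrm q}$ is a seminorm on the finite-dimensional space $\mathbb R^n$. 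The second is the constant $C_W$ of Lemma~\ref{lem:W_norm_equiv}.

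First I will check that $\widehat\Pi$ maps into $W$. This uses the fact that the recurrent-state values $\widehat b_{i,k}(a_{j,\ell})=\mathbf 1\{(i,k)=(j,\ell)\}$ are set exactly in Algorithm~\ref{alg:main_estimate_weights}. The same computation as in Lemma~\ref{lem:Pi_kernel} then gives $(\widehat\Pi v)(a_{j,\ell})=0$. Hence the iteration stays in $W$, where $\|\cdot\|_{\mathrm q}$ is a genuine norm.

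Next, for $v\in W$, I split
\[
\widehat\Pi Pv=\Pi Pv+(\widehat\Pi-\Pi)Pv .
\]
For the first term, Proposition~\ref{lem:Pi_identity} gives $\|\Pi Pv\|_{\mathrm q}=\|Pv\|_{\mathrm q}$, and Theorem~\ref{thm:quotient_contraction} bounds this by $\gamma\|v\|_{\mathrm q}$. For the second term I chain four bounds in order: the comparison constant $C_{{\mathrm q},\infty}$; then Lemma~\ref{lem:main_Pihat_dev} on the event of Lemma~\ref{lem:bhats}; then $\|Pv\|_\infty\le\|v\|_\infty$ by row-stochasticity; and finally $C_{\infty,{\mathrm q}}$ on $W$. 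This yields
\[
\|(\widehat\Pi-\Pi)Pv\|_{\mathrm q}\le C_{{\mathrm q},\infty}\,\varepsilon_b\,\|v\|_\infty\le C_{{\mathrm q},\infty}C_{\infty,{\mathrm q}}\,\varepsilon_b\,\|v\|_{\mathrm q}.
\]
Adding the two terms gives $\widehat\gamma\le\gamma+C_{{\mathrm q},\infty}C_{\infty,{\mathrm q}}\varepsilon_b$, which is the quantitative form of Lemma~\ref{lem:main_hat_contraction}.

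Plugging in $\varepsilon_b\le\varepsilon_b^{\mathrm{tar}}$ gives $\widehat\gamma\le\gamma+(1-\gamma)/2=(1+\gamma)/2<1$. The sample-size statement is then immediate: apply Lemma~\ref{lem:bhats} with $\varepsilon_b=\varepsilon_b^{\mathrm{tar}}$, noting that $\varepsilon_b^{\mathrm{tar}}\in(0,1)$ can be assumed by enlarging constants if needed.

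The only delicate step is making the notion of $\widehat\gamma$ precise. One must confirm that the perturbation is measured in the quotient norm on $W$, and not in the pullback seminorm on all of $\mathbb R^n$. The point is that $\|v\|_\infty$ can only be bounded by $\|v\|_{\mathrm q}$ on $W$. This is exactly why we need the preliminary check that $\widehat\Pi$ preserves $W$ and that the iterates start in $W$.
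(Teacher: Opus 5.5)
Your proof is correct and follows the paper's route. The paper defines $\widehat\gamma:=\gamma+C_{{\mathrm q},\infty}C_{\infty,{\mathrm q}}\varepsilon_b$ in Lemma~\ref{lem:projSA_stability_template} and justifies it there by your same split $\widehat\Pi Px=\Pi Px+(\widehat\Pi-\Pi)Px$ and chain of norm comparisons on $\mathrm{range}(\widehat\Pi)=W$; the corollary is then the arithmetic $\gamma+(1-\gamma)/2=(1+\gamma)/2$ plus Lemma~\ref{lem:bhats} at $\varepsilon_b=\varepsilon_b^{\mathrm{tar}}$, and no enlarging of constants is needed to get $\varepsilon_b^{\mathrm{tar}}<1$, since $C_{{\mathrm q},\infty}C_{\infty,{\mathrm q}}\ge 1$ automatically whenever $W\neq\{0\}$.
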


\subsection{Proofs in Section \ref{projSA}} \label{proof_proj_SA}

\begin{lemma}\label{lem:optionB_stability}
Work on the event of Theorem~\ref{thm:identify} and Lemma~\ref{lem:bhats}.
 Under Algorithm \ref{alg:main_psa} with the recursion $v_{t+1} = \Pihat\Bigl((1-\alpha_t)v_t + \alpha_t\,\widehat T(v_t)\Bigr)$ and $\alpha_t=\alpha/(t+1)$. Assume the oracle noise along the iterates satisfies the uniform conditional second-moment bound
\begin{equation}\label{eq:oracleA_along_iterates}
\E\!\left[\|\widehat T(v_t)-(r+Pv_t)\|_\infty^2 | \cF_t\right]\le \sigma^2
\qquad\text{for all }t,
\end{equation}
where $\cF_t=\sigma(v_0,\dots,v_t)$.
Then there exists a finite constant $C_{\mathrm{stab}}$ (depending on $\alpha,\gamma,N,\|r\|_\infty$, and norm-equivalence constants)
such that
\begin{equation}
    \sup_{t\ge 0}\E\|v_t-v^\star\|_{\mathrm q}^2 \leq C_{\mathrm{stab}}.
\end{equation}

Consequently, by norm equivalence on $\mathrm{range}(\Pi)$, there is a finite constant $C_{\infty,q}$ such that
\begin{equation}
\sup_{t\ge 0}\E\|v_t\|_\infty^2
\leq
2C_{\infty,q}^2\,C_{\mathrm{stab}} + 2\|v^\star\|_\infty^2
\eqqcolon B_{\sup}^2 <\infty.
\end{equation}
\end{lemma}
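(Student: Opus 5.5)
We will prove the bound with a Lyapunov recursion for $e_t:=\|v_t-v^\star\|_{\mathrm q}^2$, using the quotient contraction of Theorem~\ref{thm:quotient_contraction} and the perturbed contraction of Lemma~\ref{lem:main_hat_contraction}. Write $\widehat T(v_t)=r+Pv_t+\xi_t$, where $\xi_t$ is the oracle noise. By construction $\E[\xi_t| \cF_t]=0$, and \eqref{eq:oracleA_along_iterates} gives $\E[\|\xi_t\|_\infty^2| \cF_t]\le\sigma^2$. Let $\widehat v^\star$ be the fixed point of the learned projected map $v\mapsto\widehat\Pi(r+Pv)$ on $\mathrm{range}(\widehat\Pi)$. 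It exists and is unique because $\widehat\gamma<1$. Since $\|v_t-v^\star\|_{\mathrm q}^2\le 2\|v_t-\widehat v^\star\|_{\mathrm q}^2+2\|\widehat v^\star-v^\star\|_{\mathrm q}^2$, and the last term is a deterministic constant, it suffices to bound $\sup_t\E\|v_t-\widehat v^\star\|_{\mathrm q}^2$.

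\textbf{One-step decomposition.} By linearity of $\widehat\Pi$ and the identity $\widehat v^\star=\widehat\Pi((1-\alpha_t)\widehat v^\star+\alpha_t(r+P\widehat v^\star))$,
\[
v_{t+1}-\widehat v^\star=\widehat\Pi\bigl((1-\alpha_t)(v_t-\widehat v^\star)+\alpha_tP(v_t-\widehat v^\star)\bigr)+\alpha_t\widehat\Pi\xi_t .
\]
Lemma~\ref{lem:main_hat_contraction}, applied on $W$ and with the convexity of the norm, bounds the deterministic part in $\|\cdot\|_{\mathrm q}$ by $(1-\alpha_t(1-\widehat\gamma))\|v_t-\widehat v^\star\|_{\mathrm q}$. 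The reason is that $\widehat\Pi$ differs from $\Pi$ by $O(\varepsilon_b)$, $\Pi$ is a $\|\cdot\|_{\mathrm q}$-isometry (Proposition~\ref{lem:Pi_identity}), and $P$ contracts by $\gamma$. Expand the square and take conditional expectation. The cross term vanishes because $\xi_t$ has conditional mean zero and $\widehat\Pi$ is deterministic on the good event. The noise term is at most $\alpha_t^2\|\widehat\Pi\|^2C_{{\mathrm q},\infty}^2\sigma^2$, using $\|\widehat\Pi\|_\infty\le 1+N$, which holds because $\sum_{(i,k)}\widehat b_{i,k}(s)\le1$, together with a constant $C_{{\mathrm q},\infty}$ comparing $\|\cdot\|_{\mathrm q}$ to $\|\cdot\|_\infty$. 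This yields
\[
\E e'_{t+1}\le(1-\alpha_t(1-\widehat\gamma))^2\E e'_t+\alpha_t^2 c\,\sigma^2,\qquad e'_t:=\|v_t-\widehat v^\star\|_{\mathrm q}^2 .
\]

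\textbf{Unrolling and the sup-norm bound.} With $\alpha_t=\alpha/(t+1)$, the product of the contraction factors is at most $1$, and $\sum_t\alpha_t^2\le 2\alpha^2$. The recursion therefore gives $\sup_t\E e'_t\le \E e'_0+2c\alpha^2\sigma^2$, and this is finite because $v_0$ is deterministic. No rate is needed here, only boundedness, so no stepsize tuning is required. The second display in the lemma follows from $v_t,v^\star\in W$ (they are anchored), from Lemma~\ref{lem:W_norm_equiv} with $C_{\infty,q}:=C_W$, and from $(a+b)^2\le2a^2+2b^2$.

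\textbf{Main obstacle.} The delicate point is that the contraction and the norm are defined on $W=\mathrm{range}(\Pi)$, while the iterates lie in $\mathrm{range}(\widehat\Pi)$. The anchor constraint $v(a_{i,k})=0$ is preserved because $\widehat b$ is exact on recurrent states, so $\mathrm{range}(\widehat\Pi)\subseteq W$. This lets us use Lemma~\ref{lem:W_norm_equiv} throughout. The other subtlety is making the $O(\varepsilon_b)$ perturbation explicit as $\|(\widehat\Pi-\Pi)x\|_{\mathrm q}\le C_{{\mathrm q},\infty}\varepsilon_b\|x\|_\infty$, where $\|x\|_\infty$ is in turn bounded by a multiple of $\|x\|_{\mathrm q}$ via Lemma~\ref{lem:W_norm_equiv}; that is exactly what Lemma~\ref{lem:main_hat_contraction} packages.
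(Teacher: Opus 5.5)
Your proof is correct, but it takes a different route from the paper's proof of this lemma. You center the error at the fixed point $\widehat v^\star$ of the learned map $v\mapsto\widehat\Pi(r+Pv)$. The recursion for $\|v_t-\widehat v^\star\|_{\mathrm q}^2$ then contains only martingale noise, so a nonexpansive drift together with $\sum_t\alpha_t^2<\infty$ already gives boundedness. You add the deterministic gap $\|\widehat v^\star-v^\star\|_{\mathrm q}$ at the end. This is essentially the strategy the paper adopts later, in Lemmas~\ref{lem:projSA_stability_template} and~\ref{lem:contract_noise_bounded_iterates} and Remark~\ref{rem:no_log}.

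The paper's proof of this lemma instead centers at the true fixed point $v^\star$ and carries the projection mismatch $\delta_t=(\widehat\Pi-\Pi)(r+Pv_t)$ as a drift term. That term contributes order $\alpha_t\varepsilon_b$ per step, so it is not summable. It is also self-referential, since it scales with $\|v_t\|_\infty$ and hence with $m_t=\mathbb E\|v_t-v^\star\|_{\mathrm q}^2$. The paper therefore needs three ingredients:
\begin{itemize}
\item Young's inequality to split off the drift;
\item the bound $\mathbb E\|\delta_t\|_{\mathrm q}^2\le C_\delta^2\varepsilon_b^2(1+m_t)$;
\item a smallness condition $C_\delta^2\varepsilon_b^2\le(1-\gamma)^2/2$, so that the strict margin $1-\gamma$ absorbs the bias.
\end{itemize}

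Your route is shorter and yields an explicit constant, namely $\|v_0-\widehat v^\star\|_{\mathrm q}^2+O(\alpha^2\sigma^2)$ plus the gap. The centered recursion needs only $\widehat\gamma\le 1$. The price is the auxiliary object $\widehat v^\star$: its existence comes from $\widehat\gamma<1$, a smallness condition on $\varepsilon_b$ of the same kind as the paper's. You also leave its distance to $v^\star$ as an unquantified finite constant; Theorem~\ref{thm:projSA_rate_template} shows it is $O(\varepsilon_b)$. The paper's route works directly with the true target and the unperturbed factor $\gamma$, and it shows explicitly how the $\varepsilon_b$ bias enters the second moment.

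A few small points you should make explicit:
\begin{itemize}
\item Expanding the square with a vanishing cross term uses that $\|\cdot\|_{\mathrm q}$ is induced by the Lyapunov inner product of Theorem~\ref{thm:quotient_contraction}.
\item The convexity bound $(1-\alpha_t)+\alpha_t\widehat\gamma$ requires $\alpha_t\le 1$, i.e.\ $\alpha\le 1$ when $\alpha_t=\alpha/(t+1)$. For $\alpha>1$, the first $\lceil\alpha\rceil$ factors can exceed one. This only changes the constant, and the paper's ``for all large enough $t$'' hides the same issue.
\item Since each $\widehat b(s)$ is a probability vector, $\|\widehat\Pi\|_\infty\le 2$, which is sharper than your $1+N$.
\end{itemize}
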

\begin{proof}[Proof of Lemma~\ref{lem:optionB_stability}]
Work on the event
\begin{equation}
\mathsf E \;:=\;\Bigl(\text{Theorem~\ref{thm:identify} holds}\Bigr)\cap\Bigl(\text{Lemma~\ref{lem:bhats} holds}\Bigr),
\end{equation}
so that in particular the anchor exactness and transient accuracy hold:
\begin{equation}
\bhat_{j,\ell}(a_{i,k}) = b_{j,\ell}(a_{i,k})=\bbm{1}\{(j,\ell)=(i,k)\},
\qquad
\max_{s\in\Tset}\|\bhat(s)-b(s)\|_1\le \epsb.
\end{equation}
Recall the projections
\begin{equation}
(\Pi v)(s)=v(s)-\sum_{(i,k)\in\cI} v(a_{i,k})\,b_{i,k}(s),
\qquad
(\Pihat v)(s)=v(s)-\sum_{(i,k)\in\cI} v(a_{i,k})\,\bhat_{i,k}(s),
\end{equation}
and define the anchored subspace
\begin{equation}
W \;:=\;\Bigl\{v\in\mathbb R^n:\ v(a_{i,k})=0\;\;\forall (i,k)\in\cI\Bigr\}.
\end{equation}

Using anchor exactness, for any \(v\) and any anchor \(a_{i,k}\),
\begin{equation}
(\Pihat v)(a_{i,k})
=
v(a_{i,k})-\sum_{(j,\ell)\in\cI} v(a_{j,\ell})\,\bhat_{j,\ell}(a_{i,k})
=
v(a_{i,k})-v(a_{i,k})
=
0.
\end{equation}
Hence \(\mathrm{range}(\Pihat)\subseteq W\). Conversely, if \(w\in W\), then all anchor values vanish and thus \(\Pihat w=w\), implying \(W\subseteq \mathrm{range}(\Pihat)\). Therefore,
\begin{equation}
\mathrm{range}(\Pihat)=W
\qquad\text{and}\qquad
\Pihat|_{W}=I_{W}.
\end{equation}
The same holds for \(\Pi\). In particular, for all \(t\ge 1\),
\begin{equation}
v_t\in W,
\qquad
v^\star\in W,
\qquad
e_t:=v_t-v^\star\in W.
\end{equation}

We now formulate the error recursion. Let
\begin{equation}
T(v):=r+Pv,
\qquad
\xi_{t+1}:=\widehat T(v_t)-T(v_t).
\end{equation}
By the oracle condition, \(\E[\xi_{t+1}|\cF_t]=0\) and \(\E[\|\xi_{t+1}\|_\infty^2|\cF_t]\le \sigma^2\).
Define \(\eta_{t+1}:=\Pihat\xi_{t+1}\in W\). Since \(\Pihat\) is \(\cF_t\)-measurable and linear,
\begin{equation}
\E[\eta_{t+1}|\cF_t]=0.
\end{equation}
The update is
\begin{equation}
v_{t+1}=\Pihat\bigl((1-\alpha_t)v_t+\alpha_t\widehat T(v_t)\bigr)
=
(1-\alpha_t)v_t+\alpha_t\Pihat T(v_t)+\alpha_t\eta_{t+1}.
\end{equation}
Subtract \(v^\star=\Pi T(v^\star)\) and add/subtract \(\Pi T(v_t)\) to obtain
\begin{equation}\label{eq:stab_recursion}
e_{t+1}
=
(1-\alpha_t)e_t+\alpha_t\Pi(Pe_t)
+\alpha_t\delta_t+\alpha_t\eta_{t+1},
\end{equation}
where the projection-mismatch term is
\begin{equation}
\delta_t:=(\Pihat-\Pi)T(v_t)=(\Pihat-\Pi)(r+Pv_t)\in W.
\end{equation}

Let \(D:=1-\gamma\in(0,1)\). By Theorem~\ref{thm:quotient_contraction} and the definition of \(\|\cdot\|_{\mathrm q}\),
\begin{equation}
\|\Pi(Pe_t)\|_{\mathrm q}\le \gamma \|e_t\|_{\mathrm q},
\end{equation}
and hence by the triangle inequality,
\begin{equation}\label{eq:stab_contraction}
\bigl\|(1-\alpha_t)e_t+\alpha_t\Pi(Pe_t)\bigr\|_{\mathrm q}
\le (1-\alpha_t)\|e_t\|_{\mathrm q}+\alpha_t\gamma\|e_t\|_{\mathrm q}
=
\bigl(1-\alpha_tD\bigr)\|e_t\|_{\mathrm q}.
\end{equation}

Let
\begin{equation}
u_t:=(1-\alpha_t)e_t+\alpha_t\Pi(Pe_t)\in W.
\end{equation}
Then \eqref{eq:stab_recursion} becomes \(e_{t+1}=u_t+\alpha_t\delta_t+\alpha_t\eta_{t+1}\).
Since \(\|\cdot\|_{\mathrm q}\) is induced by an inner product (by construction in Theorem~\ref{thm:quotient_contraction}),
the martingale cross term cancels:
\begin{equation}
\E\bigl[\langle u_t+\alpha_t\delta_t,\eta_{t+1}\rangle_{\mathrm q}| \cF_t\bigr]
=
\Bigl\langle u_t+\alpha_t\delta_t,\E[\eta_{t+1}|\cF_t]\Bigr\rangle_{\mathrm q}
=0.
\end{equation}
Therefore,
\begin{equation}\label{eq:stab_second_moment_expand}
\E[\|e_{t+1}\|_{\mathrm q}^2| \cF_t]
=
\|u_t+\alpha_t\delta_t\|_{\mathrm q}^2
+\alpha_t^2\E[\|\eta_{t+1}\|_{\mathrm q}^2| \cF_t].
\end{equation}
Use Young's inequality \(2\langle u,\delta\rangle\le D\|u\|^2 + D^{-1}\|\delta\|^2\) to bound
\begin{equation}
\|u_t+\alpha_t\delta_t\|_{\mathrm q}^2
\le
\|u_t\|_{\mathrm q}^2 + \alpha_t D\|u_t\|_{\mathrm q}^2
+\frac{\alpha_t}{D}\|\delta_t\|_{\mathrm q}^2 + \alpha_t^2\|\delta_t\|_{\mathrm q}^2.
\end{equation}
Together with \eqref{eq:stab_contraction}, this yields
\begin{equation}\label{eq:stab_m_recursion_cond}
\E[\|e_{t+1}\|_{\mathrm q}^2| \cF_t]
\le
(1-\alpha_tD)\|e_t\|_{\mathrm q}^2
+\frac{\alpha_t}{D}\|\delta_t\|_{\mathrm q}^2
+\alpha_t^2\|\delta_t\|_{\mathrm q}^2
+\alpha_t^2\E[\|\eta_{t+1}\|_{\mathrm q}^2| \cF_t].
\end{equation}

To bound the terms\(\delta_t\) and \(\eta_{t+1}\), let \(N:=|\cI|\). By the definition of \(\delta_t\) and the accuracy event,
for all \(s\in S\),
\begin{equation}
|(\delta_t)(s)|
\le
\|T(v_t)\|_\infty \|b(s)-\bhat(s)\|_1
\le
\epsb\,\|T(v_t)\|_\infty.
\end{equation}
Hence, for a norm-equivalence constant \(C_{q,\infty}>0\) on \(W\),
\begin{equation}\label{eq:stab_delta_bound}
\|\delta_t\|_{\mathrm q}\le C_{q,\infty}\|\delta_t\|_\infty
\le
C_{q,\infty}\epsb\,\|r+Pv_t\|_\infty
\le
C_{q,\infty}\epsb\,(R+\|v_t\|_\infty).
\end{equation}
Similarly, \(\|\Pihat x\|_\infty\le 2\|x\|_\infty\), so for \(\eta_{t+1}=\Pihat\xi_{t+1}\),
\begin{equation}\label{eq:stab_eta_bound}
\E[\|\eta_{t+1}\|_{\mathrm q}^2| \cF_t]
\le
C_{q,\infty}^2\E[\|\eta_{t+1}\|_\infty^2| \cF_t]
\le
4C_{q,\infty}^2\E[\|\xi_{t+1}\|_\infty^2| \cF_t]
\le
4C_{q,\infty}^2\sigma^2.
\end{equation}

We now provide uniform boundedness of \(\E\|e_t\|_{\mathrm q}^2\). Taking expectations in \eqref{eq:stab_m_recursion_cond} and using \eqref{eq:stab_delta_bound}-\eqref{eq:stab_eta_bound},
we get a scalar recursion of the form
\begin{equation}\label{eq:stab_m_recursion}
m_{t+1}
\le
(1-\alpha_tD)m_t
+\frac{\alpha_t}{D}\E\|\delta_t\|_{\mathrm q}^2
+\alpha_t^2\Bigl(\E\|\delta_t\|_{\mathrm q}^2 + \sigma_\eta^2\Bigr),
\end{equation}
where \(m_t:=\E\|e_t\|_{\mathrm q}^2\) and \(\sigma_\eta:=2C_{q,\infty}\sigma\).
Using \eqref{eq:stab_delta_bound} and \(\|Pv_t\|_\infty\le \|v_t\|_\infty\),
\begin{equation}
\E\|\delta_t\|_{\mathrm q}^2
\le
2(C_{q,\infty}\epsb)^2\Bigl(R^2+\E\|v_t\|_\infty^2\Bigr).
\end{equation}
On \(W\), norms are equivalent, so there is \(C_{\infty,q}>0\) such that
\begin{equation}
\|v_t\|_\infty
\le
\|v^\star\|_\infty + \|e_t\|_\infty
\le
\|v^\star\|_\infty + C_{\infty,q}\|e_t\|_{\mathrm q}.
\end{equation}
Thus \(\E\|v_t\|_\infty^2 \le C_0 + C_1 m_t\) for finite constants \(C_0,C_1\), and hence
\begin{equation}
\E\|\delta_t\|_{\mathrm q}^2 \le C_\delta^2\epsb^2\bigl(1+m_t\bigr)
\end{equation}
for a finite constant \(C_\delta\).
Plugging into \eqref{eq:stab_m_recursion} yields, for all large enough \(t\),
\begin{equation}
m_{t+1}
\le
\Bigl(1-\alpha_tD + \tfrac{\alpha_t}{D}C_\delta^2\epsb^2\Bigr)m_t
+
\tfrac{\alpha_t}{D}C_\delta^2\epsb^2
+
\alpha_t^2\cdot C_{\mathrm{noise}},
\end{equation}
with \(C_{\mathrm{noise}}<\infty\).
If \(\epsb\) is chosen so that \(D-\tfrac{1}{D}C_\delta^2\epsb^2\ge D/2\), then the above is a standard stable Robbins--Monro recursion with stepsize \(\alpha_t=\alpha/(t+1)\), which implies
\begin{equation}
\sup_{t\ge 0} m_t \le C_{\mathrm{stab}}<\infty.
\end{equation}
Finally, using norm equivalence on \(W\),
\begin{equation}
\sup_{t\ge 0}\E\|v_t\|_\infty^2
\le
2C_{\infty,q}^2\sup_{t\ge 0}\E\|e_t\|_{\mathrm q}^2 + 2\|v^\star\|_\infty^2
<\infty.
\end{equation}
This completes the proof.
\end{proof}

\begin{lemma}[Formal version of Lemma \ref{lem:main_Pihat_dev}]\label{lem:pihat_dev}
Assume $\widehat\Pi$ is built from anchors $\{a_{i,k}\}$ and weights $\{\widehat b_{i,k}\}$ via
\begin{equation}\label{eq:pihat_def_again}
(\widehat\Pi v)(s)
=
v(s)-\sum_{(i,k)\in\mathcal I} v(a_{i,k})\,\widehat b_{i,k}(s),
\qquad
(\Pi v)(s)
=
v(s)-\sum_{(i,k)\in\mathcal I} v(a_{i,k})\, b_{i,k}(s),
\end{equation}
and suppose
\begin{equation}\label{eq:bhateps}
\max_{s\in\mathcal T}\|\widehat b(s)-b(s)\|_1
\le
\varepsilon_b,
\qquad
\widehat b_{i,k}(s)=b_{i,k}(s)\ \text{for all }s\in\mathcal F,
\end{equation}
where $\widehat b(s)=(\widehat b_{i,k}(s))_{(i,k)\in\mathcal I}$ and $b(s)=(b_{i,k}(s))_{(i,k)\in\mathcal I}$.

Then for every $v\in\mathbb R^n$,
\begin{equation}\label{eq:pihat_infty_op}
\|(\widehat\Pi-\Pi)v\|_\infty
\le
\varepsilon_b\,\|v\|_\infty.
\end{equation}
Moreover, for any semi-norm $\|\cdot\|_{\mathrm q}$ satisfying a norm-equivalence
\begin{equation}\label{eq:q_infty_equiv}
\|x\|_{\mathrm q}\le C_{{\mathrm q},\infty}\|x\|_\infty,
\end{equation}
we also have
\begin{equation}\label{eq:pihat_q_op}
\|(\widehat\Pi-\Pi)v\|_{\mathrm q}
\le
C_{{\mathrm q},\infty}\,\varepsilon_b\,\|v\|_\infty.
\end{equation}
\end{lemma}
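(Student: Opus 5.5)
The plan is to compute the difference $(\widehat\Pi-\Pi)v$ coordinatewise and then apply Hölder's inequality. Subtracting the two defining formulas in \eqref{eq:pihat_def_again} cancels the $v(s)$ terms and leaves
\[
\bigl((\widehat\Pi-\Pi)v\bigr)(s)=\sum_{(i,k)\in\mathcal I} v(a_{i,k})\bigl(b_{i,k}(s)-\widehat b_{i,k}(s)\bigr)
\]
for every $s\in\mathcal S$.

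The argument then splits by the position of $s$.

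\textbf{Recurrent $s$.} If $s\in\mathcal F$, the assumption in \eqref{eq:bhateps} that $\widehat b_{i,k}(s)=b_{i,k}(s)$ on recurrent states makes every summand vanish. Hence the coordinate is $0$.

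\textbf{Transient $s$.} If $s\in\mathcal T$, view the sum as a pairing between the vector of anchor values $(v(a_{i,k}))_{(i,k)\in\mathcal I}$ and the vector $b(s)-\widehat b(s)$. Hölder's inequality with the $(\ell_\infty,\ell_1)$ pair gives
\[
\bigl|((\widehat\Pi-\Pi)v)(s)\bigr|\le \max_{(i,k)}|v(a_{i,k})|\,\|b(s)-\widehat b(s)\|_1\le \|v\|_\infty\,\varepsilon_b .
\]
Here we use that anchor values are coordinates of $v$, so their maximum is at most $\|v\|_\infty$, and the $\ell_1$ bound comes from \eqref{eq:bhateps}.

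Taking the maximum over all $s$ in both cases yields \eqref{eq:pihat_infty_op}.

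The semi-norm bound \eqref{eq:pihat_q_op} then follows by chaining: apply \eqref{eq:q_infty_equiv} to $x=(\widehat\Pi-\Pi)v$ and then use \eqref{eq:pihat_infty_op}.

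No step is a real obstacle. The only care point is the recurrent case: it relies on the exactness of $\widehat b$ on $\mathcal F$, which Algorithm~\ref{alg:main_estimate_weights} provides on the structural recovery event of Theorem~\ref{thm:identify}. Without that exactness, the recurrent coordinates would not cancel and the bound would fail.
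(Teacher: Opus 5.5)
Your proof is correct and follows the paper's argument essentially step for step. Like the paper, you cancel the $v(s)$ terms coordinatewise, note that the recurrent coordinates vanish because $\widehat b=b$ on $\mathcal F$, bound the transient coordinates by $\|v\|_\infty\,\|\widehat b(s)-b(s)\|_1\le\varepsilon_b\|v\|_\infty$, and obtain the semi-norm bound by applying \eqref{eq:q_infty_equiv} to $(\widehat\Pi-\Pi)v$.
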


\begin{proof}
Fix $s\in S$. From \eqref{eq:pihat_def_again},
\begin{equation}
\big((\widehat\Pi-\Pi)v\big)(s)
=
-\sum_{(i,k)\in\mathcal I} v(a_{i,k})\big(\widehat b_{i,k}(s)-b_{i,k}(s)\big).
\end{equation}
On $s\in\mathcal F$ the difference is zero by assumption.
On $s\in\mathcal T$, taking absolute values and using $|v(a_{i,k})|\le \|v\|_\infty$ gives
\begin{equation}
\big|\big((\widehat\Pi-\Pi)v\big)(s)\big|
\le
\|v\|_\infty \sum_{(i,k)\in\mathcal I}\big|\widehat b_{i,k}(s)-b_{i,k}(s)\big|
=
\|v\|_\infty\,\|\widehat b(s)-b(s)\|_1
\le
\varepsilon_b\,\|v\|_\infty.
\end{equation}
Taking $\max_s$ yields \eqref{eq:pihat_infty_op}. Then \eqref{eq:pihat_q_op} follows from
\eqref{eq:q_infty_equiv} applied to $x=(\widehat\Pi-\Pi)v$.
\end{proof}

\begin{lemma}[Well-posedness of the $\widehat\Pi$-projected fixed point]\label{lem:vhat_star_unique}
Work on the ``good events''. Recall the projections
\begin{equation}
(\Pi v)(s)=v(s)-\sum_{(i,k)\in\cI} v(a_{i,k})\,b_{i,k}(s),
\qquad
(\Pihat v)(s)=v(s)-\sum_{(i,k)\in\cI} v(a_{i,k})\,\bhat_{i,k}(s),
\end{equation}
and define the anchored subspace
\begin{equation}
W \;:=\;\Bigl\{v\in\mathbb R^n:\ v(a_{i,k})=0\;\;\forall (i,k)\in\cI\Bigr\}.
\end{equation}

Using anchor exactness, for any \(v\) and any anchor \(a_{i,k}\),
\begin{equation}
(\Pihat v)(a_{i,k})
=
v(a_{i,k})-\sum_{(j,\ell)\in\cI} v(a_{j,\ell})\,\bhat_{j,\ell}(a_{i,k})
=
v(a_{i,k})-v(a_{i,k})
=
0.
\end{equation}
Hence \(\mathrm{range}(\Pihat)\subseteq W\). Conversely, if \(w\in W\), then all anchor values vanish and thus \(\Pihat w=w\), implying \(W\subseteq \mathrm{range}(\Pihat)\). Therefore,
\begin{equation}
\mathrm{range}(\Pihat)=W
\qquad\text{and}\qquad
\Pihat|_{W}=I_{W}.
\end{equation}
Define the affine map
\begin{equation}\label{eq:Fhat_def}
\widehat F(v)\;:=\;\widehat\Pi(r+Pv).
\end{equation}
Let $\|\cdot\|_{\mathrm q}$ be the quotient norm used in Theorem~\ref{thm:quotient_contraction},
and assume norm equivalence on $W$:
\begin{equation}\label{eq:norm_equiv_W}
\|x\|_{\mathrm q}\le C_{{\mathrm q},\infty}\|x\|_\infty,
\qquad
\|x\|_\infty\le C_{\infty,{\mathrm q}}\|x\|_{\mathrm q},
\qquad \forall x\in W.
\end{equation}
Assume also the uniform projection estimation error bound (Lemma~\ref{lem:pihat_dev}):
\begin{equation}\label{eq:pihat_dev_assump}
\|(\widehat\Pi-\Pi)u\|_\infty \le \varepsilon_b \|u\|_\infty,
\qquad \forall u\in\mathbb R^n.
\end{equation}

Let $B:W\to W$ be the linear operator
\begin{equation}\label{eq:B_def}
B\;:=\; I_W-\Pi P|_W,
\end{equation}
and denote
\begin{equation}\label{eq:kappaB_def}
\kappa_B\coloneqq\|B^{-1}\|_{{\mathrm q}\to{\mathrm q}} < \infty .
\end{equation}
If $\varepsilon_b$ is small enough so that
\begin{equation}\label{eq:small_gain_unique}
\kappa_B\,C_{{\mathrm q},\infty}\,C_{\infty,{\mathrm q}}\,\varepsilon_b < 1,
\end{equation}
then the fixed point equation
\begin{equation}\label{eq:vhat_star_eq}
\widehat v^\star = \widehat\Pi(r+P\widehat v^\star)
\end{equation}
admits a unique solution $\widehat v^\star\in W$.
\end{lemma}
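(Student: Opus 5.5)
The plan is to rewrite the fixed-point equation \eqref{eq:vhat_star_eq} on $W$ as a linear equation and treat the learned projection as a small perturbation of the exact operator $B$. Since $\mathrm{range}(\widehat\Pi)=W$, any solution automatically lies in $W$, so it suffices to solve the equation for $v\in W$. Write $\widehat\Pi=\Pi+\Delta$ with $\Delta:=\widehat\Pi-\Pi$. For $v\in W$, the equation $v=\widehat\Pi(r+Pv)$ becomes
\[
(I_W-\Pi P|_W-\Delta P|_W)\,v=\widehat\Pi r,
\qquad\text{i.e.}\qquad
(B-E)\,v=\widehat\Pi r,\quad E:=\Delta P|_W .
\]
Here $\widehat\Pi r\in W$. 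The operator $E$ maps $W$ into $W$, because $\Delta u=\widehat\Pi u-\Pi u$ and both $\widehat\Pi u$ and $\Pi u$ lie in $W$; anchor exactness is exactly what makes $\mathrm{range}(\widehat\Pi)=W$. Hence existence and uniqueness of $\widehat v^\star\in W$ is equivalent to invertibility of $B-E$ as a map $W\to W$.

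First I would justify that $B$ is invertible on $W$ with $\kappa_B<\infty$. By Theorem~\ref{thm:wellposed} and the isomorphism $S:W\to Q$, $w\mapsto[w]$, Proposition~\ref{lem:Pi_identity} gives $[\Pi P w]=\bar P[w]$. Thus $B$ is conjugate to $I-\bar P$ on $Q$, which is invertible since $\rho(\bar P)<1$ by Theorem~\ref{thm:quotient_contraction}. Finite dimensionality then gives $\kappa_B<\infty$.

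Next I would bound $\|E\|_{\mathrm q\to\mathrm q}$ on $W$. For $w\in W$, combine \eqref{eq:norm_equiv_W}, \eqref{eq:pihat_dev_assump} and $\|Pw\|_\infty\le\|w\|_\infty$:
\[
\|Ew\|_{\mathrm q}\le C_{\mathrm q,\infty}\|\Delta Pw\|_\infty\le C_{\mathrm q,\infty}\varepsilon_b\|Pw\|_\infty\le C_{\mathrm q,\infty}\varepsilon_b\|w\|_\infty\le C_{\mathrm q,\infty}C_{\infty,\mathrm q}\varepsilon_b\|w\|_{\mathrm q}.
\]
The first inequality requires $Ew\in W$, which was verified above.

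Then I would factor $B-E=B(I_W-B^{-1}E)$. Since
\[
\|B^{-1}E\|_{\mathrm q\to\mathrm q}\le\kappa_B C_{\mathrm q,\infty}C_{\infty,\mathrm q}\varepsilon_b<1
\]
by \eqref{eq:small_gain_unique}, a Neumann series shows $I_W-B^{-1}E$ is invertible. Hence $B-E$ is invertible on $W$, and $\widehat v^\star=(B-E)^{-1}\widehat\Pi r$ is the unique solution in $W$.

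The main obstacle is the bookkeeping that keeps everything inside $W$. Norm equivalence is only assumed on $W$, so each vector to which it is applied must be shown to lie in $W$. The key step is $Ew\in W$, and this is where anchor exactness of $\widehat b$ on recurrent states is used. If one preferred to avoid that step, an alternative is to bound $\Delta Pw$ directly in the pulled-back seminorm. However, that would require $\|\cdot\|_{\mathrm q}\le C\|\cdot\|_\infty$ on all of $\mathbb R^n$, which also holds because every seminorm on a finite-dimensional space is dominated by any norm.
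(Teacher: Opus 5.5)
Your proposal is correct and follows essentially the same route as the paper. Both reduce the problem to invertibility of $I_W-\widehat\Pi P|_W$ on $W$, obtain invertibility of $B$ from $\rho(\bar P)<1$ (you via the conjugacy $B=S^{-1}(I-\bar P)S$, the paper via a kernel argument), bound the perturbation by $C_{\mathrm q,\infty}C_{\infty,\mathrm q}\varepsilon_b$, and conclude with a Neumann series. Your bookkeeping is slightly tighter than the paper's: you check $Ew\in W$ before invoking norm equivalence, and your factorization $B-E=B(I_W-B^{-1}E)$ with $E=(\widehat\Pi-\Pi)P|_W$ has the correct sign, whereas the paper writes $\widehat B=B-(\widehat B-B)$, a harmless slip.
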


\begin{proof}
If $v$ satisfies \eqref{eq:vhat_star_eq}, then
\begin{equation}
v = \widehat\Pi(r+Pv)\in \mathrm{range}(\widehat\Pi)=W.
\end{equation}
Conversely, for $v\in W$ we have $\widehat\Pi v=v$, hence \eqref{eq:vhat_star_eq} is equivalent to
\begin{equation}\label{eq:fixed_point_linear_W}
v = \widehat\Pi r + \widehat\Pi P v
\qquad\Longleftrightarrow\qquad
(I_W-\widehat\Pi P|_W)\,v = \widehat\Pi r.
\end{equation}
Therefore existence/uniqueness of $\widehat v^\star$ is equivalent to invertibility of the operator
\begin{equation}\label{eq:Bhat_def}
\widehat B\;:=\;I_W-\widehat\Pi P|_W:W\to W.
\end{equation}

We show $\ker(B)=\{0\}$.
Let $x\in W$ satisfy $Bx=0$. Then
\begin{equation}\label{eq:Bx0}
x = \Pi P x.
\end{equation}
Since $\ker(\Pi)=\mathcal K(P)$ and $\Pi$ is the canonical representative map of the quotient
(Theorem~\ref{thm:wellposed}), we have
\begin{equation}
[x] = [\Pi x] \quad\text{and}\quad [\Pi y]=[y]\quad\text{for all }y,
\end{equation}
where $[\cdot]$ denotes the equivalence class in $Q:=\mathbb R^n/\mathcal K(P)$.
Applying the quotient map $[\cdot]$ to \eqref{eq:Bx0} gives
\begin{equation}\label{eq:quotient_fixed}
[x] = [P x] = \overline P[x].
\end{equation}
Hence
\begin{equation}\label{eq:kernel_I_Pbar}
(I-\overline P)[x]=0.
\end{equation}
By Theorem~\ref{thm:quotient_contraction}, $\rho(\overline P)<1$, so $1$ is not an eigenvalue of $\overline P$ and
therefore $I-\overline P$ is invertible on $Q$. Thus \eqref{eq:kernel_I_Pbar} implies
\begin{equation}
[x]=0 \quad\Longrightarrow\quad x\in\mathcal K(P).
\end{equation}
But $x\in W=\mathrm{range}(\Pi)$ and $\ker(\Pi)=\mathcal K(P)$ implies
\begin{equation}
W\cap \mathcal K(P)=\{0\}.
\end{equation}
Therefore $x=0$, proving $\ker(B)=\{0\}$. Since $W$ is finite-dimensional, $B$ is invertible and $\kappa_B<\infty$ in
\eqref{eq:kappaB_def} is well-defined.

We now bound the perturbation $\widehat B-B$ in operator norm.
From \eqref{eq:B_def}--\eqref{eq:Bhat_def},
\begin{equation}
\widehat B - B
=
(\Pi-\widehat\Pi)P|_W.
\end{equation}
Fix $x\in W$. Using \eqref{eq:norm_equiv_W}, \eqref{eq:pihat_dev_assump}, and $\|Px\|_\infty\le \|x\|_\infty$,
\begin{align}
\|(\widehat B-B)x\|_{\mathrm q}
&=
\|(\Pi-\widehat\Pi)P x\|_{\mathrm q}\\
&\le
C_{{\mathrm q},\infty}\|(\Pi-\widehat\Pi)P x\|_\infty\\
&\le
C_{{\mathrm q},\infty}\,\varepsilon_b\,\|P x\|_\infty\\
&\le
C_{{\mathrm q},\infty}\,\varepsilon_b\,\|x\|_\infty\\
&\le
C_{{\mathrm q},\infty}\,C_{\infty,{\mathrm q}}\,\varepsilon_b\,\|x\|_{\mathrm q}.
\end{align}
Therefore
\begin{equation}\label{eq:EB_bound}
\|\widehat B-B\|_{{\mathrm q}\to{\mathrm q}}
\le
C_{{\mathrm q},\infty}\,C_{\infty,{\mathrm q}}\,\varepsilon_b.
\end{equation}

We now write
\begin{equation}\label{eq:factorization}
\widehat B
=
B - (\widehat B-B)
=
B\Big(I_W - B^{-1}(\widehat B-B)\Big).
\end{equation}
Let
\begin{equation}\label{eq:E_def}
E := B^{-1}(\widehat B-B).
\end{equation}
Then by \eqref{eq:kappaB_def} and \eqref{eq:EB_bound},
\begin{equation}\label{eq:E_norm}
\|E\|_{{\mathrm q}\to{\mathrm q}}
\le
\|B^{-1}\|_{{\mathrm q}\to{\mathrm q}}\cdot \|\widehat B-B\|_{{\mathrm q}\to{\mathrm q}}
\le
\kappa_B C_{{\mathrm q},\infty}\,C_{\infty,{\mathrm q}}\,\varepsilon_b.
\end{equation}
Under the small-gain condition \eqref{eq:small_gain_unique}, we have $\|E\|_{{\mathrm q}\to{\mathrm q}}<1$,
so $I_W-E$ is invertible with Neumann series
\begin{equation}
(I_W-E)^{-1}=\sum_{m=0}^\infty E^m,
\end{equation}
and hence $\widehat B$ is invertible by \eqref{eq:factorization}.

To show the existence and uniqueness of $\widehat v^\star$), since $\widehat B$ is invertible on $W$, the linear system \eqref{eq:fixed_point_linear_W} has the unique solution
\begin{equation}
\widehat v^\star = \widehat B^{-1}\widehat\Pi r \in W.
\end{equation}
By Step 1 this $\widehat v^\star$ is exactly the unique fixed point of \eqref{eq:vhat_star_eq}.
\end{proof}

\begin{lemma}[Simplex property of the extended basis]\label{lem:b_simplex}
For every state $s\in S$ and every index $(i,k)\in\mathcal I$, we have $b_{i,k}(s)\ge 0$ and
\[
\sum_{(i,k)\in\mathcal I} b_{i,k}(s)=1.
\]
Moreover, Algorithm~\ref{alg:main_estimate_weights} outputs $\widehat b_{i,k}$ satisfying $\widehat b_{i,k}(s)\ge 0$ and
\[
\sum_{(i,k)\in\mathcal I} \widehat b_{i,k}(s)=1
\qquad\text{for all }s\in S.
\]
\end{lemma}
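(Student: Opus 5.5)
The proof splits into the exact basis $b$ and the estimate $\widehat b$, and in each case into recurrent and transient states.

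\emph{Exact basis, recurrent states.} For $s\in\mathsf F$, Definition~\ref{def:main_b} sets $b_{i,k}(s)=\mathbf 1\{s\in C_{i,k}\}$. These values are nonnegative. The sets $\{C_{i,k}\}_{(i,k)\in\mathcal I}$ partition $\mathsf F$, since the closed classes $F_i$ are disjoint and each is partitioned by its cyclic classes. Hence exactly one indicator equals one, and the sum over $\mathcal I$ is $1$.

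\emph{Exact basis, transient states.} For $s\in\mathsf T$, each $b_{i,k}(s)$ is a probability, so it is nonnegative. For fixed $i$, the conditions $\phase(X_{\tau_{\mathsf F}})-\tau_{\mathsf F}\equiv k \pmod{d_i}$ for $k=0,\dots,d_i-1$ are mutually exclusive and exhaustive. The events $\{\class(X_{\tau_{\mathsf F}})=i\}$ are disjoint across $i$. Therefore
\[
\sum_{(i,k)\in\mathcal I} b_{i,k}(s)=\sum_{i}\mathbb P_s\{\tau_{\mathsf F}<\infty,\ \class(X_{\tau_{\mathsf F}})=i\}=\mathbb P_s\{\tau_{\mathsf F}<\infty\}.
\]
The one nontrivial step is showing $\mathbb P_s\{\tau_{\mathsf F}<\infty\}=1$, so that the offset random variable is well defined and no mass is lost. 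I would use the standard finite-chain fact that from every transient state there is a path of length at most $n$ into $\mathsf F$, with probability at least some $\epsilon>0$. This gives $\mathbb P_s\{\tau_{\mathsf F}>jn\}\le(1-\epsilon)^j\to0$. Equivalently, in the block form used in Lemma~\ref{lem:kernel_equals_peripheral}, $\rho(Q)<1$, so $Q^t\mathbf 1\to 0$. This also recovers the already-used fact $\mathbb E_s[\tau_{\mathsf F}]<\infty$.

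\emph{Estimated basis.} Algorithm~\ref{alg:main_estimate_weights} sets $\widehat b$ to the same indicators on $\mathcal F$, so the recurrent case is identical to the exact one. For $s\in\mathcal T$, every episode terminates almost surely by the absorption argument above, so the while loop ends with probability one. Each terminated episode increments exactly one counter $c_{i,k}$, because $(\ell-\tau)\bmod d_i\in\{0,\dots,d_i-1\}$ and $(i,k)\in\mathcal I$. Hence the counts are nonnegative and satisfy $\sum c_{i,k}=M$, so $\widehat b(s)=c/M$ is a probability vector.

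This argument does not depend on the good event: it holds for any recovered structure, since the algorithm only uses the recovered partition of $\mathcal F$.
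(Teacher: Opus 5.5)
Your proof is correct and follows the paper's argument: on $\mathsf F$ you use the partition into the cyclic classes $C_{i,k}$; on $\mathsf T$ the class--offset events are disjoint and exhaustive; and each episode increments exactly one counter, with your geometric-tail (equivalently $\rho(Q)<1$) argument supplying the almost-sure absorption $\mathbb P_s\{\tau_{\mathsf F}<\infty\}=1$ that the paper only asserts. One caveat on your closing remark that the argument does not need the good event: when the recovered set $\widehat{\mathcal F}$ differs from $\mathsf F$, absorption into $\mathsf F$ alone does not show that the while loop terminates, so you need the extra observation that $\widehat E\subseteq E$ makes every true closed class closed in $\widehat G$, hence it contains a closed strongly connected component of $\widehat G$ (a subset of $\widehat{\mathcal F}$), which the true chain visits almost surely once it enters that class.
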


\begin{proof}
If $s\in\mathcal F$, then exactly one pair $(i,k)$ satisfies $s\in C_{i,k}$, so the sum equals $1$ by definition.

If $s\in\mathcal T$, then the chain hits $\mathcal F$ almost surely and the random pair
\[
\Bigl(\class(X_{\tau_{\mathcal F}}),\ (\phase(X_{\tau_{\mathcal F}})-\tau_{\mathcal F})\bmod d_{\class(X_{\tau_{\mathcal F}})}\Bigr)
\]
takes values in $\mathcal I$ and is uniquely defined on each trajectory. The events indexed by $(i,k)\in\mathcal I$
form a partition of the sample space, hence their probabilities sum to $1$.

For $\widehat b$, the recurrent case is set deterministically to indicators. For $s\in\mathcal T$,
each simulated absorption episode increments exactly one coordinate $(i,k)$ and the subsequent normalization by $M$
forces the sum over $(i,k)\in\mathcal I$ to equal $1$.
\end{proof}

\begin{lemma}[Infinity-norm operator bound for the gauge map]\label{lem:pihat_infty_norm}
Assume $\widehat b_{i,k}(s)\ge 0$ for all $s\in S$ and $(i,k)\in\mathcal I$, and
\[
\sum_{(i,k)\in\mathcal I}\widehat b_{i,k}(s)=1
\qquad\text{for all }s\in S.
\]
Let $\widehat\Pi$ be defined by \eqref{eq:pihat_def_again}. Then for all $v\in\mathbb R^n$,
\[
\|\widehat\Pi v\|_\infty \le 2\|v\|_\infty.
\]
The same bound holds for $\Pi$.
\end{lemma}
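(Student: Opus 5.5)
The bound follows from a one-line triangle inequality once we observe that the correction term $\sum_{(i,k)} v(a_{i,k})\widehat b_{i,k}(s)$ is a convex combination of anchor values. Fix $v\in\mathbb R^n$ and a state $s\in S$. By \eqref{eq:pihat_def_again},
\[
(\widehat\Pi v)(s)=v(s)-\sum_{(i,k)\in\mathcal I} v(a_{i,k})\,\widehat b_{i,k}(s).
\]
The triangle inequality bounds the first term by $\|v\|_\infty$. For the second term, we use the assumed nonnegativity of the weights together with $|v(a_{i,k})|\le\|v\|_\infty$:
\[
\Bigl|\sum_{(i,k)\in\mathcal I} v(a_{i,k})\,\widehat b_{i,k}(s)\Bigr|\le \|v\|_\infty\sum_{(i,k)\in\mathcal I}\widehat b_{i,k}(s)=\|v\|_\infty,
\]
where the last equality is the assumed normalization $\sum_{(i,k)}\widehat b_{i,k}(s)=1$. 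Adding the two bounds gives $|(\widehat\Pi v)(s)|\le 2\|v\|_\infty$ for every $s$, and taking the maximum over $s\in S$ yields $\|\widehat\Pi v\|_\infty\le 2\|v\|_\infty$.

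For $\Pi$, we repeat the same argument with $b_{i,k}$ in place of $\widehat b_{i,k}$. The required hypotheses are exactly the content of Lemma~\ref{lem:b_simplex}, which asserts $b_{i,k}(s)\ge 0$ and $\sum_{(i,k)}b_{i,k}(s)=1$ for all $s$. That lemma also guarantees that the output of Algorithm~\ref{alg:main_estimate_weights} satisfies the hypotheses of the present statement.

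There is no real obstacle here. The only point needing care is that the normalization must hold at every state, both recurrent and transient, since otherwise the constant $2$ would not be uniform; Lemma~\ref{lem:b_simplex} covers both cases.
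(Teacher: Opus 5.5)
Your proof is correct and is the same argument the paper gives: the triangle inequality bounds $|v(s)|$ by $\|v\|_\infty$, and nonnegativity plus normalization bound the correction term by $\|v\|_\infty\sum_{(i,k)}\widehat b_{i,k}(s)=\|v\|_\infty$; you then take the maximum over $s$. Your appeal to Lemma~\ref{lem:b_simplex} to check the hypotheses for $\Pi$ makes explicit what the paper leaves implicit in ``the proof for $\Pi$ is identical.''
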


\begin{proof}
Fix $s\in S$. By definition,
\[
(\widehat\Pi v)(s)=v(s)-\sum_{(i,k)\in\mathcal I} v(a_{i,k})\,\widehat b_{i,k}(s).
\]
Taking absolute values and using $\widehat b_{i,k}(s)\ge 0$ gives
\[
|(\widehat\Pi v)(s)|
\le
|v(s)|+\sum_{(i,k)\in\mathcal I}|v(a_{i,k})|\,\widehat b_{i,k}(s)
\le
\|v\|_\infty+\|v\|_\infty\sum_{(i,k)\in\mathcal I}\widehat b_{i,k}(s)
=
2\|v\|_\infty.
\]
Taking the maximum over $s$ proves the claim. The proof for $\Pi$ is identical.
\end{proof}

\begin{lemma}[Contraction and bounded noise imply bounded iterates in expectation]
\label{lem:contract_noise_bounded_iterates}
Assume the conditions of Lemma~\ref{lem:projSA_stability_template}.
Assume the stepsizes satisfy $\alpha_t=\frac{\alpha}{t+t_0}$
and $t_0>1$. Let $v_t$ be generated by \eqref{eq:projSA_rec} and let $\widehat v^\star$ be the fixed point in \eqref{eq:vhat_star_def}.
Then
\begin{equation}
\sup_{t\ge 0}\E\|v_t-\widehat v^\star\|_{\mathrm q}^2
\le
\|v_0-\widehat v^\star\|_{\mathrm q}^2
+
4\alpha^2 C_{{\mathrm q},\infty}^2\sigma^2 \sum_{t=0}^\infty \frac{1}{(t+t_0)^2}
\le
\|v_0-\widehat v^\star\|_{\mathrm q}^2
+
\frac{4\alpha^2 C_{{\mathrm q},\infty}^2\sigma^2}{t_0-1}.
\end{equation}
Moreover, using norm equivalence on $\mathrm{range}(\widehat\Pi)$ as in \eqref{eq:infty_q_equiv_on_range},
\begin{equation}
\sup_{t\ge 0}\E\|v_t\|_\infty^2
\le
2C_{\infty,{\mathrm q}}^2
\left(
\|v_0-\widehat v^\star\|_{\mathrm q}^2
+
\frac{4\alpha^2 C_{{\mathrm q},\infty}^2\sigma^2}{t_0-1}
\right)
+
2\|\widehat v^\star\|_\infty^2.
\end{equation}
\end{lemma}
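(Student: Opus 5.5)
The plan is to work with the error $e_t:=v_t-\widehat v^\star$ and show that its squared quotient norm is, in conditional expectation, non-increasing up to an additive noise term of order $\alpha_t^2$. Telescoping then gives the uniform bound. I take the hypotheses of Lemma~\ref{lem:projSA_stability_template} to supply three things:
\begin{itemize}[leftmargin=*,itemsep=1pt,topsep=2pt,parsep=0pt]
\item the learned projected operator is a contraction on $W=\operatorname{range}(\widehat\Pi)$, i.e.\ $\|\widehat\Pi P x\|_{\mathrm q}\le\widehat\gamma\|x\|_{\mathrm q}$ for $x\in W$ with $\widehat\gamma<1$ (Lemma~\ref{lem:main_hat_contraction});
\item the noise $\xi_{t+1}=\widehat T(v_t)-(r+Pv_t)$ is a martingale difference with $\mathbb E[\|\xi_{t+1}\|_\infty^2\mid\mathcal F_t]\le\sigma^2$;
\item $\|\cdot\|_{\mathrm q}$ is induced by an inner product, as in the Lyapunov construction of Theorem~\ref{thm:quotient_contraction}.
\end{itemize}
I also assume $\alpha_t\le 1$, which holds if $\alpha\le t_0$.

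\textbf{Step 1 (error recursion).} Since $\widehat v^\star=\widehat\Pi(r+P\widehat v^\star)$ and $\widehat\Pi$ is the identity on $W$, the update rewrites as
\[
e_{t+1}=u_t+\alpha_t\eta_{t+1},\qquad u_t:=(1-\alpha_t)e_t+\alpha_t\widehat\Pi P e_t,\qquad \eta_{t+1}:=\widehat\Pi\xi_{t+1}.
\]
Here $u_t$ is $\mathcal F_t$-measurable. There is no projection-mismatch term, because the comparison point is $\widehat v^\star$ rather than $v^\star$. By the triangle inequality and the contraction, $\|u_t\|_{\mathrm q}\le(1-\alpha_t(1-\widehat\gamma))\|e_t\|_{\mathrm q}\le\|e_t\|_{\mathrm q}$.

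\textbf{Step 2 (one-step second-moment bound).} Expand the square in the inner-product norm. The cross term $\mathbb E[\langle u_t,\eta_{t+1}\rangle_{\mathrm q}\mid\mathcal F_t]$ vanishes because $\widehat\Pi$ is deterministic on the good event and $\xi_{t+1}$ has conditional mean zero. For the noise term:
\begin{itemize}[leftmargin=*,itemsep=1pt,topsep=2pt,parsep=0pt]
\item Lemma~\ref{lem:b_simplex} and Lemma~\ref{lem:pihat_infty_norm} give $\|\eta_{t+1}\|_\infty\le2\|\xi_{t+1}\|_\infty$;
\item norm equivalence gives $\|\eta_{t+1}\|_{\mathrm q}\le C_{{\mathrm q},\infty}\|\eta_{t+1}\|_\infty$.
\end{itemize}
Together these yield
\[
\mathbb E[\|e_{t+1}\|_{\mathrm q}^2\mid\mathcal F_t]\le\|e_t\|_{\mathrm q}^2+4\alpha_t^2C_{{\mathrm q},\infty}^2\sigma^2 .
\]

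\textbf{Step 3 (telescoping).} Take total expectations and sum from $0$ to $t-1$. This gives $\mathbb E\|e_t\|_{\mathrm q}^2\le\|e_0\|_{\mathrm q}^2+4\alpha^2C_{{\mathrm q},\infty}^2\sigma^2\sum_{s\ge0}(s+t_0)^{-2}$, uniformly in $t$. The integral comparison $\sum_{s\ge0}(s+t_0)^{-2}\le\int_{t_0-1}^\infty x^{-2}\,dx=1/(t_0-1)$ gives the second inequality.

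\textbf{Step 4 (sup-norm bound).} Write $v_t=e_t+\widehat v^\star$ and use $(a+b)^2\le2a^2+2b^2$. Since $e_t\in W$, norm equivalence on $W$ (as in \eqref{eq:infty_q_equiv_on_range}, cf.\ Lemma~\ref{lem:W_norm_equiv}) gives $\|e_t\|_\infty\le C_{\infty,{\mathrm q}}\|e_t\|_{\mathrm q}$. Substituting the Step 3 bound yields the stated $\sup_t\mathbb E\|v_t\|_\infty^2$ estimate.

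The main obstacle is Step 2. Two points need care there:
\begin{itemize}[leftmargin=*,itemsep=1pt,topsep=2pt,parsep=0pt]
\item The cross term cancels only because $\|\cdot\|_{\mathrm q}$ is Euclidean. This is why I use the Lyapunov-based norm and not an arbitrary quotient norm.
\item The contraction must hold for the learned operator $\widehat\Pi P$ on $W$, measured in $\|\cdot\|_{\mathrm q}$. I rely on the template lemma's hypothesis, equivalently Lemma~\ref{lem:main_hat_contraction}, for this. I must also check that $e_t\in W$ for all $t$, which holds because $v_0$ is projected and $\operatorname{range}(\widehat\Pi)=W$.
\end{itemize}
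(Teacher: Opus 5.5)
Your proposal is correct and matches the paper's proof step for step. Both take the error recursion about $\widehat v^\star$, so no projection-mismatch term appears. Both use the non-expansive drift $\|(1-\alpha_t)e_t+\alpha_t\widehat\Pi Pe_t\|_{\mathrm q}\le\|e_t\|_{\mathrm q}$, which comes from the perturbed contraction together with $\alpha_t\le 1$ (guaranteed by $t_0\ge\alpha$). The cross term cancels in the Lyapunov inner product, the noise term is bounded by $4C_{{\mathrm q},\infty}^2\sigma^2$ via $\|\widehat\Pi z\|_\infty\le 2\|z\|_\infty$, and telescoping with $\sum_{s\ge0}(s+t_0)^{-2}\le 1/(t_0-1)$ finishes the first bound. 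Your sup-norm bound via $(a+b)^2\le 2a^2+2b^2$ and norm equivalence on $\mathrm{range}(\widehat\Pi)$ is exactly the paper's final step.
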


\begin{proof}
Let $e_t:=v_t-\widehat v^\star$.
From \eqref{eq:error_rec} we have
\begin{equation}
e_{t+1}
=
(1-\alpha_t)e_t+\alpha_t\,\widehat\Pi P e_t+\alpha_t\,\widehat\Pi\xi_{t+1},
\end{equation}
where $\xi_{t+1}$ is the oracle noise with $\E[\xi_{t+1}|\cF_t]=0$ and $\E\big[\|\xi_{t+1}\|_\infty^2|\cF_t\big]\le \sigma^2$. Furthermore, by the contraction bound in the proof of Lemma~\ref{lem:projSA_stability_template},
\begin{equation}
\|(1-\alpha_t)e_t+\alpha_t\,\widehat\Pi P e_t\|_{\mathrm q}\le \|e_t\|_{\mathrm q}.
\end{equation}
Since $\|\widehat\Pi z\|_\infty\le 2\|z\|_\infty$ by Lemma~\ref{lem:pihat_infty_norm}, we obtain
\begin{equation}
\E\big[\|\widehat\Pi\xi_{t+1}\|_{\mathrm q}^2|\cF_t\big]
\le
C_{{\mathrm q},\infty}^2 \E\big[\|\widehat\Pi\xi_{t+1}\|_\infty^2|\cF_t\big]
\le
4C_{{\mathrm q},\infty}^2\sigma^2.
\end{equation}
Expanding the square in the $\|\cdot\|_{\mathrm q}$ inner product and using
$\E[\xi_{t+1}|\cF_t]=0$ to drop the cross term gives
\begin{equation}
\E\big[\|e_{t+1}\|_{\mathrm q}^2|\cF_t\big]
\le
\|e_t\|_{\mathrm q}^2
+
4\alpha_t^2 C_{{\mathrm q},\infty}^2\sigma^2.
\end{equation}
Taking expectation and summing over $t$ yields
\begin{equation}
\E\|e_T\|_{\mathrm q}^2
\le
\|e_0\|_{\mathrm q}^2
+
4\alpha^2 C_{{\mathrm q},\infty}^2\sigma^2
\sum_{t=0}^{T-1}\frac{1}{(t+t_0)^2}
\le
\|e_0\|_{\mathrm q}^2
+
\frac{4\alpha^2 C_{{\mathrm q},\infty}^2\sigma^2}{t_0-1}.
\end{equation}
The $\|\cdot\|_\infty$ bound follows from $\|v_t\|_\infty\le \|e_t\|_\infty+\|\widehat v^\star\|_\infty$ and
$\|e_t\|_\infty\le C_{\infty,{\mathrm q}}\|e_t\|_{\mathrm q}$ on $\mathrm{range}(\widehat\Pi)$.
\end{proof}

\begin{lemma}[Formal version of Lemma \ref{lem:main_hat_contraction}]\label{lem:projSA_stability_template}
Let $T(v)=r+Pv$ with $\|r\|_\infty\le R$ and $P$ row-stochastic.
Assume the quotient semi-norm $\|\cdot\|_{\mathrm q}$ satisfies
\begin{equation}\label{eq:contract_P_q}
\|P x\|_{\mathrm q}\le \gamma\|x\|_{\mathrm q}
\qquad\text{for all }x\in\mathbb R^n,
\end{equation}
for some $\gamma\in(0,1)$.
Assume also there is a constant $C_{\infty,{\mathrm q}}$ such that on the subspace
$\mathrm{range}(\widehat\Pi)$,
\begin{equation}\label{eq:infty_q_equiv_on_range}
\|x\|_\infty\le C_{\infty,{\mathrm q}}\|x\|_{\mathrm q}.
\end{equation}

Consider the recursion (initialized with $v_0\in\mathrm{range}(\widehat\Pi)$)
\begin{equation}\label{eq:projSA_rec}
v_{t+1}
=
\widehat\Pi\Big((1-\alpha_t)v_t+\alpha_t\,\widehat T(v_t)\Big),
\qquad
\alpha_t = \frac{\alpha}{t+t_0},
\qquad
t_0\ge \alpha,
\end{equation}
where the oracle satisfies
\begin{equation}\label{eq:oracleB}
\E[\widehat T(v_t)| \cF_t]=r+Pv_t,
\qquad
\E\!\left[\|\widehat T(v_t)-(r+Pv_t)\|_\infty^2| \cF_t\right]\le \sigma^2.
\end{equation}
with $\mathcal F_t=\sigma(v_0,\dots,v_t)$.

Let $\widehat v^\star$ be the unique fixed point of $\widehat F(v):=\widehat\Pi T(v)$ on
$\mathrm{range}(\widehat\Pi)$, i.e.
\begin{equation}\label{eq:vhat_star_def}
\widehat v^\star=\widehat\Pi(r+P\widehat v^\star).
\end{equation}
Define
\begin{equation}\label{eq:gamma_hat_condition}
\widehat\gamma := \gamma + C_{{\mathrm q},\infty}\varepsilon_b\,C_{\infty,{\mathrm q}}.
\end{equation}
Assume $\varepsilon_b$ is chosen so that $\widehat\gamma<1$.
Here $C_{{\mathrm q},\infty}$ is from \eqref{eq:q_infty_equiv} and $\varepsilon_b$ is from Lemma~\ref{lem:pihat_dev}.

Then there exists a finite constant $C_{\mathrm{stab}}$ (depending on
$\alpha,\widehat\gamma,\sigma,R$, and the norm-equivalence constants) such that
\begin{equation}\label{eq:stability_bound}
\sup_{t\ge 0}\mathbb E\|v_t-\widehat v^\star\|_{\mathrm q}^2
\le
C_{\mathrm{stab}}.
\end{equation}
Consequently, there is a finite constant $B_{\sup}$ such that
\begin{equation}\label{eq:stability_infty}
\sup_{t\ge 0}\mathbb E\|v_t\|_\infty^2\le B_{\sup}^2<\infty.
\end{equation}
\end{lemma}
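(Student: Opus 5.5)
The plan is to show that the learned projected map $\widehat F(v)=\widehat\Pi(r+Pv)$ contracts on $\mathrm{range}(\widehat\Pi)=W$ with factor $\widehat\gamma$ in $\|\cdot\|_{\mathrm q}$. Once that is in hand, the bound \eqref{eq:stability_bound} follows from a standard one-step second-moment recursion, and \eqref{eq:stability_infty} follows from norm equivalence.

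First, as in Lemma~\ref{lem:vhat_star_unique}, I would note that $\mathrm{range}(\widehat\Pi)=W$ and that $\widehat\Pi|_W=I_W$. Every iterate $v_t$ and the fixed point $\widehat v^\star$ therefore lie in $W$, so the error $e_t:=v_t-\widehat v^\star$ does too.

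For the contraction step, fix $x\in W$ and split
\[
\widehat\Pi Px=\Pi Px+(\widehat\Pi-\Pi)Px.
\]
For the first term, Proposition~\ref{lem:Pi_identity} gives $\|\Pi Px\|_{\mathrm q}=\|Px\|_{\mathrm q}$, and \eqref{eq:contract_P_q} then bounds it by $\gamma\|x\|_{\mathrm q}$. For the second term, Lemma~\ref{lem:pihat_dev} together with $\|Px\|_\infty\le\|x\|_\infty$ and \eqref{eq:infty_q_equiv_on_range} gives a bound of $C_{{\mathrm q},\infty}\varepsilon_b C_{\infty,{\mathrm q}}\|x\|_{\mathrm q}$. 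Adding the two yields $\|\widehat\Pi Px\|_{\mathrm q}\le\widehat\gamma\|x\|_{\mathrm q}$. Consequently, for $\alpha_t\le 1$ (guaranteed by $t_0\ge\alpha$),
\[
\|(1-\alpha_t)x+\alpha_t\widehat\Pi Px\|_{\mathrm q}\le(1-\alpha_t(1-\widehat\gamma))\|x\|_{\mathrm q}.
\]

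For the recursion, write $\xi_{t+1}=\widehat T(v_t)-(r+Pv_t)$. Using $\widehat v^\star=\widehat\Pi(r+P\widehat v^\star)$ and $\widehat\Pi v_t=v_t$, I get
\[
e_{t+1}=(1-\alpha_t)e_t+\alpha_t\widehat\Pi Pe_t+\alpha_t\widehat\Pi\xi_{t+1}.
\]
Since $\|\cdot\|_{\mathrm q}$ is Euclidean (from the Lyapunov construction in Theorem~\ref{thm:quotient_contraction}) and $\widehat\Pi$ is deterministic on the good event, the cross term has zero conditional mean. Lemma~\ref{lem:pihat_infty_norm} and the norm equivalence bound the noise term by $4C_{{\mathrm q},\infty}^2\sigma^2$. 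Together these give
\[
m_{t+1}\le(1-\alpha_tD)^2m_t+4\alpha_t^2C_{{\mathrm q},\infty}^2\sigma^2,\qquad D:=1-\widehat\gamma,\quad m_t:=\mathbb E\|e_t\|_{\mathrm q}^2 .
\]
Dropping the contraction factor ($\le 1$) and summing, the sum $\sum_t\alpha_t^2$ is finite. This gives
\[
C_{\mathrm{stab}}=\|e_0\|_{\mathrm q}^2+4\alpha^2C_{{\mathrm q},\infty}^2\sigma^2\sum_t(t+t_0)^{-2},
\]
which matches Lemma~\ref{lem:contract_noise_bounded_iterates}. Finally, $\|v_t\|_\infty^2\le 2C_{\infty,{\mathrm q}}^2\|e_t\|_{\mathrm q}^2+2\|\widehat v^\star\|_\infty^2$ gives $B_{\sup}$. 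Here $\|\widehat v^\star\|_\infty$ is finite because $\widehat v^\star$ exists and is unique: contraction of $\widehat F$ on $W$ in the norm $\|\cdot\|_{\mathrm q}$ (a genuine norm on $W$ by Lemma~\ref{lem:W_norm_equiv}) implies a unique fixed point, and its size is bounded by $(1-\widehat\gamma)^{-1}C_{\infty,{\mathrm q}}\|\widehat\Pi r\|_{\mathrm q}$, involving $R$.

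The main obstacle is the contraction step. The key point is that $\|\cdot\|_{\mathrm q}$ is only a semi-norm on $\mathbb R^n$, and the term $(\widehat\Pi-\Pi)Px$ need not lie in $W$. I would handle this by invoking the global bound $\|y\|_{\mathrm q}\le C_{{\mathrm q},\infty}\|y\|_\infty$, which holds on all of $\mathbb R^n$ because the pullback semi-norm is continuous. The $\infty\to\mathrm q$ direction is then used only on $x\in W$, where it is valid.
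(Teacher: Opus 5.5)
Your proposal is correct and follows the paper's proof essentially step for step. It uses the same split $\widehat\Pi Px=\Pi Px+(\widehat\Pi-\Pi)Px$ to get the $\widehat\gamma$-contraction on $\mathrm{range}(\widehat\Pi)$, the same error recursion with vanishing cross term and the $4C_{{\mathrm q},\infty}^2\sigma^2$ noise bound, and it concludes by dropping the contraction factor and summing $\sum_t\alpha_t^2$, which is exactly Lemma~\ref{lem:contract_noise_bounded_iterates} in place of the paper's appeal to Robbins--Siegmund.

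One small correction to your closing remark: on the good event both $\Pi$ and $\widehat\Pi$ have range $W$, so $(\widehat\Pi-\Pi)Px$ does lie in $W$. Your global bound $\|y\|_{\mathrm q}\le C_{{\mathrm q},\infty}\|y\|_\infty$ is valid either way, so the argument is unaffected.
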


\begin{proof}

Since $v_0\in\mathrm{range}(\widehat\Pi)$ and $\widehat\Pi$ is a projection, we have $v_t=\widehat\Pi v_t$ for all $t$.
Thus \eqref{eq:projSA_rec} becomes
\begin{equation}\label{eq:projSA_simplify}
v_{t+1}
=
(1-\alpha_t)v_t+\alpha_t\,\widehat\Pi\widehat T(v_t)
=
v_t+\alpha_t\Big(\widehat\Pi T(v_t)-v_t\Big)
+\alpha_t\,\widehat\Pi\xi_{t+1},
\end{equation}
where $\xi_{t+1}:=\widehat T(v_t)-T(v_t)$ satisfies
\begin{equation}
\mathbb E[\xi_{t+1}| \mathcal F_t]=0,
\qquad
\mathbb E[\|\xi_{t+1}\|_\infty^2| \mathcal F_t]\le \sigma^2.
\end{equation}

Define the error $e_t:=v_t-\widehat v^\star$.
Using \eqref{eq:vhat_star_def}, subtract $\widehat v^\star$ from \eqref{eq:projSA_simplify}:
\begin{equation}\label{eq:error_rec}
e_{t+1}
=
(1-\alpha_t)e_t+\alpha_t\,\widehat\Pi P e_t+\alpha_t\,\widehat\Pi\xi_{t+1}.
\end{equation}

For any $x\in\mathrm{range}(\widehat\Pi)$, write
\begin{equation}
\widehat\Pi P x
=
\Pi P x +(\widehat\Pi-\Pi)P x.
\end{equation}
Hence, using \eqref{eq:contract_P_q}, Lemma~\ref{lem:pihat_dev} (in $\|\cdot\|_{\mathrm q}$),
and $\|P x\|_\infty\le \|x\|_\infty$,
\begin{align}
\|\widehat\Pi P x\|_{\mathrm q}
&\le
\|\Pi P x\|_{\mathrm q}+\|(\widehat\Pi-\Pi)P x\|_{\mathrm q}\\
&\le
\gamma\|x\|_{\mathrm q}
+
C_{{\mathrm q},\infty}\varepsilon_b\,\|P x\|_\infty\\
&\le
\gamma\|x\|_{\mathrm q}
+
C_{{\mathrm q},\infty}\varepsilon_b\,\|x\|_\infty\\
&\le
\Big(\gamma + C_{{\mathrm q},\infty}\varepsilon_b\,C_{\infty,{\mathrm q}}\Big)\|x\|_{\mathrm q}
=
\widehat\gamma\|x\|_{\mathrm q}.
\end{align}

Apply this to $x=e_t\in\mathrm{range}(\widehat\Pi)$ and combine with \eqref{eq:error_rec}:
\begin{equation}\label{eq:drift_contract}
\|(1-\alpha_t)e_t+\alpha_t\widehat\Pi P e_t\|_{\mathrm q}
\le
\big(1-\alpha_t(1-\widehat\gamma)\big)\|e_t\|_{\mathrm q}.
\end{equation}

Let $d_t := (1-\alpha_t)e_t + \alpha_t \widehat \Pi(Pe_t)$, then \eqref{eq:error_rec} becomes
\begin{equation}
e_{t+1}=d_t+\alpha_t\widehat\Pi\xi_{t+1}.
\end{equation}
Since \(\|\cdot\|_{\mathrm q}\) is induced by an inner product,
we can expand the square and use \(\cF_t\)-measurability of \(d_t\) and \(\widehat\Pi\) to get
\begin{align}
\E\!\left[\|e_{t+1}\|_{\mathrm q}^2| \cF_t\right]
&=
\|d_t\|_{\mathrm q}^2
+2\alpha_t\,\E\!\left[\langle d_t,\widehat\Pi\xi_{t+1}\rangle_{\mathrm q}| \cF_t\right]
+\alpha_t^2\,\E\!\left[\|\widehat\Pi\xi_{t+1}\|_{\mathrm q}^2| \cF_t\right] \notag\\
&=
\|d_t\|_{\mathrm q}^2
+\alpha_t^2\,\E\!\left[\|\widehat\Pi\xi_{t+1}\|_{\mathrm q}^2| \cF_t\right],
\label{eq:step3_cond_moment}
\end{align}
where the cross term vanishes because \(\E[\xi_{t+1}| \cF_t]=0\).

Next, using \eqref{eq:drift_contract} and \(\alpha_t\in(0,1]\),
\begin{equation}\label{eq:step3_drift}
\|d_t\|_{\mathrm q}
\le
(1-\alpha_t)\|e_t\|_{\mathrm q}+\alpha_t\widehat\gamma\,\|e_t\|_{\mathrm q}
=
\bigl(1-\alpha_t(1-\widehat\gamma)\bigr)\|e_t\|_{\mathrm q}.
\end{equation}
Therefore
\begin{equation}\label{eq:step3_drift_square}
\|d_t\|_{\mathrm q}^2
\le
\bigl(1-\alpha_t(1-\widehat\gamma)\bigr)^2\|e_t\|_{\mathrm q}^2.
\end{equation}

Finally, Lemma~\ref{lem:pihat_infty_norm} gives
\(\|\widehat\Pi z\|_\infty\le 2\|z\|_\infty\),
so with \(\|x\|_{\mathrm q}\le C_{{\mathrm q},\infty}\|x\|_\infty\) we obtain
\begin{equation}\label{eq:step3_noise_bound}
\E\!\left[\|\widehat\Pi\xi_{t+1}\|_{\mathrm q}^2| \cF_t\right]
\le
4C_{{\mathrm q},\infty}^2\,\E\!\left[\|\xi_{t+1}\|_\infty^2| \cF_t\right]
\le
4C_{{\mathrm q},\infty}^2\sigma^2.
\end{equation}

Combining \eqref{eq:step3_cond_moment}, \eqref{eq:step3_drift_square}, and \eqref{eq:step3_noise_bound} and taking total expectation yields
\begin{equation}\label{eq:moment_recursion_corrected}
\E\|e_{t+1}\|_{\mathrm q}^2
\le
\bigl(1-\alpha_t(1-\widehat\gamma)\bigr)^2\,\E\|e_t\|_{\mathrm q}^2
+
4\alpha_t^2C_{{\mathrm q},\infty}^2\sigma^2.
\end{equation}

Since $\sum_t \alpha_t^2<\infty$ and $\sum_t \alpha_t=\infty$, the recursion \eqref{eq:moment_recursion_corrected}
implies $\sup_t \mathbb E\|e_t\|_{\mathrm q}^2<\infty$ by a standard Robbins--Siegmund argument.
This proves \eqref{eq:stability_bound}.

Finally, \eqref{eq:stability_infty} follows from \eqref{eq:infty_q_equiv_on_range} and
$\|v_t\|_{\mathrm q}\le \|v_t-\widehat v^\star\|_{\mathrm q}+\|\widehat v^\star\|_{\mathrm q}$, plus norm equivalence
between $\|\cdot\|_{\mathrm q}$ and $\|\cdot\|_\infty$ on the finite-dimensional space.
\end{proof}

\begin{theorem}[Formal version of Theorem \ref{thm:main_sa_rate}]\label{thm:projSA_rate_template}
Assume the conditions of Lemma~\ref{lem:projSA_stability_template} hold and that
\(\alpha_t=\alpha/(t+t_0)\) with \(t_0\ge \max\{\alpha,1\}\).
Let \(v_t\) be generated by \eqref{eq:projSA_rec}, and let \(\widehat v^\star\) be the unique fixed point of
\(\widehat F(v)=\widehat\Pi(r+Pv)\) in \(W=\mathrm{range}(\widehat\Pi)\).

Define $a := \alpha(1-\widehat\gamma)$. Assume \(a>1/2\).
Then for every \(T\ge 0\),
\begin{equation}\label{eq:rate_to_vhat}
\E\|v_T-\widehat v^\star\|_{\mathrm q}
\le
\Bigl(\frac{t_0}{T+t_0}\Bigr)^{a}\,\|v_0-\widehat v^\star\|_{\mathrm q}
+
\frac{2^{a+1}\alpha\,C_{{\mathrm q},\infty}\sigma}{\sqrt{(2a-1)(T+t_0)}}.
\end{equation}

Let \(v^\star\) be the fixed point of \(F(v)=\Pi(r+Pv)\) in \(\mathrm{range}(\Pi)\).
On the event \eqref{eq:bhateps}, the fixed points satisfy
\begin{equation}\label{eq:fixed_point_gap}
\|\widehat v^\star-v^\star\|_{\mathrm q}
\le
\frac{C_{{\mathrm q},\infty}\varepsilon_b}{1-\gamma}\Bigl(R+\|\widehat v^\star\|_\infty\Bigr).
\end{equation}
Moreover,
\begin{equation}\label{eq:vhat_infty_bound}
\|\widehat v^\star\|_\infty
\le
\frac{2C_{\infty,{\mathrm q}}C_{{\mathrm q},\infty}R}{1-\widehat\gamma}.
\end{equation}
Combining \eqref{eq:rate_to_vhat}, \eqref{eq:fixed_point_gap}, and \eqref{eq:vhat_infty_bound} yields an explicit bound on
\(\E\|v_T-v^\star\|_{\mathrm q}\).
\end{theorem}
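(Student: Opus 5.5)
The proof splits into three parts: the rate \eqref{eq:rate_to_vhat} toward the learned fixed point $\widehat v^\star$, the fixed-point gap \eqref{eq:fixed_point_gap}, and the a priori bound \eqref{eq:vhat_infty_bound}. For the rate, I start from the conditional second-moment recursion already derived in the proof of Lemma~\ref{lem:projSA_stability_template}, namely \eqref{eq:moment_recursion_corrected}. Write $m_t:=\E\|e_t\|_{\mathrm q}^2$ with $e_t=v_t-\widehat v^\star$. Using $(1-\alpha_t(1-\widehat\gamma))^2\le 1-2\alpha_t(1-\widehat\gamma)+\alpha_t^2(1-\widehat\gamma)^2$ is lossy, so instead I bound $(1-x)^2\le e^{-2x}$ with $x=a/(t+t_0)$. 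This gives
\[
m_{t+1}\le e^{-2a/(t+t_0)}m_t+\frac{4\alpha^2C_{{\mathrm q},\infty}^2\sigma^2}{(t+t_0)^2}.
\]
Unrolling gives $\prod_{s=t+1}^{T-1}e^{-2a/(s+t_0)}\le \bigl(\tfrac{t+1+t_0}{T+t_0}\bigr)^{2a}$ by comparing the harmonic sum with a logarithm. The noise sum is then
\[
\sum_t \frac{(t+1+t_0)^{2a}}{(t+t_0)^2(T+t_0)^{2a}}.
\]
I bound it using $(t+1+t_0)\le 2(t+t_0)$, which accounts for the $2^{2a}$ factor, and then $\sum_t (t+t_0)^{2a-2}\le (T+t_0)^{2a-1}/(2a-1)$. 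This step uses $a>1/2$; a crude integral comparison with an extra factor of $2$ absorbed is enough. The result is $m_T\le (t_0/(T+t_0))^{2a}m_0+\frac{2^{2a+2}\alpha^2C_{{\mathrm q},\infty}^2\sigma^2}{(2a-1)(T+t_0)}$. Jensen's inequality $\E\|e_T\|_{\mathrm q}\le\sqrt{m_T}$ together with $\sqrt{x+y}\le\sqrt x+\sqrt y$ then gives \eqref{eq:rate_to_vhat}. The constant bookkeeping in this sum is the main place I expect friction. If the clean exponent comparison fails for small $t$, I will enlarge $t_0$ or accept a slightly worse numerical constant.

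For the fixed-point gap, I subtract the two fixed-point equations: $\widehat v^\star-v^\star=\widehat\Pi(r+P\widehat v^\star)-\Pi(r+Pv^\star)$. I rewrite the right-hand side as $(\widehat\Pi-\Pi)(r+P\widehat v^\star)+\Pi P(\widehat v^\star-v^\star)$. By Proposition~\ref{lem:Pi_identity} and Theorem~\ref{thm:quotient_contraction}, $\|\Pi P x\|_{\mathrm q}=\|Px\|_{\mathrm q}\le\gamma\|x\|_{\mathrm q}$. By Lemma~\ref{lem:pihat_dev} in the q-norm, the first term is at most $C_{{\mathrm q},\infty}\varepsilon_b\|r+P\widehat v^\star\|_\infty\le C_{{\mathrm q},\infty}\varepsilon_b(R+\|\widehat v^\star\|_\infty)$. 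Rearranging gives \eqref{eq:fixed_point_gap}.

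For \eqref{eq:vhat_infty_bound}, I write $\widehat v^\star=\widehat\Pi r+\widehat\Pi P\widehat v^\star$ with $\widehat v^\star\in W$. The contraction estimate from the proof of Lemma~\ref{lem:projSA_stability_template} gives $\|\widehat\Pi P\widehat v^\star\|_{\mathrm q}\le\widehat\gamma\|\widehat v^\star\|_{\mathrm q}$. Combining this with $\|\widehat\Pi r\|_{\mathrm q}\le C_{{\mathrm q},\infty}\|\widehat\Pi r\|_\infty\le 2C_{{\mathrm q},\infty}R$ (Lemma~\ref{lem:pihat_infty_norm}) yields $\|\widehat v^\star\|_{\mathrm q}\le 2C_{{\mathrm q},\infty}R/(1-\widehat\gamma)$. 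Converting to the sup-norm on $W$ with $C_{\infty,{\mathrm q}}$ gives the stated bound. Finally, the triangle inequality $\E\|v_T-v^\star\|_{\mathrm q}\le\E\|v_T-\widehat v^\star\|_{\mathrm q}+\|\widehat v^\star-v^\star\|_{\mathrm q}$ combines the three bounds into an $O(T^{-1/2})+O(\varepsilon_b)$ estimate.
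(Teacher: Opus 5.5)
Your proposal is correct and matches the paper's proof step for step. The only difference is cosmetic: you unroll the recursion and compare the harmonic sum with a logarithm, while the paper telescopes $w_t=(t+t_0)^{2a}m_t$ using the one-step inequality $(t+t_0+1)^{2a}(1-a/(t+t_0))^2\le (t+t_0)^{2a}$, which is the same estimate.

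Your hedges are unnecessary. The hypothesis $t_0\ge\max\{\alpha,1\}$ gives $a/(t+t_0)<1$, so $(1-x)^2\le e^{-2x}$ applies, and it gives $t+t_0+1\le 2(t+t_0)$. The bound $\sum_{t<T}(t+t_0)^{2a-2}\le (T+t_0)^{2a-1}/(2a-1)$ holds exactly by comparison with $\int_{t_0-1}^{T+t_0}x^{2a-2}\,dx$. So no extra factor is lost, and you recover the stated constant $2^{a+1}$.
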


\begin{proof}
Let \(e_t:=v_t-\widehat v^\star\).
By Lemma~\ref{lem:projSA_stability_template}, we have the recursion
\begin{equation}
\E\|e_{t+1}\|_{\mathrm q}^2
\le
\bigl(1-\alpha_t(1-\widehat\gamma)\bigr)^2\,\E\|e_t\|_{\mathrm q}^2
+
4\alpha_t^2\,C_{{\mathrm q},\infty}^2\sigma^2.
\end{equation}
With \(\alpha_t=\alpha/(t+t_0)\) and \(a=\alpha(1-\widehat\gamma)\), this becomes
\begin{equation}\label{eq:rate_recursion_m}
m_{t+1}
\le
\Bigl(1-\frac{a}{t+t_0}\Bigr)^2 m_t
+
\frac{4\alpha^2 C_{{\mathrm q},\infty}^2\sigma^2}{(t+t_0)^2},
\end{equation}
where \(m_t:=\E\|e_t\|_{\mathrm q}^2\).

Define \(w_t:=(t+t_0)^{2a}m_t\).
Using \((1+1/(t+t_0))^{2a}\le \exp(2a/(t+t_0))\) and
\((1-a/(t+t_0))^2\le \exp(-2a/(t+t_0))\), we obtain
\begin{equation}
(t+t_0+1)^{2a}\Bigl(1-\frac{a}{t+t_0}\Bigr)^2 \le (t+t_0)^{2a}.
\end{equation}
Multiplying \eqref{eq:rate_recursion_m} by \((t+t_0+1)^{2a}\) yields
\begin{equation}
w_{t+1}
\le
w_t
+
4\alpha^2 C_{{\mathrm q},\infty}^2\sigma^2\cdot
\frac{(t+t_0+1)^{2a}}{(t+t_0)^2}.
\end{equation}
Since \((t+t_0+1)^{2a}\le 2^{2a}(t+t_0)^{2a}\), we obtain
\begin{equation}
w_{t+1}
\le
w_t
+
2^{2a+2}\alpha^2 C_{{\mathrm q},\infty}^2\sigma^2\,(t+t_0)^{2a-2}.
\end{equation}
Summing from \(t=0\) to \(T-1\) gives
\begin{equation}
w_T
\le
w_0
+
2^{2a+2}\alpha^2 C_{{\mathrm q},\infty}^2\sigma^2
\sum_{t=0}^{T-1}(t+t_0)^{2a-2}.
\end{equation}
When \(a>1/2\), we have \(2a-2>-1\), so
\begin{equation}
\sum_{t=0}^{T-1}(t+t_0)^{2a-2}
\le
\int_{t_0-1}^{T+t_0} x^{2a-2}\,dx
=
\frac{(T+t_0)^{2a-1}-(t_0-1)^{2a-1}}{2a-1}
\le
\frac{(T+t_0)^{2a-1}}{2a-1}.
\end{equation}
Therefore,
\begin{equation}
m_T
=
\frac{w_T}{(T+t_0)^{2a}}
\le
\Bigl(\frac{t_0}{T+t_0}\Bigr)^{2a}m_0
+
\frac{2^{2a+2}\alpha^2 C_{{\mathrm q},\infty}^2\sigma^2}{(2a-1)(T+t_0)}.
\end{equation}
Taking square roots and using \(\sqrt{x+y}\le \sqrt x+\sqrt y\) yields \eqref{eq:rate_to_vhat}.

For the fixed point gap, start from
\begin{equation}
\widehat v^\star - v^\star
=
(\widehat\Pi-\Pi)(r+P\widehat v^\star) + \Pi\bigl(P(\widehat v^\star-v^\star)\bigr).
\end{equation}
Taking \(\|\cdot\|_{\mathrm q}\) and using \(\|\Pi(Px)\|_{\mathrm q}\le \gamma\|x\|_{\mathrm q}\) gives
\begin{equation}
\|\widehat v^\star-v^\star\|_{\mathrm q}
\le
\|(\widehat\Pi-\Pi)(r+P\widehat v^\star)\|_{\mathrm q}
+
\gamma\|\widehat v^\star-v^\star\|_{\mathrm q}.
\end{equation}
Rearranging yields
\begin{equation}
\|\widehat v^\star-v^\star\|_{\mathrm q}
\le
\frac{1}{1-\gamma}\|(\widehat\Pi-\Pi)(r+P\widehat v^\star)\|_{\mathrm q}.
\end{equation}
On \eqref{eq:bhateps}, Lemma~\ref{lem:pihat_dev} implies
\(\|(\widehat\Pi-\Pi)u\|_\infty\le \varepsilon_b\|u\|_\infty\),
so
\begin{equation}
\|(\widehat\Pi-\Pi)(r+P\widehat v^\star)\|_{\mathrm q}
\le
C_{{\mathrm q},\infty}\varepsilon_b\|r+P\widehat v^\star\|_\infty
\le
C_{{\mathrm q},\infty}\varepsilon_b\bigl(R+\|\widehat v^\star\|_\infty\bigr),
\end{equation}
which gives \eqref{eq:fixed_point_gap}.

Finally, from the fixed point equation
\(\widehat v^\star=\widehat\Pi r + \widehat\Pi(P\widehat v^\star)\),
we get
\begin{equation}
\|\widehat v^\star\|_{\mathrm q}
\le
\|\widehat\Pi r\|_{\mathrm q}+\widehat\gamma\|\widehat v^\star\|_{\mathrm q},
\qquad
\|\widehat v^\star\|_{\mathrm q}\le \frac{\|\widehat\Pi r\|_{\mathrm q}}{1-\widehat\gamma}.
\end{equation}
Using \(\|\widehat\Pi r\|_{\mathrm q}\le C_{{\mathrm q},\infty}\|\widehat\Pi r\|_\infty\le 2C_{{\mathrm q},\infty}R\)
and then \(\|\cdot\|_\infty\le C_{\infty,{\mathrm q}}\|\cdot\|_{\mathrm q}\) yields \eqref{eq:vhat_infty_bound}.
\end{proof}

\begin{corollary}\label{cor:optionB_rate}
Work on the event of Theorem~\ref{thm:identify} and Lemma~\ref{lem:bhats}.
Assume \eqref{eq:oracleB} holds and assume $\widehat\gamma<1$ in \eqref{eq:gamma_hat_condition}.

Run Algorithm~\ref{alg:main_psa} with stepsizes
\begin{equation}
\alpha_t=\frac{\alpha}{t+t_0},
\qquad t_0\ge \alpha,
\end{equation}
and initialization $v_0\in \mathrm{range}(\widehat\Pi)$.

Let $\widehat v^\star$ be the unique fixed point of $\widehat F(v)=\widehat\Pi(r+Pv)$ on
$W=\mathrm{range}(\widehat\Pi)$ and let $v^\star$ be the fixed point of $F(v)=\Pi(r+Pv)$ on $\mathrm{range}(\Pi)$.

Define
\[
a:=\alpha(1-\widehat\gamma),
\qquad \text{and assume } a>\frac12 .
\]
Then for every $T\ge 0$,
\begin{align}
\E\|v_T-v^\star\|_{\mathrm q}
&\le
\Bigl(\frac{t_0}{T+t_0}\Bigr)^{a}\,\|v_0-\widehat v^\star\|_{\mathrm q}
+
\frac{2^{a+1}\alpha\,C_{{\mathrm q},\infty}\sigma}{\sqrt{(2a-1)(T+t_0)}}
+
\frac{C_{{\mathrm q},\infty}\varepsilon_b}{1-\gamma}\Bigl(R+\|\widehat v^\star\|_\infty\Bigr).
\label{eq:optionB_rate_clean}
\end{align}
Moreover, $\|\widehat v^\star\|_\infty$ is bounded by \eqref{eq:vhat_infty_bound}.
\end{corollary}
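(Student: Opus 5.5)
The plan is to bound $\E\|v_T-v^\star\|_{\mathrm q}$ through the intermediate fixed point $\widehat v^\star$ of the learned projected map. We first note that, on the event of Theorem~\ref{thm:identify} and Lemma~\ref{lem:bhats}, anchor exactness gives $\mathrm{range}(\widehat\Pi)=\mathrm{range}(\Pi)=W$. Hence $v_T$, $\widehat v^\star$ and $v^\star$ all lie in $W$, and the semi-norm $\|\cdot\|_{\mathrm q}$ is a genuine norm there. We then apply the triangle inequality
\[
\E\|v_T-v^\star\|_{\mathrm q}\le \E\|v_T-\widehat v^\star\|_{\mathrm q}+\|\widehat v^\star-v^\star\|_{\mathrm q},
\]
where the second term is deterministic on the good event. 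Existence and uniqueness of $\widehat v^\star$ come from Lemma~\ref{lem:vhat_star_unique}. Alternatively, since $\widehat\gamma<1$, the map $\widehat F$ is a $\widehat\gamma$-contraction on $(W,\|\cdot\|_{\mathrm q})$ by the estimate inside the proof of Lemma~\ref{lem:projSA_stability_template}, and Banach's fixed point theorem applies.

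For the first term, we invoke Theorem~\ref{thm:projSA_rate_template}, bound \eqref{eq:rate_to_vhat}. Its hypotheses are the following:
\begin{itemize}
\item the oracle condition \eqref{eq:oracleB}, which is assumed;
\item the quotient contraction \eqref{eq:contract_P_q}, supplied by Theorem~\ref{thm:quotient_contraction};
\item the norm equivalence \eqref{eq:infty_q_equiv_on_range} on $W$, supplied by Lemma~\ref{lem:W_norm_equiv};
\item $\widehat\gamma<1$, which is assumed;
\item $a>1/2$, which is assumed;
\item the stepsize schedule with $t_0\ge\alpha$ and initialization in $\mathrm{range}(\widehat\Pi)$.
\end{itemize}
These give the first two terms of \eqref{eq:optionB_rate_clean} directly.

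For the second term, we use \eqref{eq:fixed_point_gap}. Subtracting the two fixed point equations gives
\[
\widehat v^\star-v^\star=(\widehat\Pi-\Pi)(r+P\widehat v^\star)+\Pi P(\widehat v^\star-v^\star).
\]
We then apply $\|\Pi P x\|_{\mathrm q}=\|Px\|_{\mathrm q}\le\gamma\|x\|_{\mathrm q}$, using Proposition~\ref{lem:Pi_identity} and Theorem~\ref{thm:quotient_contraction}. Next, Lemma~\ref{lem:pihat_dev} gives $\|(\widehat\Pi-\Pi)u\|_{\mathrm q}\le C_{{\mathrm q},\infty}\varepsilon_b\|u\|_\infty$. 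Combined with $\|r+P\widehat v^\star\|_\infty\le R+\|\widehat v^\star\|_\infty$, rearranging by $1-\gamma$ yields the third term. The final remark, that $\|\widehat v^\star\|_\infty$ is controlled, is exactly \eqref{eq:vhat_infty_bound}, which we quote.

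The only point needing care is a possible mismatch in the norm-equivalence constant. Lemma~\ref{lem:pihat_dev} requires $\|x\|_{\mathrm q}\le C_{{\mathrm q},\infty}\|x\|_\infty$ for all $x\in\mathbb R^n$, whereas Lemma~\ref{lem:vhat_star_unique} states it only on $W$. We resolve this by noting that $(\widehat\Pi-\Pi)u$ need not lie in $W$. Nevertheless, the pullback semi-norm is dominated by a multiple of $\|\cdot\|_\infty$ on all of $\mathbb R^n$, since it is a continuous semi-norm on a finite-dimensional space. We therefore fix $C_{{\mathrm q},\infty}$ as that global constant. With this choice, all three ingredients use consistent constants, and the corollary follows by summation.
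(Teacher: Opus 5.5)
Your proposal is correct and is essentially the paper's argument: the paper's proof just invokes Theorem~\ref{thm:projSA_rate_template} and combines \eqref{eq:rate_to_vhat}, \eqref{eq:fixed_point_gap}, and \eqref{eq:vhat_infty_bound} by the triangle inequality through $\widehat v^\star$, exactly as you do. Your closing worry about constants is unnecessary, because you already noted $\mathrm{range}(\widehat\Pi)=\mathrm{range}(\Pi)=W$, so $(\widehat\Pi-\Pi)u=\widehat\Pi u-\Pi u$ lies in $W$ (though your global constant also works); separately, Theorem~\ref{thm:projSA_rate_template} is literally stated for $t_0\ge\max\{\alpha,1\}$ rather than $t_0\ge\alpha$, an imprecision the paper's one-line proof shares.
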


\begin{proof}
Apply Theorem~\ref{thm:projSA_rate_template} with $K=t_0$.
\end{proof}

\begin{corollary}[Formal version of Corollary \ref{cor:main_end_to_end_v}]\label{cor:end_to_end_v}
Fix a target accuracy $\varepsilon\in(0,1)$.
Assume $p_{\min}>0$ in \eqref{eq:pmin}.
Assume the SA stepsizes are $\alpha_t=\alpha/(t+t_0)$ with $t_0\ge \alpha$ and $a=\alpha(1-\widehat\gamma)>1/2$.

Choose
\[
K_{\mathrm{graph}}
\geq
\frac{1}{p_{\min}}\log\frac{n^2}{\delta_1}.
\]
Choose $\varepsilon_b$ so that $\widehat\gamma<1$ and
\[
\frac{C_{{\mathrm q},\infty}\varepsilon_b}{1-\gamma}\Bigl(R+\|\widehat v^\star\|_\infty\Bigr)
\le \frac{\varepsilon}{3}.
\]
Choose
\[
M_{\mathrm{abs}}
\;\ge\;
\frac{8}{\varepsilon_b^2}\Bigl( N + \log\frac{|\mathcal T|}{\delta_2}\Bigr).
\]
Choose the SA horizon $T$ so that
\[
\Bigl(\frac{t_0}{T+t_0}\Bigr)^a\|v_0-\widehat v^\star\|_{\mathrm q}
\le \frac{\varepsilon}{3},
\qquad
\frac{2^{a+1}\alpha\,C_{{\mathrm q},\infty}\sigma}{\sqrt{(2a-1)(T+t_0)}}
\le \frac{\varepsilon}{3}.
\]
Then on the event of Theorem~\ref{thm:identify} and Lemma~\ref{lem:bhats},
\[
\E\|[v_T]-[v^\star]\|_{\mathrm q}=\E\|v_T-v^\star\|_{\mathrm q}\le \varepsilon.
\]

Moreover, the expected number of simulator queries is at most
\[
nK_{\mathrm{graph}} + |\mathcal T|\,M_{\mathrm{abs}}\,H_{\mathrm{abs}} + nT.
\]
\end{corollary}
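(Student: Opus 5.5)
The plan is to assemble the three ingredients that are already in place: structural recovery (Theorem~\ref{thm:identify}), basis concentration (Lemma~\ref{lem:bhats}), and the SA bound (Corollary~\ref{cor:optionB_rate}). On the intersection event, the error bound \eqref{eq:optionB_rate_clean} splits into three terms: an initialization term, a noise term, and a gauge-mismatch term. We then choose the budgets so that each term is at most $\varepsilon/3$. The query count is handled separately, by linearity of expectation over the three stages.

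\textbf{Step 1 (events).} Choose $K_{\mathrm{graph}}\ge p_{\min}^{-1}\log(n^2/\delta_1)$. By Theorem~\ref{thm:identify}, the support graph, closed classes, periods, cyclic partitions and anchors are then exact with probability at least $1-\delta_1$. On this event the algorithm sets the recurrent-state values of $\widehat b$ to exact indicators, so the anchor-exactness hypothesis in \eqref{eq:bhateps} holds. Choose $M_{\mathrm{abs}}$ as stated. Lemma~\ref{lem:bhats} then gives $\max_{s\in\Tset}\|\widehat b(s)-b(s)\|_1\le\varepsilon_b$ with probability at least $1-\delta_2$. On the intersection, Lemma~\ref{lem:pihat_dev} yields $\|(\widehat\Pi-\Pi)v\|_\infty\le\varepsilon_b\|v\|_\infty$. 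Together with the choice $\widehat\gamma<1$, this verifies the hypotheses of Lemma~\ref{lem:projSA_stability_template}, Lemma~\ref{lem:vhat_star_unique}, and Corollary~\ref{cor:optionB_rate}.

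\textbf{Step 2 (error split).} Apply \eqref{eq:optionB_rate_clean} and bound its three terms using the choices of $\varepsilon_b$ and $T$. Each term is then at most $\varepsilon/3$, which gives $\E\|v_T-v^\star\|_{\mathrm q}\le\varepsilon$.

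This is where I expect the main subtlety. The condition on $\varepsilon_b$ involves $\|\widehat v^\star\|_\infty$, which itself depends on $\varepsilon_b$. To make the choice non-circular, I would invoke \eqref{eq:vhat_infty_bound}: once $\varepsilon_b\le\varepsilon_b^{\mathrm{tar}}$ from Corollary~\ref{cor:M_for_contraction}, we have $\widehat\gamma\le(1+\gamma)/2$. Hence $\|\widehat v^\star\|_\infty\le 4C_{\infty,{\mathrm q}}C_{{\mathrm q},\infty}R/(1-\gamma)$, a bound independent of $\varepsilon_b$, so a sufficient explicit $\varepsilon_b$ exists. Similarly, $a>1/2$ uses the lower bound $1-\widehat\gamma\ge(1-\gamma)/2$, so $\alpha>1/(1-\gamma)$ suffices.

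Finally, the identity $\|[v_T]-[v^\star]\|_{\mathrm q}=\|v_T-v^\star\|_{\mathrm q}$ holds because the semi-norm is by definition the pullback of the quotient norm. Also $v_T\in\operatorname{range}(\widehat\Pi)=W$ and $v^\star\in\operatorname{range}(\Pi)=W$ (Lemma~\ref{lem:vhat_star_unique}), so the difference is measured in the anchored slice.

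\textbf{Step 3 (queries).} Stage one uses exactly $nK_{\mathrm{graph}}$ queries. Each SA iteration draws one sample per state, for $nT$ in total. For absorption, each episode from $s\in\Tset$ uses $\tau_{\Fset}$ queries, whose expectation is $\E_s[\tau_{\Fset}]\le H_{\mathrm{abs}}$; this is finite because transient states are absorbed geometrically fast in a finite chain. Summing over $M_{\mathrm{abs}}$ episodes and $|\Tset|$ states and using linearity of expectation gives $|\Tset|M_{\mathrm{abs}}H_{\mathrm{abs}}$. Adding the three stages gives the stated total.
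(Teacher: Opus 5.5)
Your proposal is correct and follows the route the paper intends; the paper gives no separate proof. On the intersection of the structural and basis-concentration events you apply Corollary~\ref{cor:optionB_rate}, force each of the three terms of \eqref{eq:optionB_rate_clean} below $\varepsilon/3$, and add the stage-wise query counts; your $\varepsilon_b$-independent bound $\|\widehat v^\star\|_\infty\le 4C_{\infty,{\mathrm q}}C_{{\mathrm q},\infty}R/(1-\gamma)$ usefully shows the $\varepsilon_b$ condition is not circular. Two small precisions: the second-moment bound \eqref{eq:oracleB} that defines $\sigma$ is an implicit standing hypothesis, not something your Step~1 events verify; and the absorption count $|\Tset|M_{\mathrm{abs}}H_{\mathrm{abs}}$ is only justified on the structural-recovery event, where the algorithm's stopping set equals the true $\Fset$.
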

\begin{remark}\label{rem:no_log}
If one insists on analyzing the recursion \eqref{eq:projSA_rec} directly relative to the true fixed point $v^\star$
(and treats $(\widehat\Pi-\Pi)T(v_t)$ as a bounded drift bias),
then because $\sum_{t\le T}\alpha_t=\Theta(\log T)$ one can indeed derive a term of order
$\varepsilon_b\log T/(1-\gamma)$, exactly like the biased-noise term in standard contractive SA.

The sharper analysis above avoids this artifact by (i) introducing the perturbed contraction $\widehat F(v)=\widehat\Pi T(v)$,
(ii) proving SA converges to its fixed point $\widehat v^\star$ at the usual $O(1/\sqrt T)$ rate,
and (iii) bounding the time-independent fixed-point gap $\|\widehat v^\star-v^\star\|$.
This converts a would-be $\varepsilon\log T$ accumulation into an $\varepsilon/(1-\gamma)$ error floor.
\end{remark}


\subsection{Proofs in Section \ref{peripheral}} \label{proof_peripheral}

Recall from Theorem~\ref{thm:wellposed} that the unique gauge-fixed solution $v^\star\in\mathrm{range}(\Pi)$
induces a peripheral residual
\begin{equation}\label{eq:def_gstar_again}
g^\star := r + P v^\star - v^\star \in \mathcal K(P).
\end{equation}
When the peripheral space is represented via the basis $\{b_{i,k}\}_{(i,k)\in\mathcal I}$ from
Definition~\ref{def:main_b} with anchors $\{a_{i,k}\}$, we can recover $g^\star$ coordinate-wise.

\begin{lemma}\label{lem:anchor_interpolation}
Let $\{b_{i,k}\}_{(i,k)\in\mathcal I}$ be as in Definition~\ref{def:main_b}, and choose anchors $a_{i,k}\in C_{i,k}$.
Then for all $(i,k),(j,\ell)\in\mathcal I$,
\begin{equation}\label{eq:b_anchor_delta_app}
b_{j,\ell}(a_{i,k}) = \bbm{1}\{(j,\ell)=(i,k)\}.
\end{equation}
Consequently, every $g\in \mathrm{span}\{b_{i,k}\}_{(i,k)\in\mathcal I}$ admits the unique expansion
\begin{equation}\label{eq:peripheral_expansion}
g(\cdot) = \sum_{(i,k)\in\mathcal I} g(a_{i,k})\, b_{i,k}(\cdot).
\end{equation}
\end{lemma}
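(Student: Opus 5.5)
The argument splits into two parts: the anchor-delta identity and then the expansion, whose uniqueness comes from linear independence.

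For \eqref{eq:b_anchor_delta_app}, every anchor is recurrent, since $a_{i,k}\in C_{i,k}\subseteq\Fset$. On recurrent states, Definition~\ref{def:main_b} gives $b_{j,\ell}(s)=\mathbf 1\{s\in C_{j,\ell}\}$, so $b_{j,\ell}(a_{i,k})=\mathbf 1\{a_{i,k}\in C_{j,\ell}\}$. The cyclic classes $\{C_{j,\ell}\}_{(j,\ell)\in\cI}$ partition $\Fset$: the closed classes $F_j$ are disjoint, and within each $F_j$ the cyclic sets are disjoint. Hence $a_{i,k}$ lies in $C_{j,\ell}$ exactly when $(j,\ell)=(i,k)$. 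This is the same computation as \eqref{eq:delta_anchor} in the proof of Lemma~\ref{lem:Pi_kernel}, and I would simply cite it.

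For the expansion, take $g\in\mathrm{span}\{b_{i,k}\}$ and write $g=\sum_{(j,\ell)}\theta_{j,\ell}b_{j,\ell}$ for some real coefficients. Evaluating at $a_{i,k}$ and using \eqref{eq:b_anchor_delta_app} collapses the sum to $g(a_{i,k})=\theta_{i,k}$. Substituting back gives \eqref{eq:peripheral_expansion}.

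For uniqueness, suppose $g=\sum\theta'_{j,\ell}b_{j,\ell}$ is another representation. The same evaluation at the anchors gives $\theta'_{i,k}=g(a_{i,k})=\theta_{i,k}$. Equivalently, the $b_{i,k}$ are linearly independent, as shown at the end of Lemma~\ref{lem:Pi_kernel}.

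Optionally, by Lemma~\ref{lem:kernel_equals_peripheral} the span is $\mathcal K(P)$, so the expansion holds for every $g\in\mathcal K(P)$, in particular for $g^\star$. No step is a real obstacle; the only point needing care is that the cyclic classes across all closed classes are pairwise disjoint, which follows from the standard cyclic decomposition.
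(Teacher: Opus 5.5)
Your proof is correct and follows the paper's argument essentially verbatim: the delta identity comes from the recurrent-state indicator definition of $b_{j,\ell}$ evaluated at the recurrent anchors, and the expansion together with its uniqueness follows by evaluating any representation at the anchors. You also make explicit that the cyclic classes across all closed classes partition the recurrent set, which the paper leaves implicit and which is the right justification.
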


\begin{proof}
For $a_{i,k}\in C_{i,k}\subseteq \mathcal F$, Definition~\ref{def:main_b} gives
$b_{j,\ell}(a_{i,k})=\bbm{1}\{a_{i,k}\in C_{j,\ell}\}$, which equals $\bbm{1}\{(j,\ell)=(i,k)\}$,
proving \eqref{eq:b_anchor_delta}.
For \eqref{eq:peripheral_expansion}, write $g=\sum_{(j,\ell)}\theta_{j,\ell} b_{j,\ell}$. Evaluating at $a_{i,k}$ and using
\eqref{eq:b_anchor_delta} yields $g(a_{i,k})=\theta_{i,k}$. Uniqueness follows immediately.
\end{proof}

\begin{definition}[Peripheral coordinates of the residual]\label{def:theta_star}
Let $v^\star$ and $g^\star$ be as in Theorem~\ref{thm:wellposed}. Define the peripheral coordinates
\begin{equation}\label{eq:theta_star}
\theta^\star_{i,k} := g^\star(a_{i,k}),\qquad (i,k)\in\mathcal I.
\end{equation}
Then $g^\star(\cdot)=\sum_{(i,k)\in\mathcal I}\theta^\star_{i,k} b_{i,k}(\cdot)$ by Lemma~\ref{lem:anchor_interpolation}.
\end{definition}

\begin{theorem}[Formal version of Theorem \ref{thm:main_g_rate}]\label{thm:theta_rate}
Fix anchors $\{a_{i,k}\}$ and basis functions $\{b_{i,k}\}$ from Definition~\ref{def:main_b}.
Let $v^\star$ and $\theta^\star$ be as in Definition~\ref{def:theta_star}.
Let $v\in\mathbb R^n$ be any possibly random gauge-fixed vector with $v(a_{i,k})=0$ for all $(i,k)\in\mathcal I$.

Run Algorithm~\ref{alg:main_g} with $J$ samples per anchor.
Define $B_v^2:=\mathbb E\|v\|_\infty^2$ and $N:=|\mathcal I|$. Then
\begin{equation}
\mathbb E\Big[\max_{(i,k)\in\mathcal I}|\hat\theta_{i,k}-\theta^\star_{i,k}|\Big]
\le
\mathbb E\|v-v^\star\|_\infty
+
\sqrt{\frac{2B_v^2\log(2N)}{J}}.
\end{equation}
Moreover, if \eqref{eq:bhateps} holds with parameter $\varepsilon_b$, then
\begin{equation}\label{eq:g_recon_bound_tight}
\mathbb E\|\hat g-g^\star\|_\infty
\le
\mathbb E\Big[\max_{(i,k)\in\mathcal I}|\hat\theta_{i,k}-\theta^\star_{i,k}|\Big]
+
\|g^\star\|_\infty\,\varepsilon_b.
\end{equation}

\end{theorem}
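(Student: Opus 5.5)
The plan is to compare $\hat\theta_{i,k}$ with $\theta^\star_{i,k}$ one anchor at a time, using the anchor normalization. After that, I will pass to $\hat g$ through the simplex structure of $\hat b$ (Lemma~\ref{lem:b_simplex}).

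\textbf{Anchor decomposition.} Since $v^\star\in\mathrm{range}(\Pi)$, Lemma~\ref{lem:Pi_kernel} gives $v^\star(a_{i,k})=0$, and by assumption $v(a_{i,k})=0$. Hence, by Definition~\ref{def:theta_star} and \eqref{eq:def_gstar_again},
\[
\theta^\star_{i,k}=r(a_{i,k})+(Pv^\star)(a_{i,k}),\qquad
\hat\theta_{i,k}=r(a_{i,k})+\frac1J\sum_{j=1}^J v(Y^{(j)}_{i,k}).
\]
Writing $Z_{i,k}:=\frac1J\sum_j v(Y^{(j)}_{i,k})-(Pv)(a_{i,k})$, this gives
\[
\hat\theta_{i,k}-\theta^\star_{i,k}=Z_{i,k}+\bigl(P(v-v^\star)\bigr)(a_{i,k}).
\]
The second term is at most $\|v-v^\star\|_\infty$, because $P$ is row-stochastic. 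Taking the maximum over $(i,k)$ and then expectations therefore yields the first term of the bound, plus $\mathbb E\max_{(i,k)}|Z_{i,k}|$.

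\textbf{Noise term.} The anchor samples in Algorithm~\ref{alg:main_g} are drawn fresh, independently of $v$. I will condition on $v$. Conditionally, each $Z_{i,k}$ is an average of $J$ i.i.d.\ mean-zero variables with values in an interval of length at most $2\|v\|_\infty$. By Hoeffding's lemma each $Z_{i,k}$ is then sub-Gaussian with variance proxy $\|v\|_\infty^2/J$.

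The standard maximal inequality over the $2N$ variables $\pm Z_{i,k}$ then gives
\[
\mathbb E\Bigl[\max_{(i,k)}|Z_{i,k}|\,\Big|\,v\Bigr]\le \|v\|_\infty\sqrt{\frac{2\log(2N)}{J}}.
\]
Taking expectations and applying Jensen's inequality, $\mathbb E\|v\|_\infty\le B_v$, which produces the stated $\sqrt{2B_v^2\log(2N)/J}$ term.

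I expect this conditioning step to be the only delicate point. It requires that $v$ be measurable with respect to information independent of the $J$ anchor samples. This holds for $v=v_T$ produced by Algorithm~\ref{alg:main_psa}, as long as fresh simulator calls are used in Algorithm~\ref{alg:main_g}.

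\textbf{Reconstruction.} By Definition~\ref{def:theta_star}, $g^\star=\sum_{(i,k)}\theta^\star_{i,k}b_{i,k}$, so I split
\[
\hat g-g^\star=\sum_{(i,k)}(\hat\theta_{i,k}-\theta^\star_{i,k})\,\hat b_{i,k}+\sum_{(i,k)}\theta^\star_{i,k}\,(\hat b_{i,k}-b_{i,k}).
\]
The two pieces are handled separately.

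For the first piece, Lemma~\ref{lem:b_simplex} says that $\hat b(s)$ is a probability vector for every $s$. So at every state this piece is a convex combination of the $\hat\theta_{i,k}-\theta^\star_{i,k}$, and is bounded by $\max_{(i,k)}|\hat\theta_{i,k}-\theta^\star_{i,k}|$.

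For the second piece, it vanishes on $\mathcal F$, because $\hat b=b$ there by \eqref{eq:bhateps}. On $\mathcal T$ it is at most $\max_{(i,k)}|\theta^\star_{i,k}|\,\|\hat b(s)-b(s)\|_1\le \|g^\star\|_\infty\varepsilon_b$, since each $\theta^\star_{i,k}=g^\star(a_{i,k})$.

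Taking the sup over $s$ and then expectations gives \eqref{eq:g_recon_bound_tight}.
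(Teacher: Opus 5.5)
Your proposal is correct and follows essentially the same route as the paper. Like the paper, you split $\hat\theta_{i,k}-\theta^\star_{i,k}$ anchor by anchor into a conditionally sub-Gaussian sampling term and the bias term $(P(v-v^\star))(a_{i,k})$. The sampling term is handled by the maximal inequality over $2N$ variables, followed by Jensen/Cauchy--Schwarz to pass to $B_v$. You then bound $\hat g-g^\star$ with the same two-piece decomposition, using the simplex property of $\hat b$.

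You are in fact a bit more explicit than the paper on several points the paper leaves implicit. You invoke $v^\star(a_{i,k})=0$ and Hoeffding's lemma for the variance proxy. You note that the fresh anchor samples are independent of $v$. You also observe that the second piece vanishes on $\mathcal F$.
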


\begin{proof}
Fix $(i,k)$ and condition on $v$.
The estimator $\widehat{(Pv)}(a_{i,k})$ is the sample mean of $J$ i.i.d.\ draws of $v(Y)$ with $Y\sim P(a_{i,k},\cdot)$.
The centered mean $\widehat{(Pv)}(a_{i,k})-(Pv)(a_{i,k})$ is subgaussian with proxy variance at most $\|v\|_\infty^2/J$.
A maximal inequality for $N$ subgaussian variables gives
\begin{equation}
\mathbb E\Big[\max_{(i,k)}|\widehat{(Pv)}(a_{i,k})-(Pv)(a_{i,k})|\ \Big|\ v\Big]
\le
\|v\|_\infty\sqrt{\frac{2\log(2N)}{J}}.
\end{equation}
Unconditioning and applying Cauchy--Schwarz yields
\begin{equation}
\mathbb E\Big[\max_{(i,k)}|\widehat{(Pv)}(a_{i,k})-(Pv)(a_{i,k})|\Big]
\le
\sqrt{\frac{2B_v^2\log(2N)}{J}}.
\end{equation}
For the bias term, $(Pv)(a)-(Pv^\star)(a)\le \|v-v^\star\|_\infty$ since $P$ is row-stochastic.
Combining gives the first inequality.

For the residual, write $\hat g-g^\star
=
\sum_{(i,k)}(\hat\theta_{i,k}-\theta^\star_{i,k})\bhat_{i,k}
+
\sum_{(i,k)}\theta^\star_{i,k}(\bhat_{i,k}-b_{i,k})$. For the residual, fix $s\in S$ and write
\[
(\hat g-g^\star)(s)
=
\sum_{(i,k)}(\hat\theta_{i,k}-\theta^\star_{i,k})\,\bhat_{i,k}(s)
+
\sum_{(i,k)}\theta^\star_{i,k}\bigl(\bhat_{i,k}(s)-b_{i,k}(s)\bigr).
\]
By Lemma~\ref{lem:b_simplex}, $\bhat_{i,k}(s)\ge 0$ and $\sum_{(i,k)}\bhat_{i,k}(s)=1$, hence
\[
\left|\sum_{(i,k)}(\hat\theta_{i,k}-\theta^\star_{i,k})\,\bhat_{i,k}(s)\right|
\le
\max_{(i,k)}|\hat\theta_{i,k}-\theta^\star_{i,k}|.
\]
For the second term, use $|\theta^\star_{i,k}|\le\|g^\star\|_\infty$ and \eqref{eq:bhateps} to obtain
\[
\sum_{(i,k)}|\theta^\star_{i,k}|\cdot |\bhat_{i,k}(s)-b_{i,k}(s)|
\le
\|g^\star\|_\infty \sum_{(i,k)}|\bhat_{i,k}(s)-b_{i,k}(s)|
=
\|g^\star\|_\infty \,\|\bhat(s)-b(s)\|_1
\le
\|g^\star\|_\infty\,\varepsilon_b.
\]
Taking $\max_s$ and expectations yields \eqref{eq:g_recon_bound_tight}.
\end{proof}

\section{Numerical Experiments}\label{sec:experiments}

The experiments are designed to validate the central attribution claim of the paper: the decomposition separates persistent regime behavior from transient-to-regime cost.  We therefore do not use average-reward error as the only metric.  Instead, we report errors for the three quantities that appear in the theory: the persistent profile $g_\Pi^\star$, the anchor-gauge transient component $v_\Pi^\star$, and the finite-horizon return $J_H$.  The experiments are synthetic by design.  They isolate exactly the reducible and periodic pathologies studied in the paper, allow exact ground truth for all decomposition components, and permit component-level ablations of the estimator.  Throughout, the estimator has access only to a tabular generative model for transitions.  The exact transition matrix is used only to compute ground truth and plug-in baselines.

Table~\ref{tab:numexp_design_map} summarizes which question each experiment is intended to answer.  The table is included to make clear that the numerical section is not a generic benchmark suite: each experiment targets one claim made in the theory.

\begin{table}[t]
\centering
\small
\caption{Experiment design map.  Each row targets a specific claim or possible objection.}
\label{tab:numexp_design_map}
\begin{tabular}{@{}p{0.30\linewidth}p{0.35\linewidth}p{0.28\linewidth}@{}}
\toprule
Question tested & Experimental object & Evidence reported \\
\midrule
Can classical bias be large without transient cost? & Deterministic periodic cycle & $\|h\|_\infty=3$ while $\|v_\Pi^\star\|_\infty=0$ \\
Can two starts have the same eventual regime but different finite-horizon behavior? & Two deterministic paths into the same recurrent anchor & Same $g_\Pi^\star$, different $v_\Pi^\star$ and $J_H$ curves \\
Can average reward hide persistent phase structure? & Two cycles with equal Ces\`aro average & Same $\rho$, different $g_\Pi^\star$ \\
Does the refinement disappear when no periodic persistence exists? & Aperiodic reduction check & $\|g_\Pi^\star-\rho\|_\infty$ at numerical roundoff \\
Can the estimator learn the decomposition from samples? & Periodic two-class MRP with transient branching & Errors in $g_\Pi^\star$, $v_\Pi^\star$, and $J_{40}$ \\
Why not only estimate the classical gain? & Oracle avg-only comparator & Avg-only has no statistical error in $\rho$ but large diagnostic error \\
Why not estimate the full transition matrix? & Dense plug-in baseline & Plug-in is competitive; ours is compared honestly and not claimed to dominate uniformly \\
Which pipeline component causes residual error? & Structure, weight, projection ablation & Exact absorption weights make profile error numerical \\
How fragile is structural recovery? & Single-edge deletion check & $0/582$ deletions change structural invariants in this benchmark \\
\bottomrule
\end{tabular}
\end{table}

\subsection{Experiment design and metrics}

We evaluate finite Markov reward processes $(P,r)$, which are the policy-induced objects obtained after fixing a stationary policy in an MDP.  Rewards are deterministic and bounded, and a simulator query at state $s$ returns an independent sample from $P(s,\cdot)$.  The main metrics are
\[
E_g = \|\widehat g-g_\Pi^\star\|_\infty,\qquad
E_v = \|\widehat v-v_\Pi^\star\|_\infty,\qquad
E_J(H) = \|\widehat J_H-J_H\|_\infty,
\]
where
\[
\widehat J_H = \sum_{t=0}^{H-1}P^t\widehat g+\widehat v-P^H\widehat v.
\]
These metrics match Corollary~\ref{cor:return_error}: persistent-profile error accumulates over the horizon, while transient-component error enters through boundary terms.  We use $H=40$ in the sample-based benchmark.

We compare three estimators.  \textbf{Ours} learns the recurrent classes, cyclic phases, anchors, and phase-offset absorption weights, then runs projected stochastic approximation in the learned anchor gauge and reconstructs $\widehat g$ from anchor residuals.  \textbf{Avg-only} is an oracle classical-gain comparator: it is given the exact phase-averaged profile $\rho=P^\infty r$ and predicts returns using only $H\rho$.  This comparator is intentionally strong statistically, but it cannot represent non-invariant persistent phase behavior or transient cost.  \textbf{Plug-in} estimates the full transition matrix $\widehat P$ from samples, recovers the structure of $\widehat P$, and computes the same decomposition from the empirical model.  Plug-in is a natural and strong tabular baseline.  Our claim is not uniform numerical dominance over plug-in solvers on small tabular models; the claim is that the proposed estimator targets the persistent-transient decomposition directly and makes the sources of evaluation error interpretable.

\subsection{Exact diagnostic checks}

We first run four exact diagnostic checks, shown in Figure~\ref{fig:numexp_diagnostics} and summarized in Table~\ref{tab:numexp_diagnostics}.  These experiments use exact transition matrices and are meant to test whether the decomposition has the intended semantics.

For reproducibility, the four exact checks use the following constructions.  In panel~(a), $S=\{0,\ldots,23\}$, $P(k,k+1\bmod 24)=1$, and $r(k)=1\{k<12\}$.  Since the whole state space is recurrent and periodic, $\mathcal K(P)=\mathbb R^{24}$ and the anchor-gauge projection is the zero map.  In panel~(b), a single absorbing recurrent anchor $a$ has reward $r(a)=1$, while two deterministic transient chains of lengths $3$ and $14$ have reward zero and then enter $a$.  A state at distance $\ell$ from $a$ has the same persistent profile $g_\Pi^\star=1$ but transient component $v_\Pi^\star=-\ell$.  In panel~(c), two deterministic $4$-cycles have reward profiles $r_A=(1,1,0,0)$ and $r_B=(1,0,1,0)$.  Both have classical gain $1/2$, but their persistent profiles differ in phase.  In panel~(d), we compare an aperiodic chain, where Proposition~\ref{prop:aperiodic_reduction} implies $\mathcal K(P)=\ker(I-P)$ and therefore no non-invariant persistent residual remains, with a periodic chain whose phase profile differs from its invariant average.

\paragraph{Pure periodic chain.}
The first example is a deterministic $24$-cycle with reward one on the first half of the cycle and zero on the second half.  Since the process starts already inside its recurrent regime, there is no transient-to-regime path.  The proposed decomposition gives $v_\Pi^\star=0$ and $g_\Pi^\star=r$.  In contrast, the classical gain is the constant average $\rho=1/2$, and the classical normalized bias is nonzero because it must absorb the phase pattern discarded by $\rho$.  This confirms the motivation behind Proposition~\ref{prop:deterministic_cycle}: a large classical bias need not mean a large transient burden.

\paragraph{Same eventual regime, different transient burden.}
The second example compares two starts that eventually enter the same unit-reward regime but after different deterministic path lengths.  The persistent profile is identical, $g_\Pi^\star=1$, but the anchor-gauge transient costs differ: $v_\Pi^\star=-3$ versus $v_\Pi^\star=-14$.  The finite-horizon return curves differ accordingly.  This is the attribution behavior the paper is designed to recover: the eventual regime is good in both cases, but one path pays a much larger transient cost before reaching it.

\paragraph{Same average reward, different persistent profile.}
The third example compares two periodic profiles with the same Ces\`aro average reward.  The classical gain cannot distinguish them, so $\|\rho_A-\rho_B\|_\infty=0$.  The persistent profiles differ by $\|g_A^\star-g_B^\star\|_\infty=1$, however, because the phase-resolved regimes are different.  This verifies that average reward alone is too coarse when phase behavior persists forever.

\paragraph{Aperiodic reduction.}
The fourth example checks that the proposed refinement disappears when it should.  In an aperiodic chain, the non-invariant persistent part is numerically zero: $\|g_\Pi^\star-\rho\|_\infty=5.6\times 10^{-16}$.  In a periodic comparison chain the corresponding gap is $0.5$.  Thus the decomposition reduces to the classical picture in aperiodic models and refines it only when non-invariant peripheral behavior exists.

\begin{figure}[t]
    \centering
    \includegraphics[width=\textwidth]{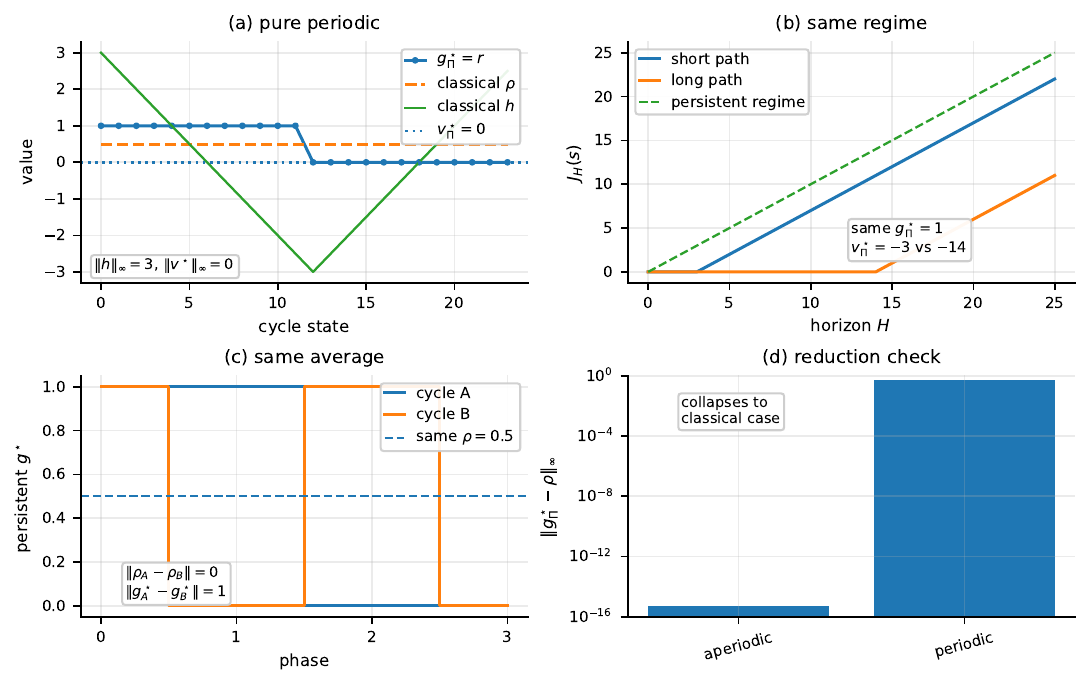}
    \caption{Exact diagnostic checks for the persistent-transient decomposition.  Panel~(a) shows that a deterministic periodic chain has nonzero classical bias even though the anchor-gauge transient component is zero.  Panel~(b) separates identical eventual regimes from different transient path costs.  Panel~(c) shows that equal average reward can hide different persistent phase profiles.  Panel~(d) verifies collapse to the classical picture in the aperiodic case.}
    \label{fig:numexp_diagnostics}
\end{figure}

\begin{table}[t]
\centering
\small
\caption{Exact diagnostic checks.  The numbers isolate the attribution issue rather than benchmarking statistical efficiency.}
\label{tab:numexp_diagnostics}
\begin{tabular}{@{}p{0.43\linewidth}p{0.50\linewidth}@{}}
\toprule
Diagnostic claim & Observed values \\
\midrule
Pure periodic: classical bias need not be transient & $\|h\|_\infty=3$, $\|v_\Pi^\star\|_\infty=0$ \\
Same regime, different path cost & $g_\Pi^\star=1$ for both starts, $v_\Pi^\star=-3$ versus $-14$ \\
Same average, different persistent profile & $\|\rho_A-\rho_B\|_\infty=0$, $\|g_A^\star-g_B^\star\|_\infty=1$ \\
Aperiodic reduction & $\|g_\Pi^\star-\rho\|_\infty=5.6\times 10^{-16}$ aperiodic versus $0.5$ periodic \\
\bottomrule
\end{tabular}
\end{table}

\subsection{Sample-based decomposition estimation}

We next test the learned estimator on a periodic two-class MRP with transient branching.  The benchmark has two recurrent classes with periods $d_1=2$ and $d_2=3$, phase sizes $m_1=10$ and $m_2=9$, and a transient line of length $L=35$, giving $n=82$ states and $N=d_1+d_2=5$ persistent coordinates.  The recurrent phase rewards are
\[
(0.05,0.95) \quad \text{and} \quad (0.10,0.55,0.95),
\]
while transient rewards are zero.  From transient state $t_j$, the chain either self-loops, moves forward along the transient line, or exits to one of the two recurrent classes.  We use self-loop parameter $\epsilon=0.20$, exit mass $\eta=0.08$, and a linearly varying class-one exit probability $q_j\in[0.15,0.85]$.  This construction creates both periodic phase structure and state-dependent absorption into recurrent classes.

For the learned estimator, support recovery uses $K=180$ transition samples per state.  Phase-offset absorption weights use $M=900$ absorption episodes per transient state.  Projected stochastic approximation runs for $2600$ synchronous iterations, drawing one transition sample per state at each iteration, with stepsize $\alpha_t=1.5(t+80)^{-0.72}$.  Anchor residuals are estimated with $J=100$ samples per anchor.  Results are averaged over five random seeds.  Figure~\ref{fig:numexp_sample_curves} plots the three errors as a function of stochastic-approximation transition samples; the support and absorption-weight budgets are fixed for the curve.  The shaded bands are approximate $95\%$ confidence intervals over seeds.

\begin{figure}[t]
    \centering
    \includegraphics[width=\textwidth]{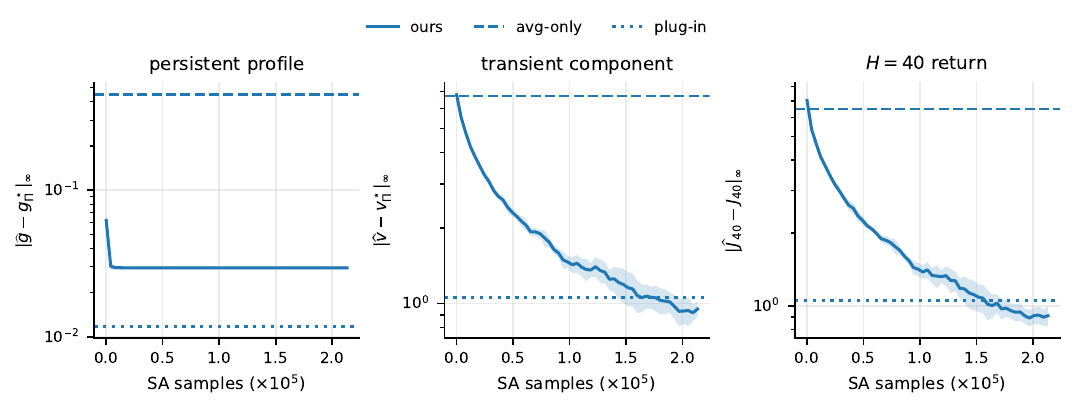}
    \caption{Sample-based estimation of the persistent-transient decomposition on a periodic multichain MRP.  The average-only comparator is an oracle phase-averaged gain profile and therefore has no statistical error in estimating $\rho$, but it discards phase-resolved persistence and transient cost.  The plug-in comparator estimates the full transition matrix and computes the same decomposition from the empirical model.  Ours directly estimates the decomposition and substantially improves over the average-only comparator on persistent profile, transient component, and $H=40$ return prediction.}
    \label{fig:numexp_sample_curves}
\end{figure}

\begin{table}[t]
\centering
\small
\caption{Final sample-based errors on the periodic multichain benchmark.  Entries are mean $\pm$ one standard deviation over five seeds.  The average-only comparator is an oracle classical-gain profile; the plug-in comparator estimates the full transition matrix before computing the same decomposition from the empirical model.}
\label{tab:numexp_sample_metrics}
\begin{tabular}{@{}p{0.47\linewidth}ccc@{}}
\toprule
Metric & Ours & Avg-only & Plug-in \\
\midrule
Persistent profile $\|\widehat g-g_\Pi^\star\|_\infty$ & $0.0295\pm0.0009$ & $0.45$ & $0.0118\pm0.0005$ \\
Transient component $\|\widehat v-v_\Pi^\star\|_\infty$ & $0.951\pm0.08$ & $6.73$ & $1.05\pm0.32$ \\
Return prediction $\|\widehat J_{40}-J_{40}\|_\infty$ & $0.911\pm0.10$ & $6.48$ & $1.05\pm0.32$ \\
\bottomrule
\end{tabular}
\end{table}

Table~\ref{tab:numexp_sample_metrics} gives the final errors.  Relative to the oracle average-only comparator, the learned quotient estimator reduces persistent-profile error by about $93\%$, transient-component error by about $86\%$, and $H=40$ return-prediction error by about $86\%$.  This gap is not caused by a weak statistical baseline: the average-only comparator is given the exact invariant gain.  Its failure is semantic.  It targets the wrong object for diagnostic evaluation because it removes phase-resolved persistent behavior and has no transient component.  The plug-in baseline is strongest on persistent-profile reconstruction in this small tabular instance, while ours is comparable or slightly better on transient and return errors.  This is consistent with the paper's positioning: plug-in is a strong model-based comparator, whereas the proposed method estimates the decomposition through its structural components and reports errors that correspond directly to the theory.

\subsection{Ablation and support-structure robustness}

The estimator has three conceptually separate stages: structural recovery, phase-offset absorption-weight estimation, and projected stochastic approximation.  Table~\ref{tab:numexp_ablation} ablates these stages on the same benchmark.  The ablation uses $M=500$ absorption episodes, $1200$ stochastic-approximation iterations, $J=80$ anchor-residual samples, and three random seeds.  The \emph{Full learned} row uses learned structure and learned absorption weights.  The \emph{True structure + MC weights} row gives the estimator the true recurrent classes and periods but still estimates absorption weights by Monte Carlo.  The \emph{Learned structure + exact weights} row uses learned graph structure but exact absorption weights.  The \emph{Oracle projection} row uses the exact anchor projection.

\begin{table}[t]
\centering
\small
\setlength{\tabcolsep}{3.5pt}
\caption{Component ablation on the sample-based benchmark.  Exact absorption weights make persistent-profile error essentially zero, indicating that the residual profile error is due to Monte Carlo phase-offset weight estimation rather than the quotient construction.  Entries are mean $\pm$ one standard deviation over three seeds.}
\label{tab:numexp_ablation}
\begin{tabular}{@{}p{0.34\linewidth}ccc@{}}
\toprule
Variant & $\|\widehat g-g_\Pi^\star\|_\infty$ & $\|\widehat v-v_\Pi^\star\|_\infty$ & $\|\widehat J_{40}-J_{40}\|_\infty$ \\
\midrule
Full learned & $0.0446\pm0.002$ & $1.45\pm0.13$ & $1.44\pm0.11$ \\
True structure + MC weights & $0.040\pm0.004$ & $1.48\pm0.15$ & $1.44\pm0.12$ \\
Learned structure + exact weights & $3.15\times10^{-11}\pm9.6\times10^{-12}$ & $1.47\pm0.11$ & $1.43\pm0.11$ \\
Oracle projection & $3.15\times10^{-11}\pm9.6\times10^{-12}$ & $1.47\pm0.11$ & $1.43\pm0.11$ \\
\bottomrule
\end{tabular}
\end{table}

The ablation identifies the dominant error source.  Replacing learned structure by true structure changes persistent-profile error only slightly, from $0.0446$ to $0.040$.  In contrast, replacing Monte Carlo absorption weights by exact weights drives persistent-profile error to numerical precision.  Thus the quotient construction and structural recovery are not the bottleneck on this benchmark; the remaining profile error is the expected finite-sample error in phase-offset absorption probabilities.  The transient and finite-horizon errors are similar across the exact-weight rows because, at this budget, they are dominated by stochastic approximation error in $\widehat v$.

We also tested a concrete support-structure perturbation.  In an exhaustive single-edge deletion check over the benchmark support graph, removing one true support edge changed the recovered structural invariants in $0/582$ cases.  Equivalently, the recurrent-class count, recurrent-set size, and periods were preserved for every single-edge deletion in this benchmark.  This should not be read as a universal robustness theorem: the theoretical guarantee still requires recovering the correct support graph, and rare edges can be information-theoretically hard to detect.  The result is instead a benchmark-level sanity check showing that the structural invariants used here are not fragile to every single missed edge.

\subsection{Storage proxy for the plug-in comparison}

A dense plug-in estimator stores an $n\times n$ transition matrix and then solves the decomposition from the learned model.  Our estimator instead stores the observed support and the learned anchor-basis weights needed by the quotient evaluator.  Table~\ref{tab:numexp_storage} reports a simple storage proxy for larger instances with the quotient dimension $N=5$.  The ratio is not a runtime theorem or a statistical optimality claim; it illustrates why the algorithm is structure-aware rather than full-model based.

\begin{table}[t]
\centering
\small
\caption{Sparse storage proxy for direct quotient estimation versus dense plug-in model storage.  Here $N=5$ is the number of persistent coordinates.  The dense plug-in proxy is $n^2$ entries, while the quotient proxy counts observed support plus $nN$ basis-weight entries.}
\label{tab:numexp_storage}
\begin{tabular}{@{}rrrr@{}}
\toprule
$n$ & $N$ & dense $P$ entries & dense / quotient proxy \\
\midrule
$290$ & $5$ & $84{,}100$ & $28.9\times$ \\
$1040$ & $5$ & $1{,}081{,}600$ & $112.0\times$ \\
$1540$ & $5$ & $2{,}371{,}600$ & $167.5\times$ \\
\bottomrule
\end{tabular}
\end{table}

The storage proxy complements the statistical comparison.  On small tabular problems, plug-in can be very accurate and should be treated as a strong baseline.  The proposed method is valuable for a different reason: it estimates the persistent coordinates, the transient representative, and the finite-horizon return diagnostic through the quotient structure, without making dense full-model estimation the central object.

\subsection{Implementation details and reproducibility}

All experiments use deterministic rewards and independent generative-model transition samples.  Exact diagnostic checks are deterministic and use exact transition matrices.  In the sample-based benchmark, each reported seed relearns the support structure, estimates the phase-offset absorption basis, runs projected stochastic approximation, and estimates anchor residuals.  Ground truth is computed only for evaluation by first constructing the exact support graph, computing the exact phase-offset basis, solving the gauge-fixed linear system for $v_\Pi^\star$, and then forming $g_\Pi^\star=r+Pv_\Pi^\star-v_\Pi^\star$.  The plug-in baseline estimates a dense empirical transition matrix from simulator samples and then applies the same exact decomposition routine to the empirical model.  Thus plug-in is given the natural full-model route, while the proposed method follows the quotient estimator analyzed in the paper.

The main numerical comparison should be read in this light.  The average-only comparator is deliberately favorable to classical average-reward evaluation because it is given the exact invariant gain; it still fails on $g_\Pi^\star$, $v_\Pi^\star$, and $J_{40}$ because those are not invariant-gain objects.  The dense plug-in comparator is deliberately strong on a small tabular problem and can have smaller persistent-profile error.  This is not a contradiction of the paper's claim.  The claim is that the peripheral quotient gives a diagnostic decomposition and a stable estimator for that decomposition, not that every structure-aware estimator must numerically dominate dense model estimation at every tabular sample budget.

\subsection{Takeaway}

The exact diagnostics validate the semantic claim: classical gain and bias can entangle persistent phase behavior with transient cost, while the proposed decomposition separates them.  The sample-based benchmark validates the statistical claim: the estimator can learn the two decomposition components and thereby control finite-horizon return prediction.  The plug-in comparison addresses the natural model-based alternative, and the ablations show that the quotient construction itself is not the source of the remaining profile error.  Overall, the experiments support the paper's main message: the contribution is not merely a way to compute returns, but a way to attribute them to persistent regime behavior and transient-to-regime cost.

\end{document}